\setlist{nolistsep, noitemsep, leftmargin=20pt, labelwidth=3pt}
\newcommand{\sipp}{\textsc{SiPP}\xspace}
\newcommand{\sippdet}{\textsc{SiPPDet}\xspace}
\newcommand{\sipprand}{\textsc{SiPPRand}\xspace}
\newcommand{\sipphybrid}{\textsc{SiPPHybrid}\xspace}
\newcommand{\sippsimple}{\textsc{SiPPSimple}\xspace}
\newcommand{\size}[1]{\nnz{#1}}
\newcommand{\nnz}[1]{\|{#1}\|_0}
\DeclarePairedDelimiterX{\dotp}[2]{\langle}{\rangle}{#1, #2}
\newcommand{\kmax}{C}
\newcommand{\logTerm}[1][\eta^*]{\log \left({#1} \, \rho / \delta \right)}
\newcommand{\SizeOfS}[1][]{K \logTerm[#1] }
\newcommand\BigO{\mathcal{O}}
\newcommand{\be}{\begin{equation}}
\newcommand{\ee}{\end{equation}}
\renewcommand{\AA}{{\mathcal{A}}}
\newcommand{\NN}{{\mathbb N}}
\newcommand{\norm}[1]{\left\| #1\right\|}                               %
\newcommand{\abs}[1]        {\left| #1 \right|}
\newcommand{\eps}{\ensuremath{\varepsilon}}                       
\renewcommand{\epsilon}{\varepsilon}
\newcommand{\REAL}{\ensuremath{\mathbb{R}}}                       
\newcommand{\RR}{{\REAL}}
\DeclareMathOperator*{\E}{\mathbb{E} \,}
\let\Pr\relax
\DeclareMathOperator*{\Pr}{\mathbb{P}}
\DeclareMathOperator*{\Var}{\text{Var}}
\DeclarePairedDelimiter{\ceil}{\lceil}{\rceil}
\declaretheorem[name=Theorem]{theorem}
\declaretheorem[name=Definition]{definition}
\declaretheorem[name=Assumption]{assumption}
\declaretheorem[name=Problem]{problem}
\declaretheorem[name=Corollary, numberlike=theorem]{corollary}
\declaretheorem[name=Lemma, numberlike=theorem]{lemma}
\newcommand{\neuron}{r}
\newcommand{\edge}{j}
\newcommand{\idx}{k}
\newcommand{\compressiblelayers}{[L]}
\newcommand{\etaDef}{\sum_{\ell = 1}^L \eta^\ell}
\newcommand{\epsilonLayer}[1][\ell]{\epsilon_{#1}}
\newcommand{\SampleComplexityDeltaLayers}[1][\epsilonLayer]{\ceil*{32  \, L^2 (\Delta_r^{\ell \rightarrow})^2 \, \kmax \log (8 \eta / \delta) \,  \epsilon^{-2} \, \sum_{\edge \in \Wpm} s_\edge }}
\newcommand{\epsilonNeuron}[1][\mNeuron]{\sqrt{\frac{\tilde S}{{#1}} \left(\frac{\tilde S}{{#1}} + 6\right)} + \frac{\tilde S}{{#1}}}
\newcommand{\param}{\theta}
\newcommand{\paramDef}{(\WW^1, \ldots, \WW^L)}
\newcommand{\paramHat}{\hat{\theta}}
\newcommand{\paramHatDef}{(\WWHat^1, \ldots, \WWHat^L)}
\newcommand{\f}{f_\param}
\newcommand{\fHat}{f_{\paramHat}}
\newcommand{\Input}{x}
\newcommand{\Point}{\Input}
\newcommand{\Wpm}{\II}
\newcommand{\II}{\mathcal{I}}
\newcommand{\gDefNoMax}[2][a]{ \frac{\WWRowCon_\edge \, {#1}_{\edge}(#2)}{\sum_{\idx \in \II_i^\ell } \WWRowCon_\idx \, {#1}_{\idx}(#2) }}
\newcommand{\gDef}[2][a]{\max_{{#1}(\cdot) \in \AA_i^\ell} \gDefNoMax[{#1}]{#2}}
\newcommand{\g}[1]{g_\edge(#1)}
\newcommand{\cdf}[1]{F_\edge \left(#1 \right)}
\newcommand{\sPM}[1][\edge]{s_{#1}}
\newcommand{\mNeuron}{N}
\newcommand{\STildeDef}[2][S]{\frac{{#1} C}{3} \log ({{#2} / \delta)}}
\newcommand{\T}[1][]{T_{#1}}
\newcommand{\WWRow}[1][\neuron]{w_{#1}}
\newcommand{\WWRowCon}[1][\neuron]{w}
\newcommand{\WWHatRowCon}[1][\neuron]{{\hat{w}}}
\newcommand{\WWHat}{\hat{\WW}}
\newcommand{\CC}{\mathcal C}
\newcommand{\DD}{{\mathcal D}}
\newcommand{\XX}{\mathcal{X}}
\newcommand{\YY}{\mathcal{Y}}
\newcommand{\WW}{W}
\renewcommand\SS{\mathcal{S}}
\newcommand\BB{\mathcal{B}}
\newcommand{\smallsubsub}[1]{\subsubsection*{#1}}
\author{Cenk Baykal%
\thanks{Computer Science and Artificial Intelligence Laboratory, Massachusetts Institute of Technology,
\newline \indent \indent
emails: \texttt{\{baykal, lucasl, igilitschenski, rus\}@mit.edu}} $^*$,
Lucas Liebenwein$^{\dagger*}$,
Igor Gilitschenski$^{\dagger}$,
Dan Feldman%
\thanks{Robotics and Big Data Laboratory, University of Haifa, email: \texttt{dannyf.post@gmail.com}} , 
Daniela Rus$^{\dagger}$%
{\thanks{These authors contributed equally to this work}}
}
\date{}
\begin{document}

\title{SiPPing Neural Networks: Sensitivity-informed Provable Pruning of Neural Networks}
\maketitle

\begin{abstract}
We introduce a family of pruning algorithms that provably sparsifies the parameters of a trained model in a way that approximately preserves the model's predictive accuracy. Our algorithms use a small batch of input points to construct a data-informed importance sampling distribution over the network's parameters, and either use a sampling-based or deterministic pruning procedure, or an adaptive mixture thereof, to discard redundant weights. Our pruning methods are simultaneously computationally efficient, provably accurate, and broadly applicable to various network architectures and data distributions. The presented approaches are simple to implement and can be easily integrated into standard prune-retrain pipelines. We present empirical comparisons showing that our algorithms reliably generate highly compressed networks that incur minimal loss in performance relative to that of the original network.
\end{abstract}

\section{Introduction}
\label{sec:introduction}

The deployment of large state-of-the-art neural networks to resource-constrained platforms, such as mobile phones and embedded devices, is often prohibitive in terms of both time and space. \emph{Network pruning} algorithms have the potential to reduce the memory footprint and inference time complexity of large neural network models in low-resource settings. The goal of network pruning is to discard redundant weights of an overparameterized network and generate a compressed model whose performance is competitive with that of the original network. 
Network pruning can also be used to reduce the burden of manually designing a small network by automatically inferring efficient architectures from larger networks. Moreover, pruning algorithm can enable novel insights into the theoretical and practical properties of neural networks, including overparameterization and generalization~\cite{arora2018stronger, liebenwein2020provable}.

Existing network pruning algorithms are predominantly based on data-oblivious~\cite{renda2020comparing, Han15} or data-informed~\cite{gamboa2020campfire, li2019learning, molchanov2019importance, yu2018nisp, lee2018snip} heuristics that work well in practice in combination with an appropriate pruning pipeline that incorporates retraining. However, existing approaches generally lack provable guarantees (including data-informed approaches with the exception of~\cite{blg2018} which is only applicable to multi-layer perceptrons) and thus provide little insight into the mechanics of the pruning algorithms and consequently into the pruned network.

\begin{figure*}[htb!]
  \centering
   \includegraphics[width=0.95\textwidth]{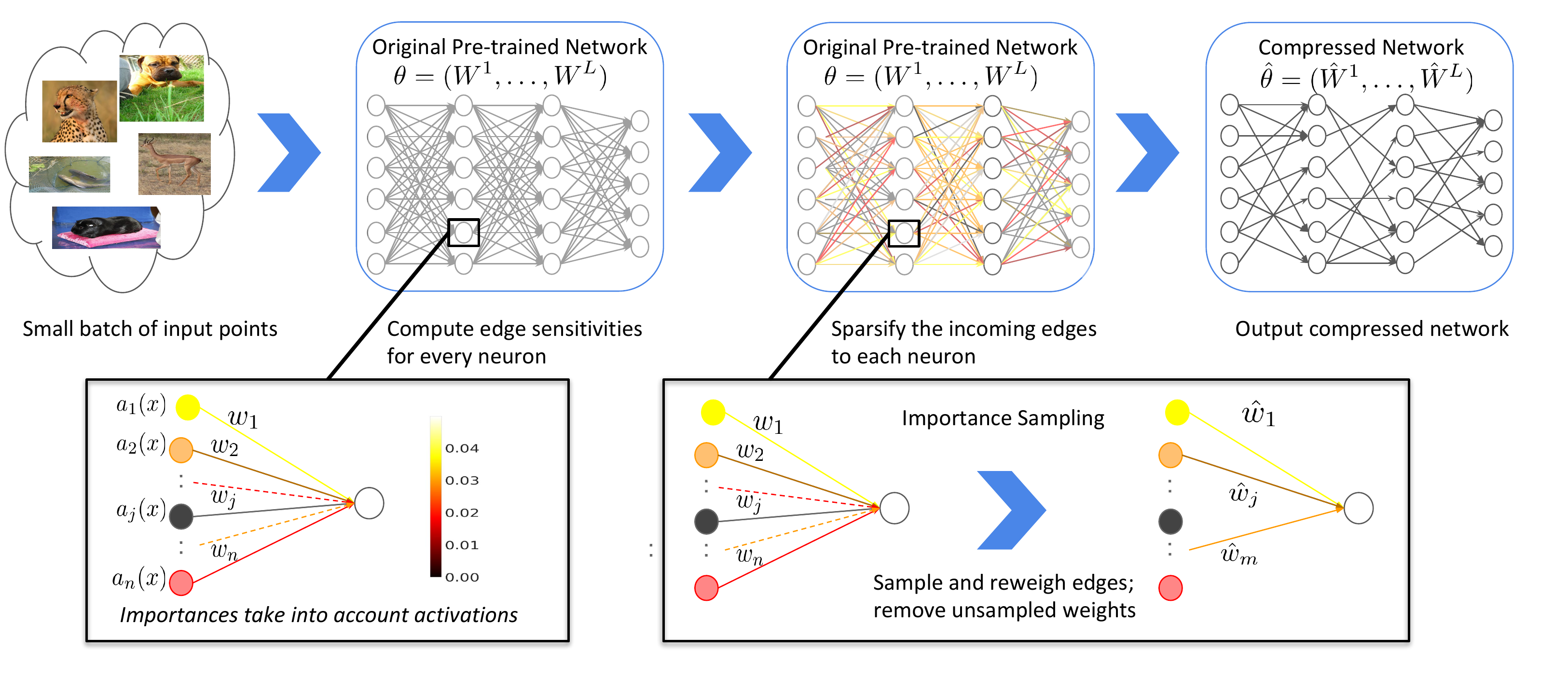}
  \caption{
  The overview of our randomized method consisting of 4 parts. We use a small batch of input points to quantify the relative contribution (importance) of each edge to the output of each neuron. We then construct an importance sampling distribution over the incoming edges and sample a small set of weights for each neuron. The unsampled parameters are then discarded to obtain the resulting compressed network with fewer edges.
  }
  \label{fig:method-overview}
  \vspace{-3ex}
\end{figure*}

We close this research gap by introducing \sipp, see Figure~\ref{fig:method-overview} for an overview, a family of network pruning algorithms that provably compresses the network's parameters in a data-informed manner. Building and improving on state-of-the-art pruning methods, our algorithm is simultaneously provably accurate, data-informed, and applicable to various architectures including fully-connected (FNNs), convolutional (CNNs), and recurrent neural networks (RNNs). 
Relative to existing approaches, \sipp exhibits provable guarantees that hold regardless of the specific state of the network, i.e., it is simultaneously applicable to untrained, trained, or partially trained networks, and hence tends to perform consistently well across diverse pruning pipelines that incorporate various amounts of retraining. 
In addition, the theoretical analysis of \sipp provides novel analytical compression bound for deep neural networks that, e.g., can be utilized in the context of generalization bounds~\cite{blg2018, arora2018stronger,allen2019learning,arora2019fine,neyshabur2018the,zhou2018non}.

This paper contributes the following:
\begin{enumerate}
    \item 
    A provable and versatile family of pruning algorithms, \sipp, that combines novel sample size allocation and adaptive sparsification procedures to prune network parameters.
    
    \item 
    An analysis of the resulting size and accuracy of the compressed network generated by \sipp that establishes novel compression bounds for a large class of neural networks.
    
    \item 
    Empirical evaluations for state-of-the-art iterative prune + retrain, random-init + prune + train scenarios on fully-connected and convolutional with comparisons to baseline pruning approaches highlighting the ability of $\sipp$ to span both theory and practice.
\end{enumerate}

\section{Related Work}
\label{sec:related-work}

\paragraph{Traditional approaches.}
Techniques such as Singular Value Decomposition (SVD) and regularized training~\cite{Denil2013, Denton14, jaderberg2014speeding, kim2015compression, tai2015convolutional, ioannou2015training, alvarez2017compression, yu2017compressing} were traditionally applied to compress networks. Other approaches in this realm exploit the structure of weight tensors to induce sparsity~\cite{Zhao17, sindhwani2015structured, cheng2015exploration, choromanska2016binary, wen2016learning}. 
Our work, in contrast, is a data-informed approach with guarantees on the size, relative error incurred at each output, and accuracy of the compressed network.

\paragraph{Network pruning.}
Weight pruning~\cite{lecun1990optimal} hinges on the idea that only a few dominant weights within a layer are required to approximately preserve the output. Approaches of this flavor were investigated by~\cite{lebedev2016fast,dong2017learning}, e.g., by embedding sparsity as a constraint~\cite{iandola2016squeezenet, aghasi2017net, lin2017runtime}. A popular weight-based pruning method is that of~\cite{Han15, renda2020comparing}, where weights with absolute values below a threshold are removed. A recent approach of~\cite{lee2018snip} prunes the parameters of the network by using a mini-batch of data points to approximate the influence of each parameter on the loss function of a randomly initialized network. Other data-informed techniques include~\cite{gamboa2020campfire, Lin2020Dynamic, li2019learning, molchanov2016pruning, molchanov2019importance, yu2018nisp, liebenwein2020provable}.
For an extensive overview see~\cite{gale2019state, blalock2020state, ye2018rethinking}.
Despite their favorable empirical performance, these approaches generally lack rigorous theoretical analysis of the effect that the discarded weights can have on the model's performance.

\paragraph{Theoretical foundations.} 
Recently, \cite{arora2018stronger} introduced a compression method based on random projections and proved norm-based bounds on the compressed network for points in the \emph{training set only}. In contrast, our work provides approximation guarantees on the network's output that hold even for points outside the training set. A coresets-based~\cite{feldman2011unified, braverman2016new} approach for compressing fully-connected networks was introduced by~\cite{blg2018} but is limited to FNNs and ReLUs. Our approach builds on this coresets-based framework to be applicable to various architectures and activation functions. Our algorithm also exhibits stronger error guarantees by mixing deterministic and sampling-based pruning strategies, by optimally allocating the sample sizes across the network to minimize the approximation error, and by establishing stronger network compression bounds using novel error propagation techniques.
\section{Background}
\label{sec:background}
We consider a neural network $f_\theta: \XX \to \YY$ consisting of $L$ layers with parameters $\theta$ and distribution $\DD$ over the input space $\XX$ from which we can sample i.i.d.\ input/label pairs $(x, y)$.

\paragraph{Network notation.}
For a given input $x \sim \DD$, we denote the pre-activation and activation of layer $\ell$ by $Z^\ell(x)$ and $A^\ell(x)$, respectively. Note that $f_\theta(x) = A^L(x)$,  $Z^0(x) = x$, and $A^\ell(x) = \phi^\ell(Z^\ell(x))$, where $\phi^\ell(\cdot)$ denotes the activation function.
We consider any multi-dimensional layer that can be described by a linear map with \emph{parameter sharing}, e.g. fully-connected layers, convolutional layers, or LSTM cells. Specifically, for a layer $\ell$ the pre-activation $Z^\ell(x)$ of layer $\ell$ is described by the linear mapping of the activation $A^{\ell - 1}(x)$ with $\WW^\ell$, i.e., 
$Z^\ell(x) = \WW^\ell * A^{\ell - 1}(x)$, where $*$ denotes the operator of the linear map, e.g., the convolutional operator. Moreover, we denote by $c^\ell$ the number of parameter groups within a layer that do not interact with each other, e.g., individual filters in convolutional layers. Then, let $Z_i^\ell(x) = \WW^\ell_i * A^{\ell - 1}(x)$, $i \in [c^\ell]$, denote the $i^\text{th}$ pre-activation channel of layer $\ell$ produced by parameter group $\WW^\ell_i$.

\paragraph{Problem definition.}
For given $\epsilon, \delta \in (0,1)$, our overarching goal is to use a pruning algorithm to generate a sparse reparameterization $\paramHat$ of $\theta$ such that $\size{\paramHat} \ll \size{\param}$ and for $\Point \sim \DD$ the $\ell_2$-norm of reference network output $\f(\Input)$ can be approximated by $\fHat(\Input)$ up to $1 \pm \eps$
multiplicative error with probability greater than $1- \delta$, i.e., $\Pr (\norm{\fHat (\Input) - \f(\Input)} \leq \eps \norm{\f(\Input)}) \ge 1 - \delta$. 

\section{Method}
\label{sec:method}

In this section, we present an overview for our family of pruning algorithms, \sipp: Sensitivity-informed Provable Pruning (see Figure~\ref{fig:method-overview} and Algorithm~\ref{alg:sipp}). In its core, \sipp proceeds as follows: (1) optimally allocate a given budget across layers and parameter groups to minimize the theoretical error bounds resulting from our analysis ($\textsc{OptAlloc}$, Line~\ref{lin:sipp-optalloc}); (2) compute the relative importance of individual weights within parameter groups ($\textsc{EmpiricalSensitivity}$, Line~\ref{lin:sipp-sensitivity}); (3) prune weights within each parameter group using the desired variant of \sipp according to their relative importance ($\textsc{Sparsify}$, Line~\ref{lin:sipp-sparsify}); (4) repeat (2) and (3) for each parameter group and each layer.

\paragraph{$\textsc{OptAlloc}(\theta, \BB, \SS)$}
In the course of our analysis (see Section~\ref{sec:analysis}) we establish relative error bounds for the approximation $\hat Z^\ell_i(x) = \WWHat_i^\ell * A^{\ell-1}(x)$ of the form 
$
\textstyle
\hat Z^\ell_i(x) \in \left( 1 \pm \epsilon_i^\ell(m_i^\ell) \right) Z^\ell_i(x)
$
for individual parameter groups.
Roughly speaking, the associated relative error $\epsilon_i^\ell(m_i^\ell)$ is a (convex) function of the parameter group, the input, and the allocated budget $m_i^\ell$.
Thus in order to optimally utilize a desired budget $\BB$ we aim to minimize the following objective during the allocation procedure:
$$
\begin{aligned}
    \textstyle
    &\min_{m_i^\ell \in \NN \; \forall i \in [c^\ell], \; \forall \ell \in [L]} 
    && \textstyle
    \sum_{\ell \in [L], \; i \in [c^\ell]} \epsilon_i^\ell(m_i^\ell)
    & \text{s. t.} 
    && \textstyle 
    \sum_{\ell \in [L], \; i \in [c^\ell]} m_i^\ell \leq \BB.
\end{aligned}
$$
We note that the integral constraint $m_i^\ell \in \NN$ prevents us from efficiently finding a solution, we relax it to $m_i^\ell \in \RR$ to find the optimal fractional solution. We then use a technique like randomized rounding~\cite{srinivasan1999improved} to find an approximately optimal integral solution. Depending on the variant of \sipp, however, this step is not necessary.

\begin{algorithm}[t!]
\small
\caption{\sipp$(\param, \BB, \SS)$}
\label{alg:sipp}
\textbf{Input:}
$\param = \paramDef$: weights of the uncompressed neural network; $\BB \in \NN$: sampling budget; $\SS$: a set of $n$ i.i.d. validation points drawn from $\DD$
\\
\textbf{Output:} sparse weights $\paramHat = \paramHatDef$
\begin{spacing}{1.2}
\begin{algorithmic}[1]
\small

\STATE $m_i^\ell \gets \textsc{OptAlloc}(\theta, \BB, \SS) \; \forall i \in [c^\ell],\; \forall \ell \in [L]$ \COMMENT{optimally allocate budget $\BB$ across parameter groups and layers} \label{lin:sipp-optalloc}

\FOR{$\ell \in \compressiblelayers$}

    \STATE $\WWHat^\ell \gets \textbf{0}$; \COMMENT{Initialize a null tensor}
    
    \FOR{$i \in [c^\ell]$}
    
        \STATE $s_j \gets \textsc{EmpiricalSensitivity}(\theta, \SS, i, \ell) \; \forall w_j \in \WW_i^\ell$ \COMMENT{Compute parameter importance for each weight $w_j$ in the parameter group} \label{lin:sipp-sensitivity}
    
        \STATE $\WWHat_i^\ell \gets \textsc{Sparsify}(\WW_i^\ell, m_i^\ell, \{s_j\}_j)$ \COMMENT{prune weights according to \sippdet, \sipprand, or \sipphybrid such that only $m_i^\ell$ weights remain in the parameter group} \label{lin:sipp-sparsify}
	
	\ENDFOR

\ENDFOR

\STATE \textbf{return} $\paramHat = \paramHatDef$;

\end{algorithmic}
\end{spacing}
\end{algorithm}

\paragraph{$\textsc{EmpiricalSensitivity}(\theta, \SS, i, \ell)$}
To estimate the relative importance of a weight $w_j$ within a parameter group $\WW_i^\ell$, we use and extend the notion of \emph{empirical sensitivity} (ES) as first introduced in~\cite{blg2018} for fully-connected layers only. In its essence, ES quantifies the maximum relative contribution of a weight parameter $w_j$ to the output (pre-activation) of the layer compared to other weights in the parameter group. More formally, let the ES $s_j$ of $w_j$ in parameter group $W_i^\ell$ be defined as 
\begin{equation}
\label{eq:sens-definition}
\textstyle
\sPM := \max_{\Point \in \SS}  \, \max_{a(\cdot) \in \AA} \nicefrac{\WWRow[\edge]  a_{\edge}(\Input)}{\sum_{\idx} \WWRow[\idx]  a_{\idx}(\Input)},
\end{equation}
where we assume $W_i^\ell \geq 0$, $A^{\ell-1}(x) \geq 0$ for ease of exposition (see Section~\ref{sec:analysis} for the generalization to all weights and activations).
We note that the definition of $s_j$ entails two maxima.
The maximum over data points $\max_{\Point \in \SS}$ ensures that ES approximates the relative importance of $w_j$ sufficiently well for any i.i.d.\ data point $x \sim \DD$.
The maximum over patches $\AA$, which are generated from $A^{\ell-1}(x)$, ensures that ES approximates the relative importance of $w_j$ sufficiently well for all scalars in the output $Z_i^\ell(x)$ that require $w_j$ (c.f. parameter sharing). 
To further contextualize the purpose of patches $\AA$, consider a single parameter group within a convolutional layer, i.e., a filter. The filter gets slid across the input image to generate the output image repeatedly applying the same weights. Thus in order to quantify the importance of some weight $w_j$ we need to consider its relative importance across all sliding windows, henceforth requiring $\max_{a(\cdot) \in \AA}$.

\paragraph{$\textsc{Sparsify}(\WW_i^\ell, m_i^\ell, \{s_j\}_j)$}
Equipped with a budget, c.f. $\textsc{OptAlloc}$, and a notion of parameter importance, c.f. $\textsc{EmpiricalSensitivity}$, we introduce the three variants of \sipp, all of which exhibit provable guarantees as outlined in Section~\ref{sec:analysis}, to prune weights from a parameter group:
\begin{enumerate}
    \item 
    \sippdet: we deterministically pick the $m_i^\ell$ weights with largest sensitivity and zero out the rest of the weights to construct $\WWHat_i^\ell$.
    \item
    \sipprand: we construct an importance sampling distribution over weights $w_j$ using their associated sensitivities $s_j$, then sample with replacement until we obtain a set of $m_i^\ell$ unique weights to construct $\WWHat_i^\ell$.
    \item 
    \sipphybrid: we evaluate the theoretical error guarantees (see Section~\ref{sec:analysis}) associated with the two other methods, and prune using the method that incurs the lower relative error. 
\end{enumerate}
We note that while \sippdet is particularly simple to implement, \sipphybrid provides the biggest amount of flexibility and consistently good prune results since it can adaptively choose for each parameter group whether to prune using \sippdet or \sipprand.

\section{Analysis}
\label{sec:analysis}
In this section, we outline the theoretical guarantees for \sipp.
The full proofs can be found in the supplementary material.
We start out by establishing the core lemmas that constitute the relative error guarantees for both \sippdet and \sipprand for the case where $W_i^\ell \geq 0$,  $A^{\ell-1}(x) \geq 0$ for ease of exposition. Specifically, we establish relative error guarantees for each individual output patch that is associated with a parameter group.
We then outline the steps that are required to generalize the analysis to all weights and activations.
Finally, we show -- by means of composing together the error guarantees from individual output patches, parameters groups, and layers -- how to derive the analytical compression bounds for the entire network.


\smallsubsub{Empirical sensitivity}
In the previous section we introduce the notion of ES, see equation~\ref{eq:sens-definition}, as a means to quantify the importance of weight $w_j$ relative to the other weights within a parameter group $\WW_i^\ell$. Using ES we establish a key inequality that upper bounds the contribution of $w_j a_j(x)$ to its associated output patch $z(x) = \sum_k w_k a_k(x)$ for any $x \sim \DD$ with high probability (w.h.p.) under mild regularity assumption on the input distribution to the layer.

\begin{lemma}[Informal ES inequality]
\label{lem:informal-es-inequality}
For weights $w_j$ from parameter group $\WW_i^\ell$ and an arbitrary input patch $a(\cdot)$ we have w.h.p.\ for any $x \sim \DD$ that 
$
\textstyle
w_j a_j(x) \leq C s_j z(x),
$
where $z(x)$ denotes the associated output patch and $C = \BigO(1)$.
\end{lemma}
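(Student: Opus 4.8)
The plan is to reduce the stated inequality to a one-dimensional tail bound on the relative contribution of a single weight, and then to control that tail by combining a quantile estimate of the empirical maximum with the regularity of the layer's input distribution.

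First I would use the nonnegativity assumptions $\WW_i^\ell \geq 0$ and $A^{\ell-1}(x) \geq 0$ to introduce, for each weight $w_j$ and input $x \sim \DD$, the scalar relative contribution
\[
g_j(x) \;=\; \frac{w_j \, a_j(x)}{z(x)} \;=\; \frac{w_j \, a_j(x)}{\sum_k w_k \, a_k(x)} \;\in\; [0,1],
\]
where the maximum over patches $a(\cdot) \in \AA$ is folded into $g_j$ exactly as in the definition~\eqref{eq:sens-definition} of $s_j$, so that $s_j = \max_{x \in \SS} g_j(x)$ is precisely the empirical maximum of $g_j$ over the sample. Since $z(x) > 0$, the target inequality $w_j a_j(x) \leq C\, s_j\, z(x)$ is equivalent to $g_j(x) \leq C\, s_j$, and it suffices to bound the probability of the complementary event $\Pr_{x \sim \DD}\!\left(g_j(x) > C\, s_j\right) \leq \delta'$ for a fresh draw, where $\delta'$ is the per-weight failure budget obtained after a union bound.

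Next I would decouple the data-dependent threshold $s_j$ from the fresh point by passing through a fixed quantile of $g_j$. Writing $F_j$ for the CDF of $g_j(x)$ and fixing a tail level $p$, let $v_p$ be the corresponding $(1-p)$-quantile. Because the $n = |\SS|$ points in $\SS$ are i.i.d., the empirical maximum undershoots this quantile only if every sample does, so
\[
\Pr\!\left(s_j < v_p\right) \;\leq\; (1-p)^n \;\leq\; e^{-p n},
\]
and taking $n = \BigO(\log(1/\delta')/p)$ drives this below $\delta'/2$. On the complementary event $s_j \geq v_p$ we have $C\, s_j \geq C\, v_p$, so the proof is reduced to the purely distributional claim $\Pr_x\!\left(g_j(x) > C\, v_p\right) \leq \delta'/2$: inflating a fixed high quantile by the constant factor $C$ must push the residual mass below $\delta'/2$.

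The heart of the argument --- and the step I expect to be the main obstacle --- is this last distributional bound, which is exactly where the \emph{mild regularity assumption} on the layer's input distribution is used. Passing to the logarithmic domain $Y_j = -\log g_j(x) \geq 0$, the event $g_j(x) > C\, v_p$ becomes the lower-tail event $Y_j < (-\log v_p) - \log C$, whose threshold sits a fixed additive gap $\log C$ below the quantile already captured by the sample. The regularity assumption controls the rate at which probability accumulates just below that quantile: modeling the centered $Y_j$ as sub-Gaussian with parameter governed by $\lambdaStar$ gives Gaussian-type quantile spacing, so the additive shift by $\log C$ converts into a multiplicative collapse of the tail from $p$ down to $\delta'/2$, while the required gap is absorbed into the constant $C = \BigO(1)$ together with the regularity factors of order $\sqrt{\lambdaStar}$. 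A union bound over the two high-probability events, and then over all weights and output patches, finishes the argument and yields the logarithmic sample size $n = \BigO(\log(1/\delta))$ used throughout. The points requiring care are: the data dependence of the threshold $s_j$, which is why I condition on the quantile event $\{s_j \geq v_p\}$ rather than on the value of $s_j$ itself; the extra maximum over patches $a(\cdot) \in \AA$, which is harmless for the upper bound but forces the regularity assumption to hold uniformly over patches; and checking that the constant $C$ produced is genuinely independent of the layer width so that the bound composes cleanly across layers.
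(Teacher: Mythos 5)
Your reduction to the scalar inequality $g_j(x) \leq C s_j$, and your decomposition into two events --- the empirical maximum $s_j$ undershooting a fixed quantile, and a fresh draw overshooting the inflated quantile --- match the skeleton of the paper's proof, and your bound $\Pr(s_j < v_p) \leq (1-p)^n$ is exactly the mechanism the paper uses. The gap is in the step you yourself flag as the heart of the argument: the claim that inflating a fixed $(1-p)$-quantile by a \emph{constant} factor $C$ pushes the residual mass from $p$ down to $\delta'/2$. Under your model ($Y_j = -\log g_j(x)$ sub-Gaussian with parameter $\sigma$), the event $g_j(x) > C v_p$ is the event that $Y_j$ falls an additive distance $\log C$ below its $p$-quantile, and generic sub-Gaussian concentration can only guarantee a bound of order $\exp\left(-(\log C)^2/(2\sigma^2)\right)$ for that event (the quantile may sit essentially at the mean when $p$ is a constant). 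Forcing this below $\delta'$ requires $\log C \gtrsim \sigma \sqrt{\log(1/\delta')}$, so $C$ grows with $1/\delta$ and is not $\BigO(1)$; the alternative of taking $p \approx \delta'$ so that the inflation is unnecessary makes your sample size $n = \BigO(\log(1/\delta')/p)$ polynomial rather than logarithmic in $1/\delta'$. Either way one of the two claims of the lemma ($C = \BigO(1)$ together with $|\SS| = \BigO(\log(1/\delta))$) is lost.

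The paper escapes this tension by never proving a fresh-point tail bound at all. Its regularity assumption is pinned at the specific point $\nicefrac{1}{C}$, namely $\cdf{\nicefrac{1}{C}} \leq \exp(-\nicefrac{1}{K})$, which gives $\Pr_\SS\left(s_j \leq \nicefrac{1}{C}\right) = \cdf{\nicefrac{1}{C}}^{|\SS|} \leq \exp(-|\SS|/K) \leq \delta/\rho$. On the complementary event $s_j > \nicefrac{1}{C}$ one has $C s_j > 1 \geq g_j(x)$ \emph{deterministically for every} $x \sim \DD$, because $g_j(x) \in [0,1]$ --- the boundedness you introduce but then do not exploit. In other words, the inflated quantile sits at the essential supremum of $g_j$, the residual mass above it is exactly zero, and the entire failure probability lives in the draw of $\SS$; a union bound over the at most $\rho$ weights in the parameter group finishes the proof. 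Your skeleton can be repaired by choosing the quantile level $p = 1 - \cdf{\nicefrac{1}{C}}$ (a constant under the paper's assumption), so that $v_p = \nicefrac{1}{C}$ and $C v_p = 1$: then your ``distributional claim'' becomes trivial instead of requiring sub-Gaussian quantile spacing, and this is precisely the paper's argument. (A smaller point: no union bound over patches is needed here, since the maximum over $a(\cdot) \in \AA_i^\ell$ is already folded into the definition of $g_j$, so $w_j a_j(x)/z(x) \leq g_j(x)$ holds for every patch simultaneously.)
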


The ES inequality is a key ingredient in bounding the error of \sippdet and \sipprand in terms of sensitivity. Specifically, Lemma~\ref{lem:informal-es-inequality} puts the individual contribution of a weight to the output patch in terms of its sensitivity and the output patch itself. The inequality hereby holds \emph{w.h.p.} for any data point $x \sim \DD$ which enables us to bound the quality of the approximation even for previously unseen data points.
We leverage Lemma~\ref{lem:informal-es-inequality} in the subsequent analysis
to quantify the approximation error of an output patch when the output patch was only approximately computed using a subset of weights, i.e., with the weights that remain after pruning.

\smallsubsub{Error guarantees for \sippdet}
Recall that \sippdet prunes weights by keeping the $m_i^\ell$ weights of parameter group $W_i^\ell$ with largest ES. Now let $\II$ denote the index set of all weights in $W_i^\ell$ and $\II_{det}$ the index set of weights with largest sensitivity that are kept after pruning such that $\abs{\II_{det}} = m_i^\ell$. We bound the incurred error of the approximation by considering the difference between the output patch and the approximated output patch, i.e., the difference between
$
\textstyle
z(x) = \sum_{j \in \II} w_j a_j(x) \text{ and } \hat z_{det} (x) = \sum_{j \in \II_{det}}  w_j a_j(x).
$

\begin{lemma}[Informal \sippdet error bound]
\label{lem:informal-sippdet}
For weights $w_j$ from parameter group $\WW_i^\ell$, an arbitrary associated input patch $a(\cdot) \in \AA$, and corresponding output patch $z(\cdot)$ \sippdet generates an index set $\II_{det}$ of pruned weights such that for any $x \sim \DD$ w.h.p.
$
\abs{\hat z_{det}(x) - z(x)} \leq \epsilon_{det} z(x),
$
where $\epsilon_{det} = C \sum_{j \in \II \setminus \II_{det}} s_j$.
\end{lemma}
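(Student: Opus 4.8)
The plan is to reduce the claim to a direct application of the empirical-sensitivity inequality (Lemma~\ref{lem:informal-es-inequality}) combined with a union bound over the discarded weights. First I would exploit the nonnegativity assumptions $\WW_i^\ell \ge 0$ and $A^{\ell-1}(x) \ge 0$. Since $\II_{det} \subseteq \II$ and every summand $w_j a_j(x)$ is nonnegative, we have $\hat z_{det}(x) = \sum_{j \in \II_{det}} w_j a_j(x) \le \sum_{j \in \II} w_j a_j(x) = z(x)$, so the absolute value collapses to an exact identity:
$$
\abs{\hat z_{det}(x) - z(x)} = z(x) - \hat z_{det}(x) = \sum_{j \in \II \setminus \II_{det}} w_j a_j(x).
$$
This identifies the approximation error with exactly the mass of the weights that \sippdet drops, reducing everything to controlling the dropped terms.

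Next I would bound each dropped term with Lemma~\ref{lem:informal-es-inequality}, which guarantees $w_j a_j(x) \le C s_j z(x)$ w.h.p.\ for any fixed $j$ and any $x \sim \DD$. Summing over $j \in \II \setminus \II_{det}$ and factoring out the common $z(x)$ yields
$$
\sum_{j \in \II \setminus \II_{det}} w_j a_j(x) \le C z(x) \sum_{j \in \II \setminus \II_{det}} s_j = \epsilon_{det} \, z(x),
$$
which is precisely the asserted bound with $\epsilon_{det} = C \sum_{j \in \II \setminus \II_{det}} s_j$ and the constant $C = \BigO(1)$ inherited unchanged from the ES inequality.

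The only genuine subtlety — and the step I expect to require the most care — is the high-probability bookkeeping. Lemma~\ref{lem:informal-es-inequality} holds w.h.p.\ for a \emph{single} weight, whereas the computation above invokes it simultaneously for every $j \in \II \setminus \II_{det}$. I would therefore apply the per-weight inequality at a failure level of order $\delta / \abs{\II}$ (the precise per-weight level flowing from the regularity assumption on the layer input and the sample $\SS$ used to form the $s_j$) and then union-bound over the at most $\abs{\II}$ discarded weights, keeping the overall failure probability at $\delta$ at the cost of only a logarithmic factor in $\abs{\SS}$. I would also emphasize that the derivation never uses that $\II_{det}$ is the top-$m_i^\ell$ sensitivity set: the bound holds verbatim for any retained index set, so \sippdet's greedy selection does not validate the inequality but merely minimizes $\sum_{j \in \II \setminus \II_{det}} s_j$, and hence $\epsilon_{det}$, for the prescribed budget.
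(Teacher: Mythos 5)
Your proposal is correct and takes essentially the same route as the paper's proof: nonnegativity collapses the error to exactly the mass of the dropped weights, each dropped term is bounded via the ES inequality, and the terms are summed to give $\epsilon_{det} \, z(x)$, with the simultaneity over weights handled by a union bound at per-weight failure level $\delta/\rho$. The only difference is organizational---the paper performs that union bound inside its formal ES inequality (Lemma~\ref{lem:es-inequality}) rather than inside the \sippdet proof itself, and it likewise remarks, as you do, that the bound holds for any retained index set, with the top-sensitivity choice merely minimizing $\epsilon_{det}$.
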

The proof of Lemma~\ref{lem:informal-sippdet} follows from the fact that the difference between the approximate output patch $\hat z_{det}(x)$ and the unpruned output patch $z(x)$ is exactly the sum over the contributions from weights that are \emph{not} in the pruned subset of weights $\II_{det}$. Using Lemma~\ref{lem:informal-es-inequality} we then bound the error in terms of the sensitivity of the \emph{pruned} weights.
Intuitively, ES of an individual weight precisely quantifies the relative error incurred when that weight is pruned. The resulting relative error can thus be described by the cumulative ES of pruned weights. 

\smallsubsub{Error guarantees for \sipprand}
Here we prune weights from a parameter group by constructing an importance sampling distribution from the associated ESs. Specifically, some weight $w_j$ is sampled with probability $q_j = \nicefrac{s_j}{\sum_{k \in \II} s_k}$ and we repeatedly sample with replacement until the corresponding set of sampled weights contains $m_i^\ell$ unique weights. Each sampled weight is then reweighed by the number of times it was sampled divided by the total number of samples and its sample probability to construct the approximate output patch, i.e.,
$$
\textstyle
\hat z_{rand} (x) = \sum_{j \in \II_{rand}} \hat w_j a_j(x) = \sum_{j \in \II_{rand}} \frac{n_j}{N q_j} w_j a_j(x),
$$
where $\II_{rand}$ denotes the index set of weights that were sampled at least once, $n_j$ denotes the number of times weight $w_j$ was sampled, and $N = \sum_{j \in \II_{rand}} n_j$ denotes the total number of samples. 
We then bound the incurred error by analyzing the random difference between the approximated output patch and the original output patch, i.e., $\abs{\hat z_{rand}(x) - z(x)}$,  establishing the following error guarantee.

\begin{lemma}[Informal \sipprand error bound]
\label{lem:informal-sipprand}
For weights $w_j$ from parameter group $\WW_i^\ell$, an arbitrary associated input patch $a(\cdot) \in \AA$, and corresponding output patch $z(\cdot)$ \sipprand generates a set of pruned weights such that for any $x \sim \DD$ w.h.p.
$
\abs{\hat z_{rand}(x) - z(x)} \leq \epsilon_{rand} z(x),
$
where $\epsilon_{rand} = \BigO(\sqrt{\nicefrac{S}{N}})$ and $S = \sum_{k \in \II} s_k$ denote the relative error and sum of ESs, respectively.
\end{lemma}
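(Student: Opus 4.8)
The plan is to recognize $\hat z_{rand}(x)$ as an unbiased importance-sampling estimator of $z(x)$ and to apply a Bernstein-type concentration inequality, where the variance bound supplied by the empirical-sensitivity inequality (Lemma~\ref{lem:informal-es-inequality}) is exactly what upgrades the rate from the $\BigO(\nicefrac{S}{\sqrt N})$ given by a crude range-only argument to the claimed $\BigO(\sqrt{\nicefrac{S}{N}})$. Throughout I would condition on a fixed $x$ together with the high-probability event $\EE$ on which Lemma~\ref{lem:informal-es-inequality} holds, so that the only remaining randomness is the sampling $\CC$; the final ``w.h.p.'' then follows from a union bound over the failure of $\EE$ and the failure of the concentration step.

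First I would write the estimator as a sample mean. Letting $C_1, \dots, C_N$ be the i.i.d.\ indices drawn from $q_j = \nicefrac{s_j}{S}$ (with $S = \sum_{k \in \II} s_k$) and setting $X_t = \nicefrac{w_{C_t} a_{C_t}(x)}{q_{C_t}}$, one has $\hat z_{rand}(x) = \frac{1}{N}\sum_{t=1}^N X_t$ because $\sum_t \1[C_t = j] = n_j$. Each draw is unbiased: $\E_\CC[X_t] = \sum_j q_j \cdot \nicefrac{w_j a_j(x)}{q_j} = \sum_j w_j a_j(x) = z(x)$.

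Next come the two estimates that feed Bernstein, both using Lemma~\ref{lem:informal-es-inequality} in the form $w_j a_j(x) \le C s_j z(x)$ together with $q_j = \nicefrac{s_j}{S}$. For the range, $X_t = \nicefrac{w_{C_t} a_{C_t}(x)}{q_{C_t}} \le C s_{C_t} z(x) \cdot \nicefrac{S}{s_{C_t}} = C S z(x)$, so $X_t \in [0, C S z(x)]$. For the variance, bounding a single factor of $w_j a_j(x)$ gives $\Var_\CC(X_t) \le \E_\CC[X_t^2] = \sum_j \nicefrac{(w_j a_j(x))^2}{q_j} \le C S z(x) \sum_j w_j a_j(x) = C S z(x)^2$. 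Feeding $M = C S z(x)$ and $\sigma^2 \le C S z(x)^2$ into Bernstein's inequality for $\sum_t (X_t - z(x))$ yields a tail of the form $2\exp(- N\epsilon^2 / (2 C S (1 + \epsilon/3)))$; setting this equal to $\delta$ and solving the resulting quadratic for $\epsilon$ gives $\epsilon_{rand} = \BigO(\sqrt{C S \log(\nicefrac{1}{\delta}) / N}) = \BigO(\sqrt{\nicefrac{S}{N}})$, matching the exact quadratic-root form of $\epsilon$ used in the formal statement. The decisive point is that choosing $q_j \propto s_j$ makes the variance scale like $S$ rather than $S^2$; this is precisely why importance sampling beats the range-only Hoeffding bound.

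The step I expect to be the main obstacle is the random sample count $N$: since we draw until $m_i^\ell$ distinct weights appear, $N$ is a stopping time correlated with the draws, so the clean i.i.d.\ Bernstein argument (and even conditional unbiasedness) is not immediate. I would resolve this by passing to the martingale $M_n = \sum_{t \le n} (X_t - z(x))$ with predictable quadratic variation $n \, \Var_\CC(X_1) \le n \, C S z(x)^2$ and invoking Freedman's inequality, the martingale analogue of Bernstein, which controls $\abs{M_N}$ at a stopping time with exactly the variance and range inputs computed above; alternatively, one conditions on $\{N = n\}$ and argues the tail holds uniformly, using its monotonicity in $n$ and the fact that $N \ge m_i^\ell$. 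Either route preserves the $\sqrt{\nicefrac{S}{N}}$ rate, and combining the resulting sampling-failure probability with the failure probability of $\EE$ completes the argument.
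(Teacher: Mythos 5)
Your proposal matches the paper's proof essentially step for step: the same unbiased importance-sampling decomposition, the same use of the ES inequality (Lemma~\ref{lem:es-inequality}) to bound both the per-draw range by $C S z(x)$ and the per-draw variance by $C S z(x)^2$, the same Bernstein application yielding a tail of the form $2\exp\left(-\frac{3\epsilon^2 N}{S C (6 + 2\epsilon)}\right)$, and the same final union bound over the ES-inequality failure and the concentration failure. Your anticipated obstacle concerning $N$ being a stopping time does not arise in the paper's formal treatment: there $N$ is fixed \emph{in advance} as the expected number of draws needed to obtain $m_i^\ell$ unique weights (Algorithm~\ref{alg:sparsify-weights}, Line~\ref{lin:unique-samples}), and Lemma~\ref{lem:sipprand} is stated for sampling exactly $N$ times with replacement, so the plain i.i.d.\ Bernstein argument applies directly without Freedman's inequality.
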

The proof proceeds in two steps. 
First, we show that the (random) approximation is an unbiased estimator of the original parameter group, i.e., $\E[\hat z_{rand}(x)] = z(x)$, which follows from the reweighing term of $\hat w_j$.
Second, we show that using Bernstein's concentration inequality~\cite{vershynin2016high} the sampling distribution exhibits strong subGaussian~\cite{vershynin2016high} concentration around the mean, i.e., the approximate output patch is $\epsilon$-close to the original, unpruned output patch w.h.p. Specifically, we leverage Lemma~\ref{lem:informal-es-inequality} to bound the variance of the approximate output patch using the cumulative ES $S = \sum_{k \in \II} s_k$ of the parameter group.

\smallsubsub{Discussion of error bounds and \sipphybrid}
Most notably, \sipprand is an unbiased estimator regardless of the budget, while \sippdet is always an underapproximation becoming increasingly worse in expectation with lower budget. 
On the other hand, if the parameter group is dominated by a few weights, \sippdet can directly captures these weights whereas \sipprand inherent randomness from the sampling procedure may introduce additional sources of failure.
Combining the strengths of both, we introduce \sipphybrid, which evaluates both theoretical error guarantees before pruning a parameter group to adaptively choose the better prune strategy.

\smallsubsub{Generalization to all weights}
Previously, we have assumed that both the parameter group and input activations are strictly non-negative, i.e., $\WW_i^\ell \geq 0$ and $A^{\ell-1}(x) \geq 0$. To handle the general case, we split the parameter group and input activations each into a positive and negative part representing the four quadrants such that each quadrant is now strictly non-negative. We can then incorporate each quadrant into our pruning procedure to ensure that the error guarantees hold simultaneously for all quadrants. To obtain error bounds for the actual pre-activation we introduce $\Delta^\ell$, which quantifies the ``sign complexity'' of the overall approximation for a particular layer to quantify the additional complexity from considering the alternating signs of each quadrant, see supplementary material for more details.

\smallsubsub{Network compression bounds}
In the previous section we have outlined how to obtain error guarantees for individual output patches. Naturally, since the guarantees hold for all patches within a parameter group and individual parameter groups within a layer are independent from each other, we can simultaneously establish norm-based error guarantees for the entire pre-activation of a layer, i.e., $\norm{\hat Z^\ell(x) - Z^\ell(x)} \leq \eps \norm{Z^\ell(x)}$ w.h.p.
Moreover, assuming the activation function is entry-wise and $1$-Lipschitz continuous, the same relative error guarantees hold
for the activation of layer. Note that any common activation function satisfies
the above assumption, including and all others listed in PyTorch’s documentation~\cite{pytorch2020nonlinear}.
Finally, we have to consider the effect of pruning \emph{multiple layers} simultaneously and the implications on the final output $f_{\theta}(x) = A^L(x)$ of the network. Informally speaking, we incur two sources of error from each layer. (1) the error associated from pruning within layers and (2) the error associated with propagated the incurred error throughout the network to the output layer.
We quantify the error within layers using our patch-wise guarantees and the sign complexity $\Delta^\ell$ of the layers. 
We quantify the propagated error across layers by upper bounding the layer condition number, $\kappa^\ell$ which quantifies the relative error incurred in the output for some relative error incurred within the layer. Intuitively, the concept of the layer condition number is closely related to the Lipschitz constant between some layer and the output of the network.
Below, we informally state the compression bound when pruning the entire network with \sipprand.

\begin{theorem}[Informal compression bound]
For given $\delta \in (0, 1)$ and budget $\BB$ \sipp (Algorithm~\ref{alg:sipp}) generates a set of pruned parameters $\paramHat$ such that $\size{\hat \theta} \leq \BB$,
$
\Pr_{\paramHat, \Point} \left(\norm{\fHat (\Input) - \f(\Input)} \leq \eps \norm{\f(\Input)}\right) \ge 1 - \delta
$
and
$
\eps = \BigO ( \sum_{\ell = 1}^L \kappa^\ell \Delta^\ell \max_{i \in [c^\ell]} (S_i^\ell - S_i^\ell(N_i^\ell) ) ),
$
where $S_i^\ell$ and $S_i^\ell (N_i^\ell)$ is the sum over all and the largest $N_i^\ell$ ESs, respectively, and $N_i^\ell$ is the budget allocated for parameter group $W_i^\ell$.
\end{theorem}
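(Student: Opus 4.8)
The plan is to establish the network-wide bound bottom-up, composing the per-patch guarantees of Lemmas~\ref{lem:informal-es-inequality}, \ref{lem:informal-sippdet}, and \ref{lem:informal-sipprand} over patches, then parameter groups, and finally layers. First I would fix a layer $\ell$ and parameter group $i$ and observe that the per-patch error bound holds for \emph{every} output patch associated with $\WW_i^\ell$ simultaneously: the inner maximum over patches $a(\cdot) \in \AA$ baked into the definition of empirical sensitivity is precisely what makes the scalar bound uniform across patches. Since the entries of the pre-activation channel $Z_i^\ell(x)$ are exactly these output patches, taking the $\ell_2$-norm over the channel lifts the scalar guarantee $\abs{\hat z(x) - z(x)} \le \epsilon\, z(x)$ to the vector form $\norm{\hat Z_i^\ell(x) - Z_i^\ell(x)} \le \epsilon\, \norm{Z_i^\ell(x)}$, with the relative error governed by $S_i^\ell - S_i^\ell(N_i^\ell)$ under \sippdet (and the analogous square-root quantity under \sipprand).

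Next I would handle the sign generalization. Because the clean lemmas assume $\WW_i^\ell \ge 0$ and $A^{\ell-1}(x) \ge 0$, I would decompose each parameter group and activation into their four sign quadrants, apply the non-negative guarantee to each, and recombine. The recombination does not preserve relative error for free, since cancellation between the positive and negative partial sums inflates the error; I would absorb exactly this inflation into the sign-complexity factor $\Delta^\ell$, defined as the ratio of the sum of absolute contributions to the absolute value of the true output. This makes the per-group relative error $\Delta^\ell (S_i^\ell - S_i^\ell(N_i^\ell))$. Since distinct parameter groups act on disjoint outputs and are pruned independently, I would aggregate over $i \in [c^\ell]$ via a union bound on the failure events and a maximum over groups on the error, obtaining a layer-level pre-activation error $\epsilon_\ell = \BigO(\Delta^\ell \max_{i \in [c^\ell]} (S_i^\ell - S_i^\ell(N_i^\ell)))$; by the entry-wise $1$-Lipschitz activation assumption the same bound transfers to $A^\ell$.

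The heart of the argument, and the step I expect to be the main obstacle, is propagating these per-layer errors to the output $\f(\Point) = A^L(x)$ under \emph{compounding}. When pruning layer $\ell$, the input fed to it is the already-perturbed $\hat A^{\ell-1}(x)$ rather than $A^{\ell-1}(x)$, so I cannot merely sum the isolated layer errors; I must track how a relative perturbation entering layer $\ell$ is transformed en route to the output. I would introduce the layer condition number $\kappa^\ell$ exactly to quantify this amplification, and prove by induction over layers that the accumulated relative error telescopes into $\sum_{\ell} \kappa^\ell \epsilon_\ell$. The delicate points are (i) verifying that the empirical-sensitivity guarantees, which are stated against the \emph{true} input activation, remain valid when evaluated against the perturbed input --- handled by conditioning on the good events of all earlier layers and using that the per-layer bounds hold uniformly for the relevant input --- and (ii) controlling the cross terms of the error recursion so that the first-order propagation dominates, which is where $\eps \in (0,1)$ and the interpretation of $\kappa^\ell$ as an output-Lipschitz-type constant keep the higher-order terms of lower order.

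Finally I would assemble the pieces. A single union bound over all $\sum_{\ell} c^\ell$ parameter groups --- with each per-group failure probability from the sampling and ES lemmas set to $\delta$ divided by this count --- delivers the global success probability $1-\delta$. The allocation step $\textsc{OptAlloc}$ guarantees $\sum_{\ell, i} N_i^\ell \le \BB$ and hence $\size{\paramHat} \le \BB$. Substituting $\epsilon_\ell$ into the telescoped sum then yields the claimed $\eps = \BigO(\sum_{\ell=1}^L \kappa^\ell \Delta^\ell \max_{i \in [c^\ell]} (S_i^\ell - S_i^\ell(N_i^\ell)))$, completing the proof.
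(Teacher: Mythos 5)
Your scaffolding---lifting the per-patch guarantee to a per-channel norm bound, splitting into sign quadrants and absorbing the recombination loss into $\Delta^\ell$, taking a maximum over parameter groups, introducing $\kappa^\ell$, and closing with a union bound plus the budget constraint from $\textsc{OptAlloc}$---is exactly the paper's route. However, the step you yourself flag as the heart of the argument, error propagation across layers, is where your proposal has a genuine gap. You propose to apply the per-layer pruning guarantee to the already-perturbed input $\hat A^{\ell}(x)$ and to justify this by ``conditioning on the good events of all earlier layers.'' That justification does not go through: the ES inequality, and every bound built on it, is a \emph{distributional} statement---it holds with high probability over $x \sim \DD$ precisely because the sensitivities $s_j$ were computed from i.i.d.\ samples of the same layer-input distribution. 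The perturbed activation $\hat A^{\ell}(x)$ is not a draw from that distribution, and conditioning on upstream success events does not make it one; the sensitivities carry no information about how the pruned weights act on inputs outside the distribution they were estimated on. As described, this step would fail.

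The paper's fix, which also dissolves your worry (ii) about cross terms, is the exact decomposition
\[
\hat W^{\ell+1} * \hat A^\ell(x) - W^{\ell+1} * A^\ell(x) \;=\; \hat W^{\ell+1} * \bigl(\hat A^\ell(x) - A^\ell(x)\bigr) \;+\; \bigl(\hat W^{\ell+1} - W^{\ell+1}\bigr) * A^\ell(x).
\]
The pruning-error term is then always evaluated at the \emph{true} activation $A^\ell(x)$, where the layer bound (Lemma~\ref{lem:sippdet-layer}) applies exactly as stated, and the propagated-error term is controlled by $\norm{\hat W^{\ell+1}}_{F} \leq \norm{W^{\ell+1}}_{F}$ (for \sippdet the pruned tensor is entrywise a subset of the original). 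This yields a purely linear recursion, $e_{\ell+1} \leq \norm{W^{\ell+1}}_F \, e_\ell + \eps^{\ell+1}\Delta^{\ell+1}\norm{Z^{\ell+1}(x)}$, which telescopes into $\sum_{k} \bigl(\prod_{k'>k}\norm{W^{k'}}_F\bigr)\eps^k \Delta^k \norm{Z^k(x)}$ with no higher-order or cross terms to control and no role for $\eps \in (0,1)$ in the propagation; $\kappa^\ell$ then absorbs the Frobenius product together with the ratio $\norm{Z^\ell(x)}/\norm{A^L(x)}$. A secondary omission: you treat $\Delta^\ell$ and $\kappa^\ell$ as deterministic absorbers, but both are empirical maxima over the sample $\SS$, so extending them to a fresh $x \sim \DD$ requires its own high-probability argument (the paper reuses the regularity argument behind the ES inequality and budgets these two failure events into the final union bound).
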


We note that the compression bound is proportional to the sum of cumulative ESs for each parameter group, a term which arises in numerous applications of coresets~\cite{feldman2011unified}.
Moreover, we see the layer condition number $\kappa^\ell$ and sign complexity $\Delta^\ell$ of each layer appear in the final bound. Both terms are related to how injecting error simultaneously in each layer (by pruning the network) affects the overall output of the network and are related to concepts such as the Lipschitz constant of the network and/or interlayer cushion as introduced in related work that establishes generalization bounds for neural networks~\cite{arora2018stronger,neyshabur2018a}. Like other recent work in the field~\cite{arora2018stronger, Suzuki2020Compression} our work highlights the intrinsic connection between the compression ability and generalization ability of neural networks.

\section{Experiments}
\label{sec:results}

In this section, we evaluate and compare the performance of our algorithm, \sipp, on pruning fully-connected, convolutional, and residual networks. We embed our pruning algorithm into pruning pipelines including retraining to empirically test its performance and test it for scenarios involving significant amounts of (re)-training as well as a prune  pipeline that utilizes no more training epochs than regular training. To be able to compare our pruning approach \sipp to competing pruning approaches, we consider standard retraining pipelines that are network-agnostic and yield state-of-the-art pruning results~\cite{lee2018snip, renda2020comparing}.
Specifically, we consider two scenarios -- iterative prune + retrain and random-init + prune + train -- as described below.

\subsection{Experimental Setup}

\paragraph{Architectures and data sets.} 
We train and prune networks on CIFAR10~\cite{torralba200880} and ImageNet~\cite{ILSVRC15}. 
We consider ResNets20/56/110~\cite{he2016deep}, WideResnet16-8~\cite{zagoruyko2016wide}, Densenet22~\cite{huang2017densely}, VGG16~\cite{Simonyan14}, CNN5~\cite{Nakkiran2020Deep} and ResNet18, ResNet101~\cite{he2016deep} for CIFAR10 and ImageNet, respectively.

\paragraph{Training.}
For both training and retraining we deploy the standard sets of hyperparameters as described in the respective papers. All hyperparameters are listed in the supplementary material.

\paragraph{Pruning algorithms.}
We consider the following pruning algorithms to be incorporated into the pruning pipelines discussed above:

\begin{itemize}

\item 
\sippdet.
We prune the entire network deterministically. Note that in this case (due to the sample size allocation procedure) \sippdet corresponds to global thresholding of sensitivity (reminiscent of weight thresholding).

\item
\sipprand.
We prune the entire network using importance sampling.

\item
\sipphybrid.
We use our combined pruning approach as outlined in Algorithm~\ref{alg:sipp}.

\item
\textsc{WT}.
We globally prune weights according to their magnitude~\cite{Han15, renda2020comparing}.

\item
\textsc{Snip}.
We globally prune weights according to the (data-informed) magnitude of the product between weight and gradient~\cite{lee2018snip}.

\end{itemize}

We note that \textsc{WT} (``learning rate rewinding'') is the current state-of-the-art for iterative prune+retrain pipelines~\cite{renda2020comparing} while \textsc{Snip} is the current state-of-the-art for random-init + prune + train~\cite{lee2018snip}. We also report comparisons against a broader set of pruning pipelines in the supplementary material.

\subsection{Iterative prune + retrain}

\begin{wrapfigure}{r}{0.5\textwidth}
  \centering
  \includegraphics[width=0.5\textwidth]{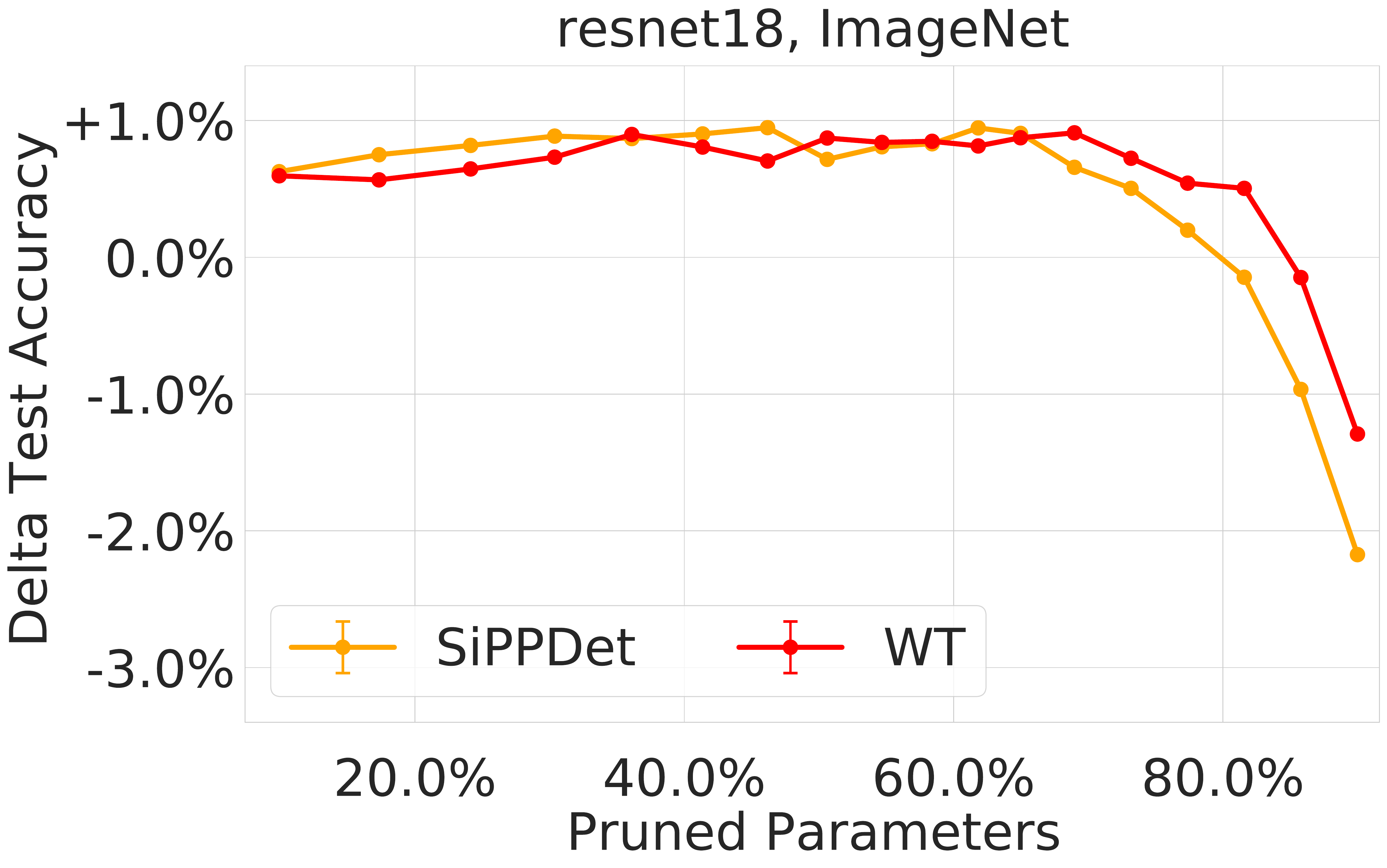}
  \caption{Results on the pruning performance of a ResNet18 trained on ImageNet using \textbf{iterative prune+retrain}.}
  \label{fig:resnet18_imagenet}
\end{wrapfigure}

\paragraph{Methodology}
We deploy an iterative prune + retrain scheme that proceeds as follows:
\begin{enumerate}
    \item 
    train network to completion;
    \item
    prune a fixed ratio of parameters from the network;
    \item
    retrain using the same hyperparameters as during training;
    \item
    iteratively repeat steps 2., 3. to obtain smaller prune ratios.
\end{enumerate}
This procedure as used in~\cite{renda2020comparing, liebenwein2020provable} is shown to produce state-of-the-art prune results although it requires significant amount of retraining resources. We choose it for its simplicity and network-agnostic hyperparameters. Due to the expensive nature of iterative prune+retrain we choose to only evaluate it for \sipp but not the other variants of our algorithm as it is the simplest and we observed little difference in performance between the three variations. In the supplementary material, we provide additional (experimental) justification that supports our claim.

\paragraph{Results.}
Figure~\ref{fig:cifar_prune_results} summarizes the results of the iterative prune + retrain procedure for various CIFAR10 networks. The results were averaged across 3 trained networks. Our empirical evaluation shows that our algorithm consistently performs comparably to state-of-the-art \textsc{WT} with learning rate rewinding~\cite{renda2020comparing}. We note that \textsc{Snip}'s performance is much lower is these scenarios. We suspect this is due to the gradients being close to zero for a fully-trained network (the pruning step is performed after training in this scenario). 
In Figure~\ref{fig:resnet18_imagenet} we show results for a ResNet18 trained, pruned, and retrained on ImageNet. As in the case of CIFAR10 networks we observe that \sipp performs en par with \textsc{WT}.

\begin{figure}[b!]
\centering
\begin{minipage}[t]{0.33\textwidth}
    \includegraphics[width=\textwidth]{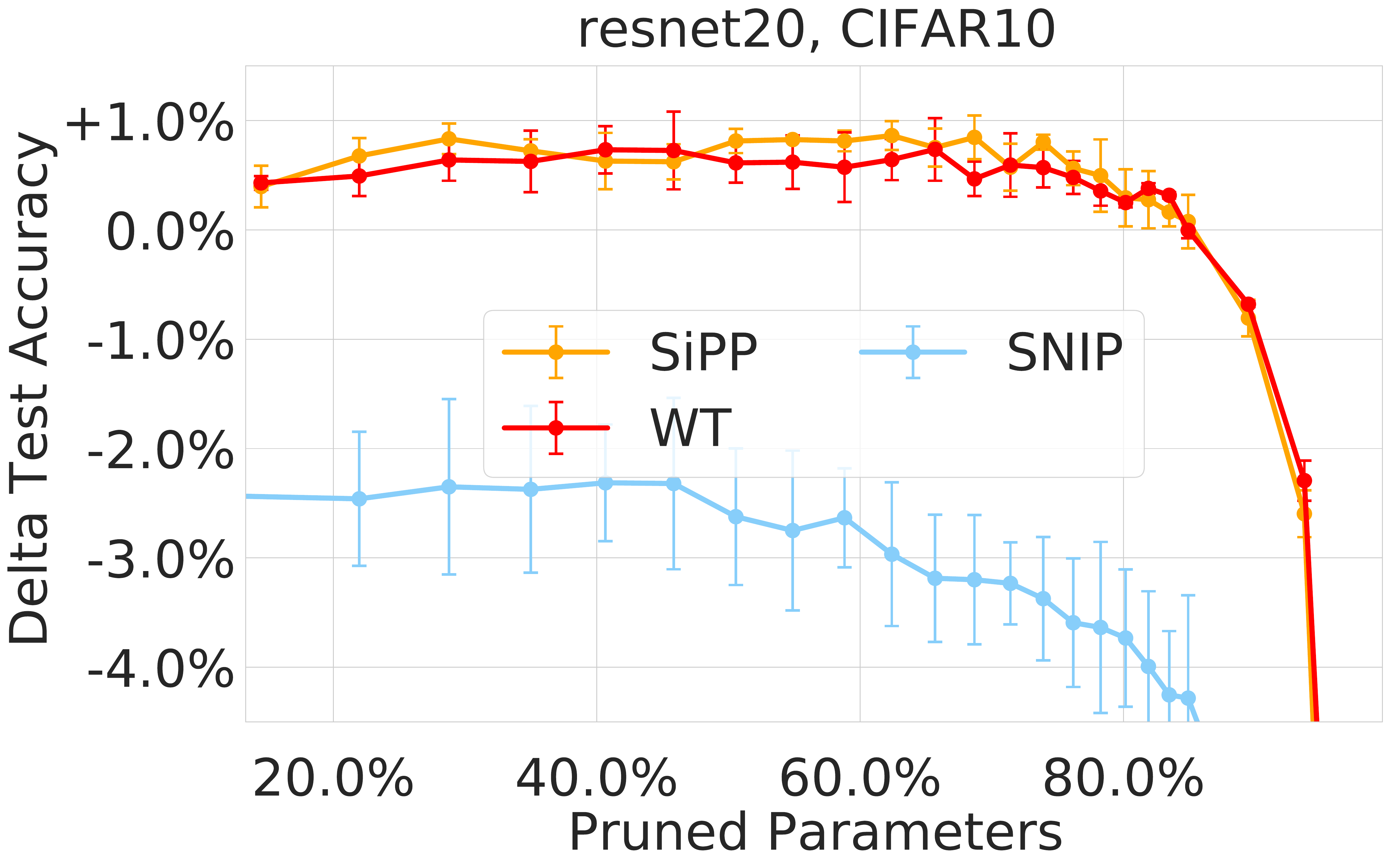}
\end{minipage}%
\begin{minipage}[t]{0.33\textwidth}
    \includegraphics[width=\textwidth]{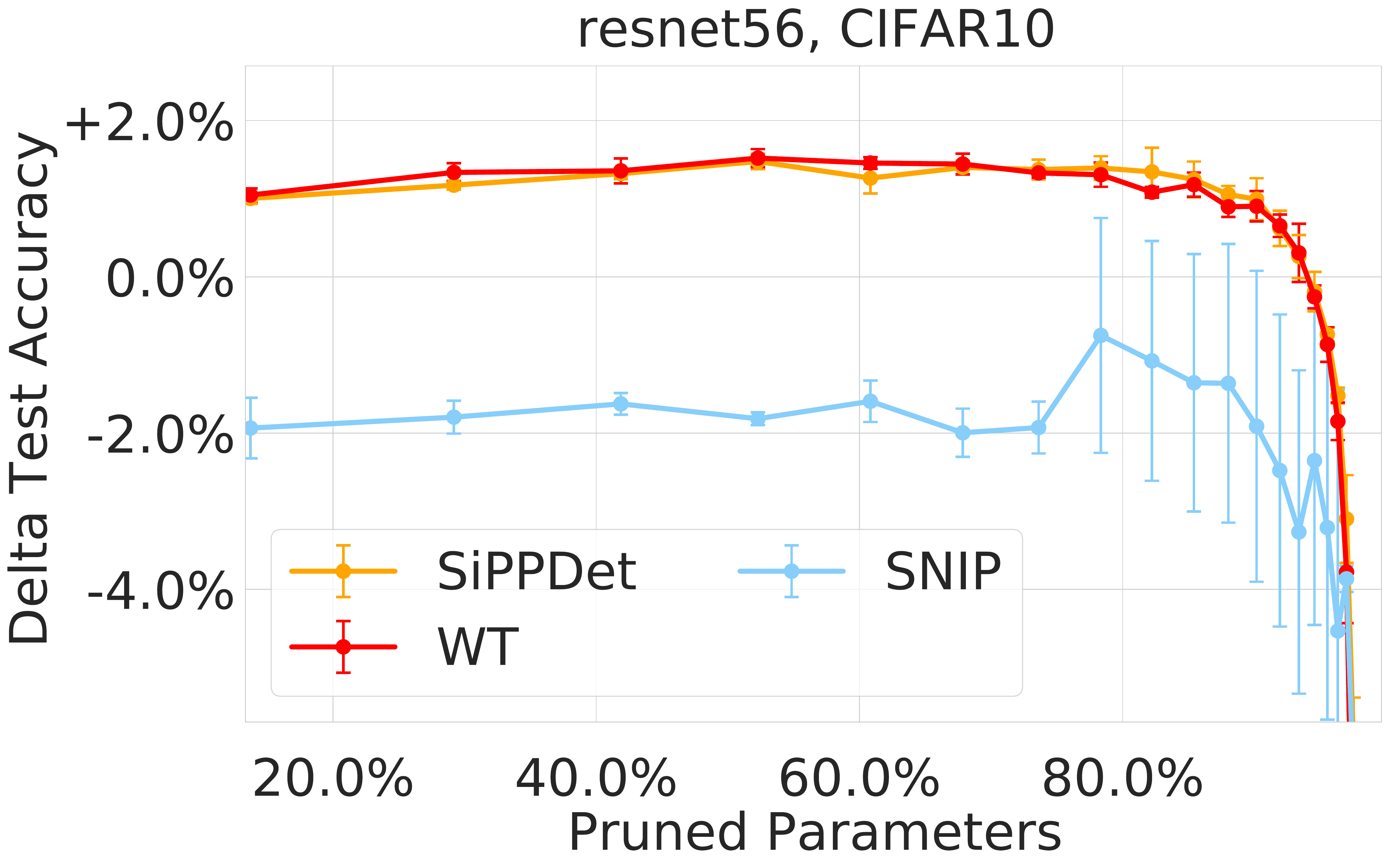}
\end{minipage}%
\begin{minipage}[t]{0.33\textwidth}
    \includegraphics[width=\textwidth]{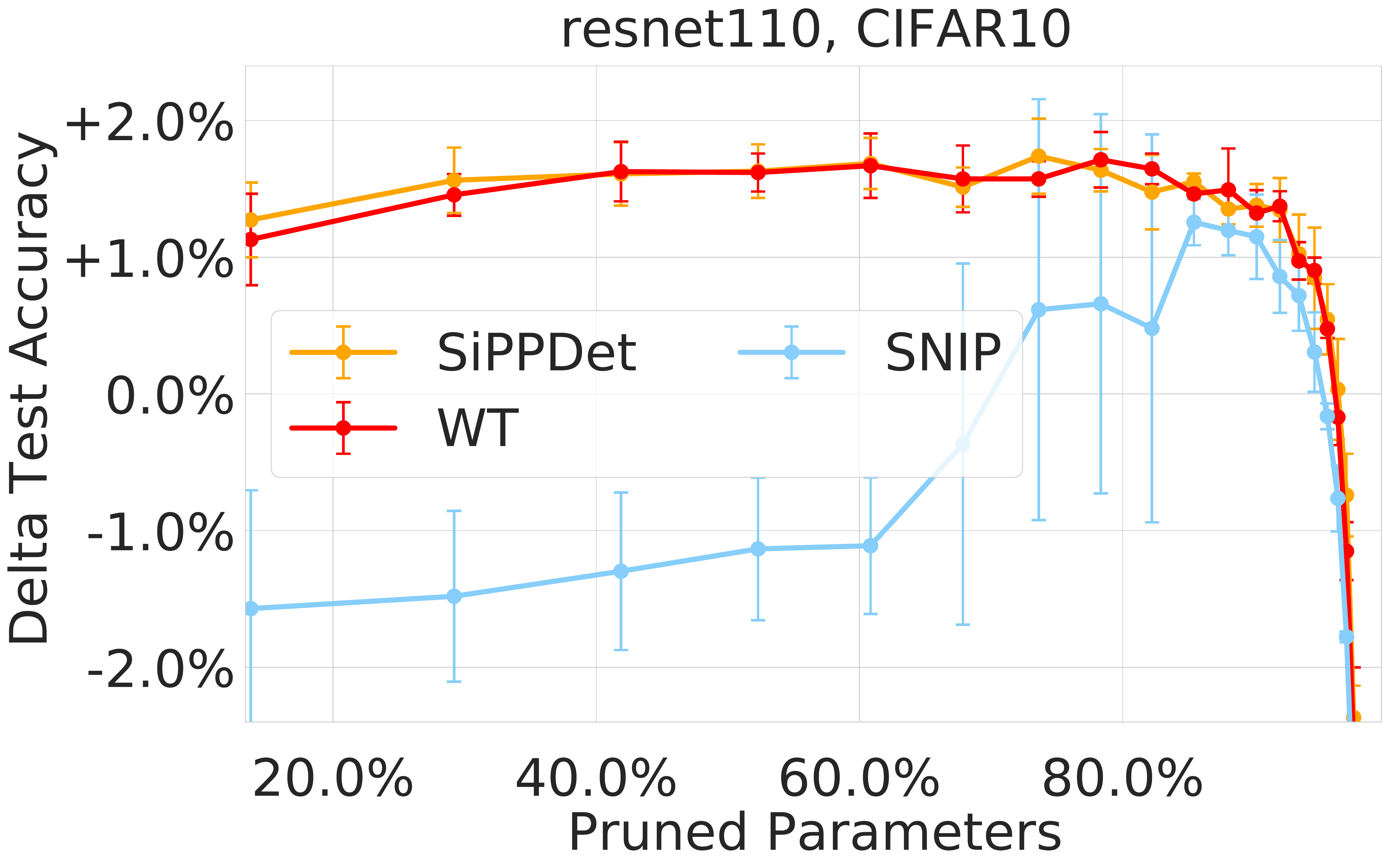}
\end{minipage}%
\vspace{2ex}
\begin{minipage}[t]{0.33\textwidth}
    \includegraphics[width=\textwidth]{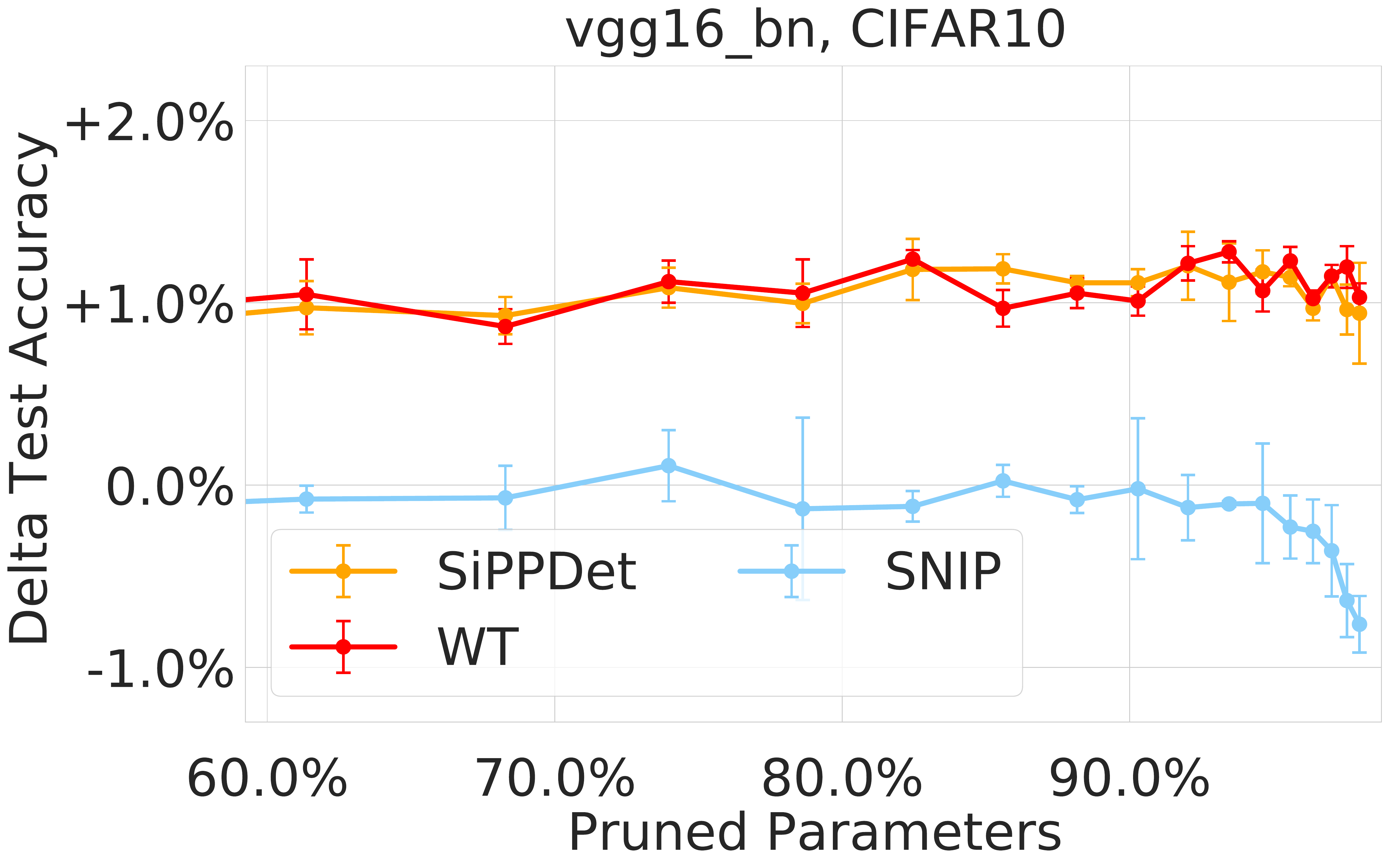}
\end{minipage}%
\begin{minipage}[t]{0.33\textwidth}
    \includegraphics[width=\textwidth]{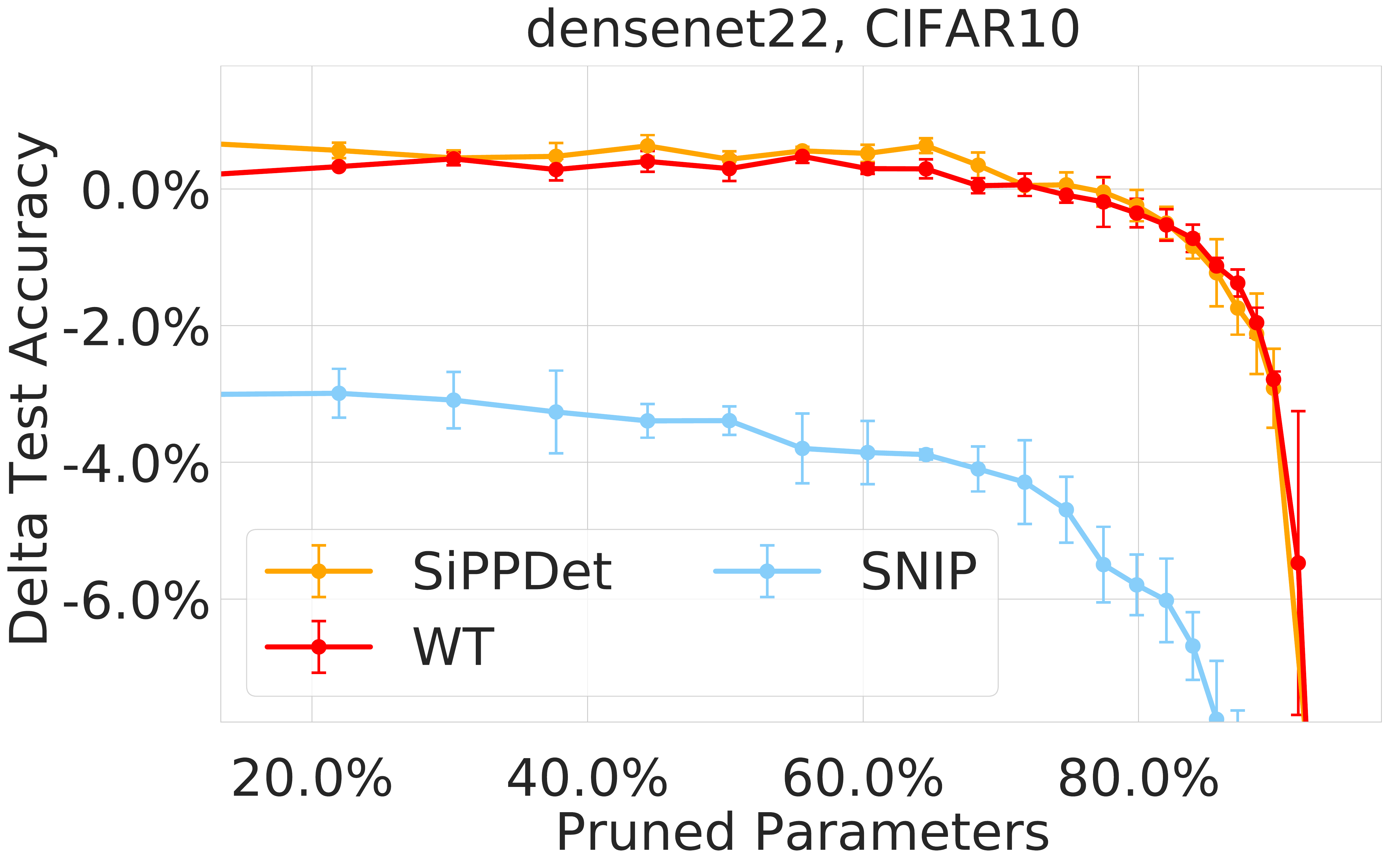}
\end{minipage}%
\begin{minipage}[t]{0.33\textwidth}
    \includegraphics[width=\textwidth]{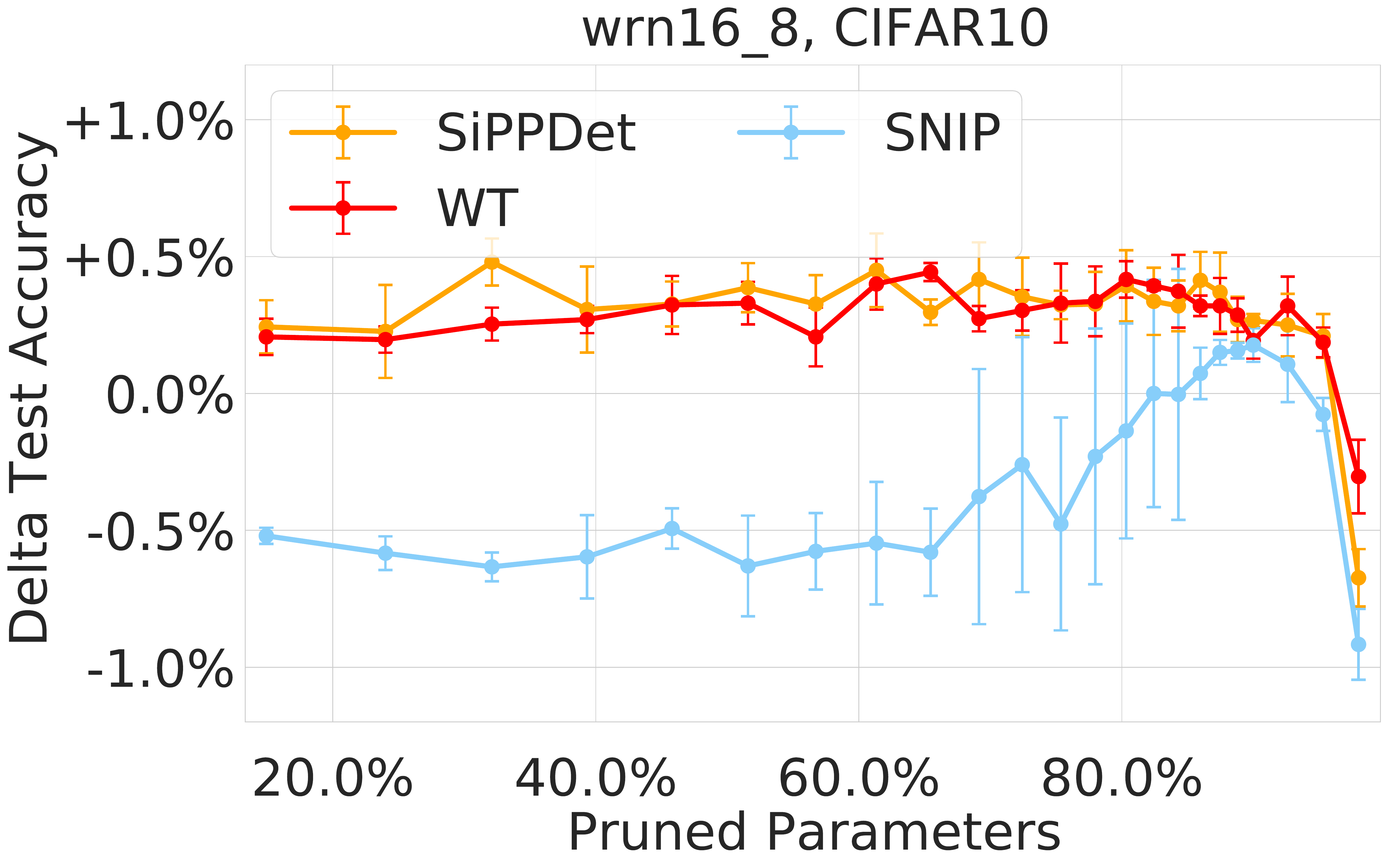}
\end{minipage}%
 
\caption{The delta in test accuracy to the uncompressed network for the generated pruned models trained on CIFAR10 for various target prune ratios. The networks were pruned using the \textbf{iterative prune+retrain} pipeline.}
\label{fig:cifar_prune_results}
\end{figure}

\subsection{Random-init + prune + train}

\paragraph{Methodology.}
On the other "extreme" of possible pruning pipeline, we consider the following scenario  as described in~\cite{lee2018snip}:
\begin{enumerate}
    \item 
    randomly initialize the network;
    \item
    prune the network to the desired prune ratio;
    \item
    train the network using the regular hyperparameters.
\end{enumerate}
While (due to the limited amount of training) this pipeline does not achieve as high prune ratios as the above scenario, it is simple and requires much less training epochs overall. It also serves as a useful experimental platform to understand if pruning methods are able to unearth important connections inherent in the network.

\paragraph{Results.}
In Figure~\ref{fig:cifar_prune_rand_results} the prune results for various CIFAR10 networks are shown. We note that for low prune ratios all pruning methods perform uniformly well, which most likely can be attributed to the overall overparameterization of the tested networks. For higher prune ratios, we observe vastly different performance. Specifically, \textsc{WT}'s performance drops to 10\% test accuracy (uniformly at random for CIFAR10) for prune ratios beyond 90\%. We suspect that weights do not contain sufficient information about the importance of the connection before training and thus \textsc{WT} fails. On the other hand, \textsc{Snip} performs consistently well due to the consideration of data and the gradients of weights. We note that \sipphybrid specifically, which adaptively mixes \sipp and \sipprand according to the theoretical bounds, performs well across all tested networks and achieves the same prune performance as \textsc{Snip}. For deeper networks (ResNet20 and ResNet56) in particular, we observe all \sipp variations performing well or even outperforming \textsc{Snip}.

\begin{figure}[t!]
\centering
\begin{minipage}[t]{0.33\textwidth}
    \includegraphics[width=\textwidth]{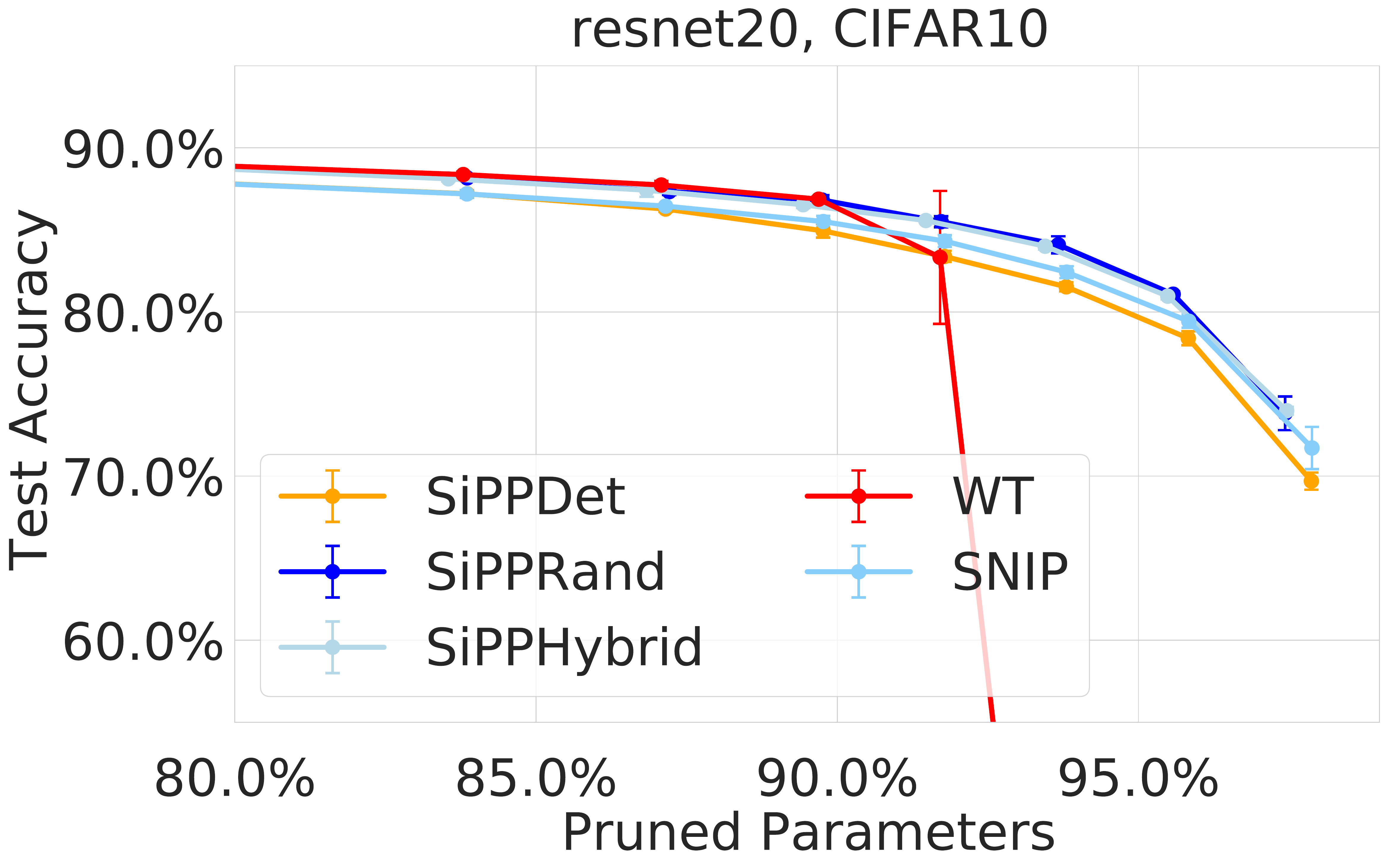}
\end{minipage}%
\begin{minipage}[t]{0.33\textwidth}
    \includegraphics[width=\textwidth]{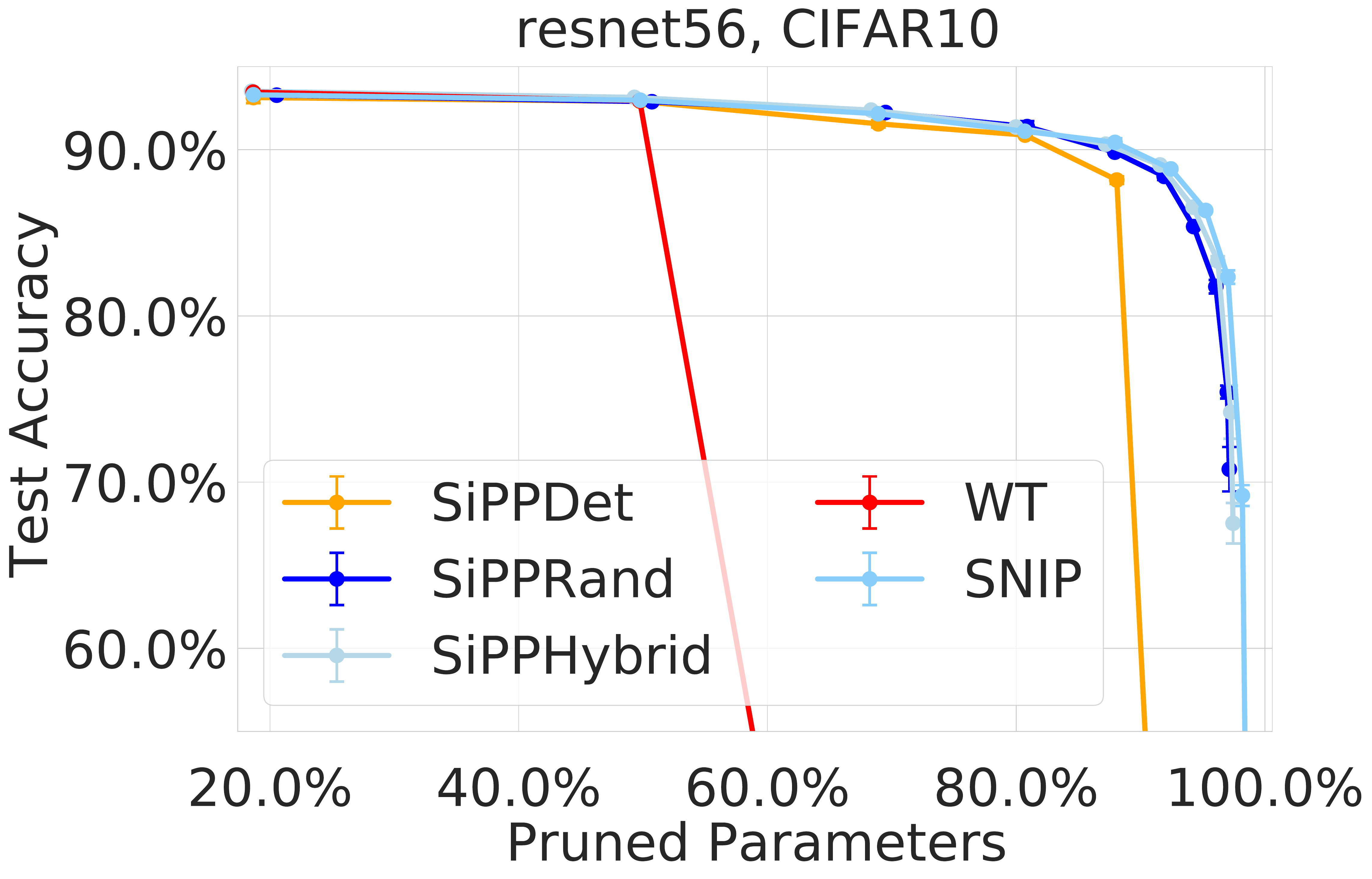}
\end{minipage}%
\begin{minipage}[t]{0.33\textwidth}
    \includegraphics[width=\textwidth]{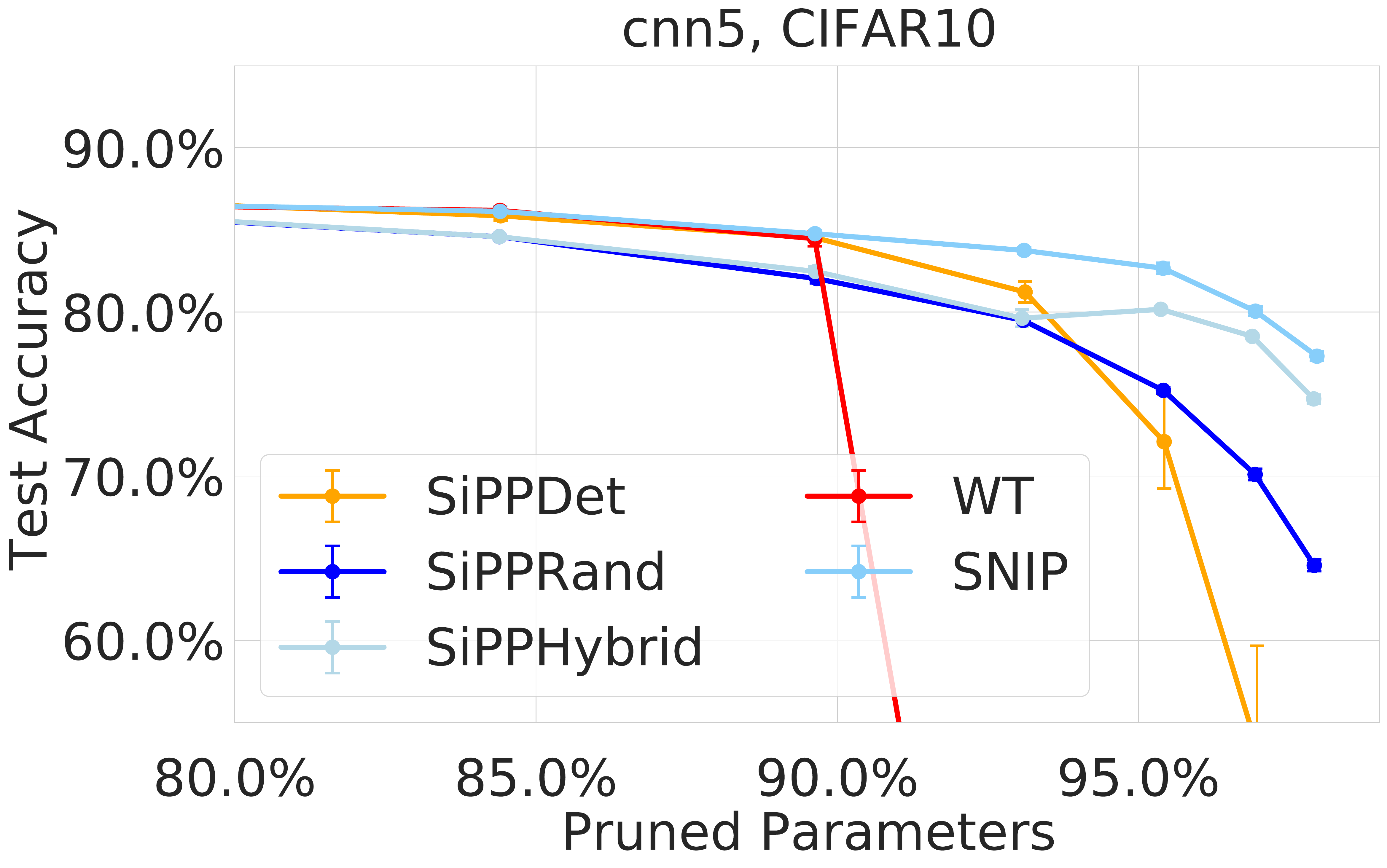}
\end{minipage}
\caption{The test accuracy for the generated pruned models trained on CIFAR10 for various target prune ratios using the \textbf{random-init + prune + train} pipeline.}
\label{fig:cifar_prune_rand_results}
\end{figure}

\subsection{Discussion}
For our experiments we have embedded \sipp into two maximally diverse pruning pipelines in terms of the amount of (re-)training epochs, which constitutes the majority of computational cost in pruning. By doing so, we highlight the versatility and robustness of \sipp in performing well across many different tasks. While traditional pruning methods, such as \textsc{WT} and \textsc{Snip} (for further comparisons, see the  supplementary material), perform inconsistently when used in the context of alternative pruning pipelines we observe that \sipp serves as a consistent plug-and-play solution to the core pruning method of a pruning pipeline. 
Among the \sipp variants, we see that \sippdet tends to perform particularly well for small prune ratios (such as in the case of iterative prune+retrain) while \sipprand performs the best for extreme prune ratio (such as in the case of random-init + prune + train). \sipphybrid usually finds a close-to-optimal mixture of strategies and thus provides the most versatility among the \sipp variants, which comes at the cost of increased implementation effort. 
\section{Conclusion} 
\label{sec:conclusion}
In this work, we presented a simultaneously provably and practical family of network pruning methods, \textsc{SiPP}, that is grounded in a data-informed measure of sensitivity. Our analysis establishes provable guarantees that quantify the trade-off between the desired model sparsity and resulting accuracy of the pruned model establishing novel analytical compression bounds for a large class of neural networks.
\textsc{SiPP}'s versatility in providing strong prune results across a variety of tasks suggests that our method inherently considers the crucial pathways through the network, and does not merely operate by considering the properties, e.g., values, of the network parameters alone. 
We envision that \textsc{SiPP} can spur further research into network pruning by providing a robust core pruning method that can be reliably integrated into any pruning pipelines with close-to-optimal prune performance.

\subsubsection*{Acknowledgments}
This research was supported in part by the U.S. National Science Foundation (NSF) under Awards 1723943 and 1526815, Office of Naval Research (ONR) Grant N00014-18-1-2830, Microsoft, and JP Morgan Chase.

\bibliographystyle{plain}
\bibliography{misc/refs}

\clearpage
\newpage

\setcounter{table}{0}
\renewcommand{\thetable}{S\arabic{table}}
\setcounter{figure}{0}
\renewcommand{\thefigure}{S\arabic{figure}}
\setcounter{section}{0}
\renewcommand{\thesection}{S\arabic{section}}

\appendix
\section{Overview of Supplementary Material}
\label{sec:supp-overview}

In the following, we provide a quick overview of the material discussed in the supplementary material. We start out by providing a more complete problem definition including introducing additional notation that is required for our analysis (Section~\ref{sec:supp-problem-definition}). Subsequently, we introduce \sipp in full length including the generalization to all weights (Section~\ref{sec:supp-method}). We then provide a detailed analysis of \sipp and its variants to back up the informal claims of the main paper (Section~\ref{sec:supp-analysis}). Finally, we provide details and hyperparameters for our experimental setup and additional experimental results (Section~\ref{sec:supp-results}).
\section{Notation and Problem Definition}
\label{sec:supp-problem-definition}

The set of parameters $\theta$ of a neural network with $L$ layers is a tuple of multi-dimensional weight tensors corresponding to each layer, i.e., $\theta = (\WW^1, \ldots, \WW^L)$. The set of parameters $\theta$ defines the mapping $f_\theta: \XX \to \YY$ from the input space $\XX$ to the output space $\YY$. 
We consider the setting where we have access to independent and identically distributed (i.i.d.) samples $(x, y)$ from a joint distribution defined on $\XX \times \YY$ from which we can gather a training, test, and validation data set. To this end, we let $\DD$ denote the marginal distribution over the input space $\XX$. 

\subsection{Network Notation}

\paragraph{Layers.}
For a given input $x \sim \DD$, we denote the pre-activation and activation of layer $\ell$ by $Z^\ell(x)$ and $A^\ell(x)$, respectively. Note that 
$$
f_\theta(x) = A^L(x), \quad\quad A^0(x) = x, \quad\quad \text{and} \quad\quad A^\ell(x) = \phi^\ell(Z^\ell(x)),
$$
where $\phi^\ell(\cdot)$ denotes the activation function for layer $\ell$.
We consider any multi-dimensional layer that can be described by a linear map with \emph{parameter sharing}, e.g. fully-connected layers, convolutional layers, or LSTM cells. Specifically, for a layer $\ell$ the pre-activation $Z^\ell(x)$ of layer $\ell$ is described by the linear mapping of the activation $A^{\ell - 1}(x)$ with $\WW^\ell$, i.e., 
$$
Z^\ell(x) = \WW^\ell * A^{\ell - 1}(x),
$$
where $*$ denotes the operator of the linear map, e.g., the convolutional operator. 

\paragraph{Parameter groups.}
We denote by $c^\ell$ the number of parameter groups within a layer $\ell$ that do not interact with each other, e.g., individual filters in convolutional layers. Then, let 
$$
Z_i^\ell(x) = \WW^\ell_i * A^{\ell - 1}(x), \quad i \in [c^\ell],
$$ 
denote the $i^\text{th}$ pre-activation channel of layer $\ell$ produced by parameter group $\WW^\ell_i$. Then the entire pre-activations $Z^\ell(x)$ of a layer $\ell$ is constructed by appropriately concatenating the individual pre-activations from individual parameter groups, i.e., 
$$
Z^\ell(x) = \left[Z_1^\ell(x), \ldots, Z_{c^\ell}^\ell(x)\right].
$$
Moreover, we let $\eta^\ell \in \mathbb{N}$ denote the number of scalar values of $Z^\ell(\cdot)$ and let $\eta = \etaDef$. 
Finally, let $\rho$ denote the maximum number of parameters within a parameter group, i.e., $\rho = \max_{i, \ell} \size{\WW_i^\ell}$.

\paragraph{Patches.}
Within a parameter group, parameters may be used multiple times, c.f. \emph{parameter sharing}, in order to produce the output $Z_i^\ell(x) = \WW^\ell_i * A^{\ell - 1}(x)$. For example, in case of a convolutional layer the filter $\WW^\ell_i$ gets ``slid'' across the input $A^{\ell-1}(x)$ of the layer in order to produce one output pixel after another. Hereby, the filter acts on a distinct \emph{patch} of the layer input $A^{\ell-1}(x)$ in order to produce a specific output pixel $z(x) \in Z^{\ell}_i(x)$, where with slight abuse of notation $z(x)$ denotes a scalar entry of $Z^\ell_i(x)$. To precisely specify the associated operation that produces the output $z(x)$ we define by $\AA_i^\ell$ the \emph{set of patches} of the layer input $A^{\ell-1}(x)$ that are required to produce the output $Z^\ell_i(x)$. Specifically, let $a(\cdot) \in \AA_i^\ell$ denote some patch of $\AA_i^\ell$. Then, $a(\cdot) \subseteq A^{\ell-1}(\cdot)$ is defined such that a dot product between the parameter group $\WW^\ell_i$ and the patch $a(\cdot)$ produces the associated output scalar $z(x)$, i.e.,
\begin{equation*}
    z(x) = \dotp{\WW^\ell_i}{a(x)} = \sum_{k \in \II_i^\ell} w_k a_k(x),
\end{equation*}
where $\II_i^\ell$ denotes the index set of weights for the parameter group $\WW^\ell_i$ and $w_k, a_k(x)$ denote a scalar entry of the parameter group and patch for some input $x \sim \DD$, respectively. Note that $\eta^\ell = \sum_{i \in [c^\ell]} \abs{\AA_i^\ell}$.

The notation of patch maps $\AA$ lets us conveniently abstract away some of the implementation details of the linear map $*$ without restricting ourselves to a particular type of linear map $*$. 
For example in the context of convolutional layers, the actual linear map $*$ can significantly vary depending on the parameter settings such as stride length, padding, and so forth. 
It also enables us to consider other layers, such as recurrent layers, at the same time. In this case, $\AA$ can be generated by considering each recursive input to the layer as a separate patch.

In the case of two-dimensional convolutions (i.e. for images), we note that our notion of patch maps corresponds to the \textsc{Unfold} operation in PyTorch~\cite{pytorch2020unfold}, which we find to be a helpful reference to further contextualize the concept of patch maps.

\subsection{Problem Definition}
We now proceed to formally state the problem definition that motivates the use of \sipp and subsequent analysis. To this end, let the size of the parameter tuple $\param$, $\size{\param}$, to be the number of all non-zero entries in the weight tensors $\WW^1,\ldots,\WW^L$.

\begin{problem}
\label{prob:network-compression}
For given $\epsilon, \delta \in (0,1)$, our overarching goal is to use a pruning algorithm to generate a sparse reparameterization $\paramHat$ of $\theta$ such that $\size{\paramHat} \ll \size{\param}$ and for $\Point \sim \DD$ the $\ell_2$-norm of the reference network output $\f(\Input)$ can be approximated by $\fHat(\Input)$ up to $1 \pm \eps$
multiplicative error with probability greater than $1- \delta$, i.e., 
$$
\Pr \nolimits_{\paramHat, \Point} \left(\norm{\fHat (\Input) - \f(\Input)} \leq \eps \norm{\f(\Input)}\right) \ge 1 - \delta,
$$
where $\norm{\cdot} = \norm{\cdot}_2$ and $\Pr \nolimits_{\paramHat, \Point}$ considers the randomness over both the pruning algorithm and the network's input.
\end{problem}

\section{Method}
\label{sec:supp-method}
In this section, we provide additional details for \sipp as introduced in the main part of the paper.

\subsection{Overview}
Algorithm~\ref{alg:supp-sipp} provides an extended over view of \sipp. Moreover, in Algorithm~\ref{alg:sparsify-weights} we present \textsc{Sparsify}, which is the sub-routine to adaptively prune weights from a parameter group according to either \sippdet, \sipprand, or \sipphybrid. 

\begin{algorithm}[t!]
\small
\caption{\sipp$(\param, \BB, \delta)$}
\label{alg:supp-sipp}
\textbf{Input:}
$\param = \paramDef$: weights of the uncompressed neural network;
$\BB \in \NN$: sampling budget; 
$\delta \in (0, 1)$: failure probability;
\\
\textbf{Output:} 
$\paramHat = \paramHatDef$ : sparse weights 

\begin{spacing}{1.3}
\begin{algorithmic}[1]
\small

\STATE $\SS \gets \text{Uniform sample (without replacement) of } \SizeOfS[8 \, \eta]$ \text{points from validation set} \label{lin:set-s}

\STATE $\{ m_i^{\ell}\}_{i, \ell} \gets \textsc{OptAlloc}(\theta, \BB, \SS) \; \forall i \in [c^\ell],\; \forall \ell \in [L]$ \COMMENT{optimally allocate budget $\BB$ applying Lemma~\ref{lem:sipprand} to evaluate the resulting relative error guarantees} 

\FOR{$\ell \in \compressiblelayers$}

    \STATE $\WWHat^\ell \gets \textbf{0}$; \COMMENT{Initialize a null tensor}
    
    \FOR{$i \in [c^\ell]$}
    
        \STATE $\{s_j \}_j \gets \textsc{EmpiricalSensitivity}(\theta, \SS, i, \ell) \; \forall w_j \in \WW_i^\ell$ \COMMENT{Compute parameter importance for each weight $w_j$ in the parameter group according to Definitions~\ref{def:empirical-sens} and~\ref{def:generalized-param-importance}} 
    
        \STATE $\WWHat_i^\ell \gets \textsc{Sparsify}(\WW_i^\ell, m_i^\ell, \{s_j\}_j)$ \COMMENT{prune weights according to \sippdet, \sipprand, or \sipphybrid such that only $m_i^\ell$ weights remain in the parameter group}
	
	\ENDFOR

\ENDFOR

\STATE \textbf{return} $\paramHat = \paramHatDef$;

\end{algorithmic}
\end{spacing}
\end{algorithm}

\begin{algorithm}[t!]
\small
\caption{\textsc{Sparsify}$(\WW_i^\ell, m_i^\ell, \{s_j\}_j)$}
\label{alg:sparsify-weights}
\textbf{Input:} 
$\WW_i^\ell$: parameter group to be pruned;
$m_i^\ell$: assigned budget;
$\{s_j\}_j$: sensitivities associated with weights in parameter group
\\
\textbf{Output:} 
$\hat \WW_i^\ell$: sparse parameter group

\begin{spacing}{1.3}
\begin{algorithmic}[1]
\small

\STATE $S \gets \sum_{j \in \II_i^\ell} s_j$ \COMMENT{Compute sum of sensitivities}

\STATE  $\tilde S \gets \STildeDef{16 \eta}$

\STATE $\{q_j\}_j \gets \frac{s_j}{S}$ \COMMENT{Compute sample probabilities for \sipprand}

\STATE $N \gets N(m_i^\ell, \{q_j\}_j)$ \COMMENT{Get expected number of required samples to obtain $m_i^\ell$ unique samples} \label{lin:unique-samples}

\STATE $\II_{det} \gets $ subset of indices from $\II_i^\ell$ corresponding to the largest $m_i^\ell$ sensitivities (c.f. Lemma~\ref{lem:sippdet})

\STATE $\eps_{rand} \gets \frac{\tilde S  + \sqrt{\tilde S  (\tilde S  + 6 N)}}{N}$ \COMMENT{c.f. Lemma~\ref{lem:sipprand}} \label{lin:eps-rand}

\STATE $\eps_{det} \gets C \sum_{j \in (\II \setminus \II_{det})} s_j$ \COMMENT{c.f. Lemma~\ref{lem:sippdet}} \label{lin:eps-det}

\IF{($\eps_{rand} > \eps_{det}$ \textbf{or} always $\sippdet)$ \textbf{and} not always \sipprand} \label{lin:choose-strategy}
    \STATE $\hat \WW_i^\ell \gets $ prune weights from $\WW_i^\ell$, i.e set to $0$, that are not in $\II_{det}$ and keep the rest \label{lin:sippdet}
\ELSE 
    \STATE $\{n_j\}_j \sim \textsc{Multinomial}(\{q_j\}_j, N)$ \COMMENT{Sample $N$ times and return the counts} \label{lin:sipprand-start}
    
    \STATE $\hat \WW_i^\ell \gets $ prune weights such that $\hat w_j = \frac{n_j}{N q_j} w_j$ for each weight $\hat w_j$, $j \in \II_i^\ell$ in the parameter group
    \label{lin:sipprand-end}
\ENDIF
\STATE \textbf{return} $\hat \WW_i^\ell$

\end{algorithmic}
\end{spacing}
\end{algorithm}

\subsection{Details regarding \textsc{OptAlloc}}
As mentioned in Section~4, \textsc{OptAlloc} proceeds by minimizing the sum of relative error guarantees associated with each parameter group for a given overall weight budget $\BB$, i.e.,

$$
\begin{aligned}
    \textstyle
    &\min_{m_i^\ell \in \NN \; \forall i \in [c^\ell], \; \forall \ell \in [L]} 
    && \textstyle
    \sum_{\ell \in [L], \; i \in [c^\ell]} \epsilon_i^\ell(m_i^\ell)
    & \text{s. t.} 
    && \textstyle 
    \sum_{\ell \in [L], \; i \in [c^\ell]} m_i^\ell \leq \BB.
\end{aligned}
$$
Hereby, $m_i^\ell$ and $\eps_i^\ell(m_i^\ell)$ denote the desired number of weights and the associated theoretical error in parameter group $\WW_i^\ell$ after pruning, respectively. We evaluate the theoretical error according to Lemmas~\ref{lem:sippdet} and Lemma~\ref{lem:sipprand} when pruning with \sippdet and \sipphybrid or \sipprand, respectively. Note that in order to evaluate Lemma~\ref{lem:sipprand} we have to first convert $m_i^\ell$ to the expected number of required samples in order to obtain $m_i^\ell$ unique samples, which is also shown in Line~\ref{lin:unique-samples} of Algorithm~\ref{alg:sparsify-weights}.

\subsection{Details regarding \textsc{EmpiricalSensitivity}}
We note that the empirical sensitivity (ES) $s_j$ of a weight $w_j$ in the parameter group $\WW_i^\ell$ is given by Definition~\ref{def:empirical-sens}, where we define ES as the maximum of the relative parameter importance $g_j(x)$ over a set $\SS$ of i.i.d.\ data points. To account for both negative weights and activations, we utilize the generalized parameter importance as defined in Definition~\ref{def:generalized-param-importance} to compute $g_j(x)$ for a particular input $x$. To ensure that ES holds with probability at least $1-\delta$ for all patches and parameters simultaneously we have to appropriately choose the size of $\SS$, c.f. Line~\ref{lin:set-s} of Algorithm~\ref{alg:supp-sipp} and Section~\ref{sec:supp-analysis-generalization}.

\subsection{Details regarding \textsc{Sparsify}}
In Algorithm~\ref{alg:sparsify-weights} we present the pruning strategy for both \sippdet and \sipprand as shown in Line~\ref{lin:sippdet} and Lines~\ref{lin:sipprand-start},~\ref{lin:sipprand-end}, respectively. 
Recall that \sipphybrid adaptively chooses between both strategies according to the associated error guarantees, which get computed in Lines~\ref{lin:eps-det} and~\ref{lin:eps-rand} for \sippdet and \sipprand, respectively. We then choose the better strategy accordingly, see Line~\ref{lin:choose-strategy}. 
We can also choose to always prune using \sipprand or \sippdet as indicated in Line~\ref{lin:choose-strategy}.

\subsection{Simple \sipp}
We can greatly simplify our pruning algorithm if we prune all parameter groups using \sippdet. To see this consider the solution to \textsc{OptAlloc} when evaluating the relative error according to Lemma~\ref{lem:sippdet}. Since the relative error for a particular parameter group in this case is the sum over sensitivities that were not included and the objective of \textsc{OptAlloc} is to minimize the sum over all relative errors the optimal solution is to \emph{globally} keep the weights with largest sensitivity. In other words, pruning with \sippdet only results in global thresholding of weights according to their sensitivity. The resulting procedure is shown in Algorithm~\ref{alg:sipp-simple}.
Note that this procedure is very reminiscent of simple, global weight thresholding~\cite{Han15, renda2020comparing} but using sensitivity instead of the magnitude of the weights as prune criterion. In contrast to weight thresholding, however, \sippsimple still exhibits the same theoretical error  guarantees as \sipp.

\begin{algorithm}[htb]
\caption{\sippsimple$(\param, \BB, \delta)$}
\label{alg:sipp-simple}
\textbf{Input:}
$\param = \paramDef$: weights of the uncompressed neural network;
$\BB \in \NN$: sampling budget; 
$\delta \in (0, 1)$: failure probability;
\\
\textbf{Output:} 
$\paramHat = \paramHatDef$ : sparse weights 

\begin{spacing}{1.5}
\begin{algorithmic}[1]

\STATE $\SS \gets \text{Uniform sample of } \SizeOfS[16 \, \eta]$ \text{points from validation set}

\STATE Compute sensitivity for all weights in the network using $\SS$

\STATE Prune weights globally by keeping the $\BB$ weights with largest sensitivity

\STATE Return $\paramHat = \paramHatDef$

\end{algorithmic}
\end{spacing}
\end{algorithm}

\section{Analysis}
\label{sec:supp-analysis}
In this section, we establish the theoretical guarantees of \sipp as presented in Algorithm~\ref{alg:supp-sipp} and state our main compression theorem.

\subsection{Outline}
\label{sec:supp-analysis-outline}

We begin by considering the sparsification of an arbitrary output patch in an arbitrary parameter group and layer assuming that both the input to the layer and the weights are non-negative. To this end, we first establish the empirical sensitivity (ES) inequality that quantifies the contribution of an individual scalar weight to an output patch (Section~\ref{sec:supp-analysis-es} and informal Lemma~5.1). We then establish the relative error guarantees for each variation of \sipp for an arbitrary output patch (Section~\ref{sec:supp-analysis-error} and informal Lemmas~5.2,~5.3).
Next, we formally generalize the approximation scheme to arbitrary weights and input activations (Section~\ref{sec:supp-analysis-generalization}). Finally, we provide our formal network compression bounds by composing together the error guarantees from individual layers and parameter groups. (Section~\ref{sec:supp-analysis-composition}).

\subsection{Empirical Sensitivity}
\label{sec:supp-analysis-es}

Recall that an arbitrary parameter group indexed by $i \in [c^\ell]$ in an arbitrary layer $\ell \in [L]$, is denoted by $\WW_i^\ell$ and $\II$ denotes its parameter index set. Moreover, let $w_j$ denote some scalar entry of $\WW_i^\ell$ for some $j \in \II$. Also as before, $A^{\ell-1}(x)$ denotes the input activation to layer $\ell$ and $Z_i^\ell(x) = \WW_i^\ell * A^{\ell-1}(x)$ denotes the output pre-activation of parameter group $\WW_i^\ell$. Finally, recall that 
$a(\cdot) \in \AA_i^\ell$ denotes some patch of $\AA_i^\ell$ and that the patch $a(\cdot)$ produces the associated output scalar $z(x)$, i.e.,
$
    z(x) = \dotp{\WW^\ell_i}{a(x)} = \sum_{k \in \II_i^\ell} w_k a_k(x),
$
We now proceed with the formal definition of relative parameter importance and empirically sensitivity, which is defined as the maximum relative parameter importance over multiple data points.

\begin{definition}[Relative parameter importance]
For a scalar parameter $w_j$, $j \in \II_i^\ell$, of parameter group $W_i^\ell$ in layer $\ell$, its relative importance $g_j(x)$ is given by
$$
g_j(x) = \gDef{\Input},
$$
where $\AA_i^\ell$ denotes the set of patches for parameter group $W_i^\ell$.
\end{definition}

\begin{definition}[Empirical sensitivity]
\label{def:empirical-sens}
Let $\SS$ be a set of i.i.d.\ samples from the validation data set. Then, the empirical sensitivity $s_j(x)$ of a scalar parameter $w_j$, $j \in \II_i^\ell$, of parameter group $W_i^\ell$ in layer $\ell$ is given by
$$
s_j(x) = \max_{x \in \SS} g_j(x).
$$
\end{definition}
We note that, for ease of notation, we do not explicitly enumerate ES over $i$ and $\ell$ for parameter groups and layers, respectively.

To ensure that a \emph{small} batch of points $\SS$ suffices for an accurate approximation of parameter importance, we impose the following mild regularity assumption on the Cumulative Distribution Function (CDF) of $\g{\Point}$ similar to the assumption of~\cite{blg2018}.

\begin{assumption}[Regularity assumption]
\label{asm:cdf}
There exist universal constants $C, K > 0 $ such that for all $j \in \II_i^\ell$, the CDF of the random variable $\g{\Input} \in [0, 1]$ for $\Point \sim \DD$, denoted by $\cdf{\cdot}$, satisfies
$$
\cdf{\nicefrac{1}{C}} \leq \exp\left(-\nicefrac{1}{K}\right).
$$
\end{assumption}

Traditional distributions such as the Gaussian, Uniform, and Exponential, among others, supported on the interval $[0,1]$ satisfy Assumption~\ref{asm:cdf} with sufficiently small values of $K$ and $K'$. In other words, Assumption~\ref{asm:cdf} ensures that there are no outliers of $g_j(x)$ with non-negligible probability that are \emph{not} within a constant multiplicative factor of most other values of $g_j(x)$. Capturing outliers that are within a constant multiplicative factor, on the other hand, can be captured by considering an appropriate scaling factor of ES in the ES inequality (see Lemma~\ref{lem:es-inequality} below). However, we cannot capture these non-negligible outliers (unless we significantly increase the cardinality of $\SS$) when they are not within a constant multiplicative factor.

We now proceed to state the ES inequality as informally stated in Lemma~5.1 in the main body of the paper. We note that, intuitively, the ES inequality enables us to quantify ,i.e. upper-bound, the contribution $w_j a_j(x)$ coming from an individual weight w.h.p.\ in terms of the output patch $z(x)$ and the sensitivity $s_j$ of the weight.
\begin{lemma}[ES inequality]
\label{lem:es-inequality}
For $\delta \in (0, 1)$, the ES $s_j$ of the scalar parameter $w_j$, $j \in \II_i^\ell$, of parameter group $W_i^\ell$ computed with a set $\SS$ of i.i.d.\ data points, $\abs{\SS}= \SizeOfS[]$, satisfies
$$
\Pr_{x} \left( w_j a_j(x) \leq C s_j z(x) \right) \geq 1 - \delta \quad \forall j \in \II_i^\ell,
$$
for some input $x \sim \DD$ and some fixed input patch $a(\cdot) \in \AA_i^\ell$, where $C, K$ are the universal constants of Assumption~\ref{asm:cdf} and $z(x) = \sum_{k \in \II_i^\ell} w_k a_k(x)$.
\end{lemma}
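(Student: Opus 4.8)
The plan is to reduce the claimed tail bound -- which concerns a fresh sample $x \sim \DD$ and a fixed patch $a(\cdot)$ -- to a purely empirical event about the sampled set $\SS$. First I would note that, since the lemma fixes a single patch $a(\cdot) \in \AA_i^\ell$, the relevant quantity is dominated by the relative importance: for that patch, $\frac{w_j a_j(x)}{z(x)} \leq \max_{a(\cdot) \in \AA_i^\ell} \frac{w_j a_j(x)}{z(x)} = g_j(x)$, directly from the definition of $g_j$. Hence the bad event $\{w_j a_j(x) > C s_j z(x)\}$ is contained in $\{g_j(x) > C s_j\}$, and it suffices to bound $\Pr(g_j(x) > C s_j)$ for each fixed $j \in \II_i^\ell$ and then take a union bound over the at most $\rho$ indices in $\II_i^\ell$.

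The crucial observation is that, under the non-negativity assumptions $\WW_i^\ell \geq 0$ and $A^{\ell-1}(x) \geq 0$, we have $0 \leq w_j a_j(x) \leq z(x)$ termwise, so each ratio lies in $[0,1]$ and therefore $g_j(x) \in [0,1]$ for \emph{every} realization of $x$; in particular $s_j = \max_{x' \in \SS} g_j(x') \in [0,1]$ as well. Consequently, whenever $s_j \geq 1/C$ we have $C s_j \geq 1 \geq g_j(x)$, so the event $\{g_j(x) > C s_j\}$ cannot occur. This means the bad event is contained in $\{s_j < 1/C\}$, which depends only on $\SS$ and not on the fresh point $x$. Turning a guarantee that must hold for an unseen $x \sim \DD$ into a deterministic-in-$x$ statement about $\SS$ is the conceptual heart of the argument and the step I expect to be the main obstacle to state cleanly; it is precisely what lets the bound generalize to points outside the sample.

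It then remains to bound $\Pr(s_j < 1/C)$. Since $s_j = \max_{x' \in \SS} g_j(x')$, the event $\{s_j < 1/C\}$ equals the intersection $\bigcap_{x' \in \SS} \{g_j(x') < 1/C\}$. Using that the points of $\SS$ are i.i.d.\ together with the regularity Assumption~\ref{asm:cdf} (namely $\cdf{1/C} \leq \exp(-1/K)$), I would bound $\Pr(s_j < 1/C) = \prod_{x' \in \SS} \Pr(g_j(x') < 1/C) \leq \cdf{1/C}^{\abs{\SS}} \leq \exp(-\abs{\SS}/K)$. Plugging in the prescribed $\abs{\SS} = \SizeOfS[]$ makes this at most $\delta/\rho$, and a union bound over the at most $\rho$ weights in $\II_i^\ell$ yields total failure probability at most $\delta$, establishing the claim. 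The only routine points left are the harmless strict-versus-weak inequality $\Pr(g_j(x') < 1/C) \leq \cdf{1/C}$ and the constant bookkeeping hidden in the sample size, neither of which poses a genuine difficulty.
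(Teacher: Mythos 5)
Your proposal is correct and follows essentially the same route as the paper's proof: both reduce the claim via $\nicefrac{w_j a_j(x)}{z(x)} \leq g_j(x) \leq 1$ to the $\SS$-only event $\{s_j \leq \nicefrac{1}{C}\}$, bound its probability by $F_j\left(\nicefrac{1}{C}\right)^{\abs{\SS}} \leq \exp\left(-\nicefrac{\abs{\SS}}{K}\right) = \nicefrac{\delta}{\rho}$ using the i.i.d.\ structure of $\SS$ and Assumption~\ref{asm:cdf}, and finish with a union bound over the at most $\rho$ weights in $\II_i^\ell$. The only cosmetic difference is that you make the deterministic-in-$x$ reduction explicit via the contrapositive ($s_j \geq \nicefrac{1}{C}$ forces $C s_j \geq 1 \geq g_j(x)$), whereas the paper expresses the same step as $\Pr_{\SS} \left( s_j \leq \nicefrac{g_j(x)}{C} \right) \leq \Pr_{\SS} \left( s_j \leq \nicefrac{1}{C} \right)$.
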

\begin{proof}
We consider a fixed weight $w_j$ from the parameter group and a fixed input patch $a(\cdot)$. Note that \begin{equation}
\label{eq:rel-importance-inequality}
    \frac{w_j a_j(x)}{z(x)} \leq g_j(x)
\end{equation}
by definition of $g_j(x)$ since the relative parameter importance is the maximum over patches for a specific input $x$. We now consider the probability that $s_j(\SS) = \max_{x' \in \SS} g_j(x')$ is not an upper bound for $g_j(x)$ when appropriately scaled for random draws over $\SS$, where we explicitly denote the dependency of $s_j(\SS)$ on $\SS$ for the purpose of this proof. By showing this occurs with low probability we can then conclude that $s_j$ is indeed an upper bound for $g_j(x)$ most of the time. Specifically,
\begin{align*}
    \Pr_{\SS} \left( C s_j(\SS) \leq g_j(x) \right)  
    &= \Pr_{\SS} \left( s_j(\SS) \leq \nicefrac{g_j(x)}{C} \right) \\
    &\leq \Pr_{\SS} \left( s_j(\SS) \leq \nicefrac{1}{C} \right) 
    & \text{since $g_j(x) \leq 1$} \\
    &= \Pr_{\SS} \left( \max_{x' \in \SS} g_j(x') \leq \nicefrac{1}{C} \right) 
    & \text{by definition of $s_j(\SS)$} \\
    &= \left(\Pr_{x'} \left( g_j(x') \leq \nicefrac{1}{C} \right) \right)^{\abs{\SS}}
    & \text{since $\SS$ is $\abs{\SS}$ i.i.d. draws from $\DD$} \\
    &= F_j\left( \nicefrac{1}{C} \right)^{\abs{\SS}}
    & \text{since $F_j(\cdot)$ is the CDF of $g_j(x')$} \\
    & \leq \exp \left(-\nicefrac{\abs{\SS}}{K}\right) 
    & \text{by Assumption~\ref{asm:cdf}} \\
    & = \frac{\delta}{\rho} 
    & \text{since $\abs{\SS} = \SizeOfS[]$ by definition.}
\end{align*}
Thus, we can conclude that for a fixed weight $w_j$ and some input $x \sim \DD$ its relative contribution $g_j(x)$ is upper bound by its sensitivity $s_j$. Moreover, the inequality also holds for any weight $w_j$ by the union bound, i.e.,
\begin{align*}
    \Pr_\SS \left( \exists j \in \II_i^\ell | C s_j(\SS) \leq g_j(x) \right)
    & \leq \abs{\II_i^\ell} \Pr_{\SS} \left( C s_j(\SS) \leq g_j(x) \right)  
    & \text{by the union bound} \\
    & \leq \abs{\II_i^\ell} \frac{\delta}{\rho} 
    & \text{by the analysis above} \\
    & \leq \rho \frac{\delta}{\rho} 
    & \text{since $\rho = \max_{i, \ell} \abs{\II_i^\ell}$} \\
    & = \delta
\end{align*}
We thus have with probability at least $1 - \delta$ over the construction of $s_j$ that 
$$
C s_j \geq g_j(x) \quad \forall j \in \II_i^\ell
$$
and by~\eqref{eq:rel-importance-inequality} that 
$$
\frac{w_j a_j(x)}{z(x)} \leq g_j(x) \leq C s_j,
$$
which concludes the proof since the above inequality holds for any $x \sim \DD$.
\end{proof}

\subsection{Error Guarantees for positive weights and activations}
\label{sec:supp-analysis-error}
Equipped with Lemma~\ref{lem:es-inequality} we now proceed to establish the relative error guarantees for the three variants of \sipp. As before, we consider a fixed output patch for a fixed parameter group and we assume that both input activations and the weights are non-negative.

\subsubsection{Error Guarantee for \sippdet}
\label{sec:supp-analysis-sippdet}
Recall that \sippdet prunes weights from a parameter group by keeping only the weights with largest sensitivity. Let the index set of weights kept be denoted by $\II_{det}$. Below we state the formal error guarantee for a fixed output patch of a parameter group when we only keep the weights indexed by $\II_{det}$. Note that the below error guarantee holds for any index set $\II_{det}$ that we decide to keep. Naturally, however, it makes sense to keep the weights with largest sensitivity as this minimizes the associated relative error.

\begin{lemma}[\sippdet error bound]
\label{lem:sippdet}
For $\delta \in (0, 1)$, pruning parameter group $\WW_i^\ell$ by keeping only the weights indexed by $\II_{det} \subseteq \II_i^\ell$ generates a pruned parameter group $\hat\WW_i^\ell$ such that for a fixed input patch $a(\cdot) \in \AA_i^\ell$ and $x \sim \DD$
\begin{align*}
\Pr \left( \abs{\hat z_{det}(x) - z(x)} \geq \epsilon_{det} z(x) \right)  \leq \delta \quad \text{with} \quad
\epsilon_{det} = C \sum_{j \in \II \setminus \II_{det}} s_j \in (0, 1),
\end{align*}
where $z(x) = \dotp{\WW_i^\ell}{a(x)} = \sum_{j \in \II_i^\ell} w_j a_j(x)$ and $\hat z_{det}(x) = \dotp{\hat\WW_i^\ell}{a(x)} = \sum_{j \in \II_{det}} w_j a_j(x)$ denote the unpruned and approximate output patch, respectively, associated with the input patch $a(\cdot)$. The sensitivities $\{s_j\}_{j \in \II_i^\ell}$ are hereby computed over a set $\SS$ of $\SizeOfS[]$ i.i.d.\ data points drawn from $\DD$.
\end{lemma}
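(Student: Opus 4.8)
The plan is to express the approximation error $\hat z_{det}(x) - z(x)$ exactly as the sum of the contributions from the discarded weights, and then to bound each such contribution using the ES inequality (Lemma~\ref{lem:es-inequality}). Since $\II_{det} \subseteq \II_i^\ell$ and $\hat z_{det}(x) = \sum_{j \in \II_{det}} w_j a_j(x)$ while $z(x) = \sum_{j \in \II_i^\ell} w_j a_j(x)$, the difference telescopes immediately:
$$
z(x) - \hat z_{det}(x) = \sum_{j \in \II \setminus \II_{det}} w_j a_j(x).
$$
Because both the weights and activations are assumed non-negative in this regime, every term $w_j a_j(x)$ is non-negative, so the difference is itself non-negative; this means $\hat z_{det}(x)$ is always an underapproximation of $z(x)$ and the absolute value simply equals $z(x) - \hat z_{det}(x) = \sum_{j \in \II \setminus \II_{det}} w_j a_j(x)$.

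The key step is then to invoke Lemma~\ref{lem:es-inequality}, which guarantees that with probability at least $1 - \delta$ over the construction of the sensitivities from the set $\SS$ of $\SizeOfS[]$ i.i.d.\ points, we have $w_j a_j(x) \leq C s_j z(x)$ \emph{simultaneously for all} $j \in \II_i^\ell$ and for the draw $x \sim \DD$. I would note carefully that Lemma~\ref{lem:es-inequality} already delivers the union bound over all weights in the parameter group inside its own proof, so a single application at failure probability $\delta$ covers every index $j$ at once --- there is no need to re-apply the union bound and inflate $\delta$. Conditioning on this good event, I sum the per-weight inequalities over the pruned index set:
$$
\abs{\hat z_{det}(x) - z(x)} = \sum_{j \in \II \setminus \II_{det}} w_j a_j(x) \leq \sum_{j \in \II \setminus \II_{det}} C s_j z(x) = \left( C \sum_{j \in \II \setminus \II_{det}} s_j \right) z(x) = \epsilon_{det}\, z(x).
$$
Thus the bad event $\{\abs{\hat z_{det}(x) - z(x)} \geq \epsilon_{det} z(x)\}$ is contained in the failure event of Lemma~\ref{lem:es-inequality}, which has probability at most $\delta$, establishing the claim.

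The main obstacle I anticipate is not the algebra, which is essentially bookkeeping, but rather stating the probabilistic claim precisely: the randomness here is genuinely over \emph{both} the construction of $\SS$ (which fixes the sensitivities $s_j$) and the test point $x \sim \DD$, and one must ensure the ``for all $j$'' quantifier is handled by the single instance of Lemma~\ref{lem:es-inequality} rather than being paid for again. I would also make explicit the claim $\epsilon_{det} \in (0,1)$ asserted in the statement; this is not automatic from the bound itself but follows provided the budget retains enough of the high-sensitivity weights that the cumulative sensitivity of the discarded tail is strictly less than $1/C$, and I would flag that this is the condition under which the relative-error guarantee is meaningful (otherwise the bound is vacuous). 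The non-negativity assumption is what makes the error a clean sum without cancellation or sign bookkeeping; the generalization to arbitrary signs is deferred to the quadrant-splitting argument in Section~\ref{sec:supp-analysis-generalization}.
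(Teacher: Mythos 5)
Your proposal is correct and follows essentially the same route as the paper's proof: express the error exactly as $\sum_{j \in \II \setminus \II_{det}} w_j a_j(x)$, invoke the ES inequality (Lemma~\ref{lem:es-inequality}) once --- whose statement already quantifies over all $j \in \II_i^\ell$ --- and sum the per-weight bounds to obtain $\epsilon_{det}\, z(x)$ on the same $1-\delta$ event. Your added remarks (non-negativity making the absolute value a clean sum, not paying the union bound twice, and the caveat that $\epsilon_{det} \in (0,1)$ requires the discarded tail's cumulative sensitivity to be below $1/C$) are all consistent with, and slightly more careful than, the paper's own exposition.
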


\begin{proof}
We proceed by considering the absolute difference $\abs{z(x) - \hat z_{det}(x)}$ and note that 
\begin{align*}
\abs{z(x) - \hat z_{det}(x)}
&= \abs{\dotp{\WW_i^\ell}{a(x)} - \dotp{\hat\WW_i^\ell}{a(x)}}\\
&= \abs{\sum_{j \in \II_i^\ell} w_j a_j(x) - \sum_{j \in \II_{det}} w_j a_j(x)} \\
&= \sum_{j \in \II_i^\ell \setminus \II_{det}} w_j a_j(x)
\end{align*}
Invoking Lemma~\ref{lem:es-inequality} we know that with probability at least $1 - \delta$ each individual weight term in the above sum is upper bound by its sensitivity, i.e.,
$$
w_j a_j(x) \leq C s_j z(x) \quad \forall{j \in \II_i^\ell}.
$$
We now bound the error in terms of sensitivity as 
\begin{align*}
\abs{z(x) - \hat z_{det}(x)}
&= \sum_{j \in \II_i^\ell \setminus \II_{det}} w_j a_j(x) \\
& \leq \sum_{j \in \II_i^\ell \setminus \II_{det}} C s_j z(x) & \text{using the above inequality} \\
&= \eps_{det} z(x) & \text{by definition of $\eps_{det}$}.
\end{align*}
We conclude by mentioning that above error bound holds with probability at least $1 - \delta$ since the associated ES inequalities hold with probability at least $1 - \delta$.
\end{proof}

\subsubsection{Error Guarantee for \sipprand}
\label{sec:supp-analysis-sipprand}
As before, we consider a fixed parameter group $\WW_i^\ell$, which has been assigned a budget of $m_i^\ell$ unique weights to be kept. Recall that \sipprand is a sampling procedure that proceeds as follows:
\begin{enumerate}
    \item 
    Assign probabilities $q_j = \nicefrac{s_j}{\sum_{k \in \II_i^\ell} s_k}$ for all $j \in \II_i^\ell$.
    \item
    Compute the expected number of samples, $N$, to obtain $m_i^\ell$ unique weights from the sampling procedure.
    \item
    Sample weights $N$ times with replacement from $W_i^\ell$ according to $q_j$.
    \item
    Reweigh the weights, $\hat w_j$, to obtain the approximate weights such that $\hat w_j = \frac{n_j}{N q_j} w_j$, where $n_j$ denotes the number of times $w_j$ was sampled.
\end{enumerate}
We note that if a weight has not been sampled, i.e. $n_j = 0$, we can drop it since the resulting weight is $0$.
We now consider the resulting error bound when sampling $N$ times with replacement.

\begin{lemma}[\sipprand error bound]
\label{lem:sipprand}
For $\delta \in (0, 1)$, pruning parameter group $\WW_i^\ell$ by sampling weights $N = N(m_i^\ell)$ times with replacement, such that weight $w_j$ is sampled with probability $q_j = \nicefrac{s_j}{\sum_{k \in \II_i^\ell} s_k}$, generates a pruned parameter group $\hat\WW_i^\ell$ such that for a fixed input patch $a(\cdot) \in \AA_i^\ell$ and $x \sim \DD$

\begin{align*}
\Pr \left( \abs{\hat z_{rand}(x) - z(x)} \geq \epsilon_{rand} z(x) \right)  \leq \delta \,\, \text{and} \,\,
\epsilon_{rand} = \left(\epsilonNeuron\right) \in (0,1),
\end{align*}
where $\hat z_{rand}(x) = \dotp{\hat\WW_i^\ell}{a(x)}$ and $z(x) = \dotp{\WW_i^\ell}{a(x)}$ are with respect to patch map $a(\cdot)$ as before, 
$\tilde S = \STildeDef{4}$, and $S = \sum_{j \in \II_i^\ell} s_j$. The sensitivities $\{s_j\}_{j \in \II_i^\ell}$ are hereby computed over a set $\SS$ of $\SizeOfS[2]$ i.i.d.\ data points drawn from $\DD$.
\end{lemma}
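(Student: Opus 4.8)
The plan is to follow the two-part strategy sketched after the informal statement: first show that $\hat z_{rand}(x)$ is an \emph{unbiased} estimator of $z(x)$, and then invoke Bernstein's inequality to obtain sharp concentration around this mean, feeding the variance and range bounds supplied by the ES inequality (Lemma~\ref{lem:es-inequality}) into the tail bound and solving for the deviation $\epsilon$. The crucial observation enabling Bernstein is that the algorithm draws a \emph{fixed} number $N = N(m_i^\ell)$ of multinomial samples, so the estimator is a genuine average of i.i.d.\ terms.

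First I would rewrite $\hat z_{rand}(x)$ as an empirical mean of per-sample contributions. Introducing i.i.d.\ indices $J_1, \dots, J_N$ with $\Pr(J_t = j) = q_j = \nicefrac{s_j}{\sum_{k \in \II_i^\ell} s_k}$ and setting $X_t = \nicefrac{w_{J_t} a_{J_t}(x)}{q_{J_t}}$, one has $n_j = \sum_t \mathbbm{1}[J_t = j]$ and $\hat w_j = \nicefrac{n_j}{N q_j}\, w_j$, so that $\hat z_{rand}(x) = \tfrac{1}{N}\sum_{t=1}^N X_t$. Unbiasedness is then immediate from the reweighting identity: $\E[X_t] = \sum_{j \in \II_i^\ell} q_j \cdot \nicefrac{w_j a_j(x)}{q_j} = \sum_{j} w_j a_j(x) = z(x)$.

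Next I would condition on the event $\EE$ that the ES inequality $w_j a_j(x) \le C s_j z(x)$ holds simultaneously for \emph{all} $j \in \II_i^\ell$. Choosing $\abs{\SS} = \SizeOfS[2] = K\log(2\rho/\delta)$ makes $\Pr(\EE) \ge 1 - \nicefrac{\delta}{2}$ by Lemma~\ref{lem:es-inequality}, since the per-weight failure $\nicefrac{\delta}{2\rho}$ union-bounds over the at most $\rho$ weights to $\nicefrac{\delta}{2}$. Under $\EE$, the choice $q_j = \nicefrac{s_j}{S}$ yields two deterministic bounds: a range bound $X_t = \nicefrac{w_{J_t} a_{J_t}(x)}{q_{J_t}} \le \nicefrac{C s_{J_t} z(x) S}{s_{J_t}} = C S z(x)$, and a second-moment bound $\E[X_t^2] = \sum_j \nicefrac{(w_j a_j(x))^2}{q_j} \le C S z(x) \sum_j w_j a_j(x) = C S z(x)^2$, hence $\Var(X_t) \le C S z(x)^2$. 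Because the multinomial draw is independent of $(\SS, x)$, conditioning on $\EE$ leaves the sampling law unchanged, so these bounds may be inserted directly into Bernstein's inequality.

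Finally, applying Bernstein to $\sum_t (X_t - z(x))$ with variance proxy $V = N C S z(x)^2$, range $M = C S z(x)$, and deviation $N\epsilon z(x)$, the $z(x)^2$ factors cancel and the exponent becomes $\nicefrac{N\epsilon^2}{2CS(1 + \epsilon/3)}$; forcing the resulting $2\exp(-\cdot)$ bound to equal $\nicefrac{\delta}{2}$ produces $\log(4/\delta)$ in the exponent, and substituting $\tilde S = \nicefrac{SC}{3}\log(4/\delta)$ collapses the condition to the quadratic $N\epsilon^2 - 2\tilde S \epsilon - 6\tilde S = 0$, whose positive root is exactly $\epsilon_{rand} = \sqrt{\nicefrac{\tilde S}{N}\,(\nicefrac{\tilde S}{N} + 6)} + \nicefrac{\tilde S}{N}$. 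A union bound over the failure of $\EE$ and the Bernstein tail gives the stated $\delta$. I expect the main obstacle to be bookkeeping rather than any deep idea: lining up the two independent $\nicefrac{\delta}{2}$ budgets (the ES sample size giving $\log(2\rho/\delta)$ against the Bernstein target giving $\log(4/\delta)$), and \emph{inverting} the Bernstein bound exactly—not merely asymptotically—so that the explicit closed form $\epsilon_{rand}$ emerges as the quadratic root; one must also confirm $\epsilon_{rand} \in (0,1)$, which constrains the admissible budget $N$, and verify that conditioning on $\EE$ does not perturb the sampling distribution.
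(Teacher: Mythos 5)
Your proposal is correct and takes essentially the same route as the paper's proof: unbiasedness of the reweighted estimator, conditioning on the ES inequality holding for all weights (with the $\delta/2$ budget split and $\abs{\SS} = K\log(2\rho/\delta)$), Bernstein's inequality fed with the ES-derived variance bound $C S z(x)^2$ and range bound $C S z(x)$, and exact inversion of the tail to recover $\epsilon_{rand}$ as the positive root of $N\epsilon^2 - 2\tilde{S}\epsilon - 6\tilde{S} = 0$. The only differences are cosmetic: you phrase the estimator as an average of i.i.d.\ draws $X_t$ rather than the paper's multiset sum $T = \sum_{j \in \CC} T_j$, and the paper additionally dispenses with the degenerate case $z(x) = 0$ up front, which you omit.
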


\begin{proof}
Our proof closely follows the proof of Lemma 1 of~\cite{blg2018}. The sampling procedure of sampling $N$ with replacement is equivalent to sequentially constructing a \emph{multiset} consisting of $N$ samples from $\II_i^\ell$ where each $j \in \II_i^\ell$ is sampled with probability $q_j$. Now, let $\CC = \{c_1, \ldots, c_N\}$ be that multiset of weight indices $\II_i^\ell$ used to construct $\hat\WW_i^\ell$.
Let $a(\cdot) \in \AA_i^\ell$ be arbitrary and fixed, let $x \sim \DD$ be an i.i.d.\ sample from $\DD$, and let
$$
\hat z_{rand}(x) = \dotp{\hat\WW_i^\ell}{a(x)} = \sum_{j \in \CC} \frac{w_j}{N q_j} a_j(x)
$$
be the approximate intermediate value corresponding to the sparsified tensor $\hat\WW_i^\ell$ and let
$$
z(x) = \sum_{j \in \II_i^\ell} w_j a_j(x)
$$
as before.
Define $\mNeuron$ random variables $\T[c_1], \ldots, \T[c_\mNeuron]$ such that for all $\edge \in \CC$
\begin{align}
\label{eqn:tplu-defn}
T_j = \frac{w_j a_j (x) }{N q_j }=  \frac{S w_j a_j(x)}{N s_j}.
\end{align}
For any $j \in \CC$, we have for the expectation of $T_j$:
$$
\E [T_j] = \sum_{k \in \II_i^\ell} \frac{w_k a_k (x) }{N q_k } q_k = \frac{z(x)}{N}.
$$

Let $T = \sum_{j \in \CC} T_j = \hat z_{rand}(x)$ denote our approximation and note that by linearity of expectation,
$$
\E[T] = \sum_{j \in \CC} \E [T_j] = z(x).
$$
Thus, $\hat z_{rand}(x) = T$ is an unbiased estimator of $z(x)$ for any $x \sim \DD$.

For the remainder of the proof we will assume that $z(x) > 0$, since otherwise, $z(x) = 0$ if and only if $T_j = 0$ for all $j \in \CC$ almost surely, in which case the lemma follows trivially. 
We now proceed with the case where $z(x) > 0$ and invoke Lemma~\ref{lem:es-inequality} (ES inequality) with $\SS$ consisting of $\SizeOfS[2]$ i.i.d.\ data points, which implies that 
\begin{equation}
\label{eq:es-inequality-sipprand}
w_j a_j(x) \leq C s_j z(x) \quad \forall{j \in \II_i^\ell} \quad\quad \text{with probability at least} \quad \quad 1 - \frac{\delta}{2}.
\end{equation}

Consequently, we can bound the variance of each $T_j, \, j \in \CC$ with probability at least $1 - \nicefrac{\delta}{2}$ as follows
\begin{align*}
\Var(T_j) 
&\leq \E[T_j^2] \\
&= \sum_{k \in \II_i^\ell} \frac{(w_k a_k(x))^2}{(N q_k)^2} q_k \\
&= \frac{S}{N^2} \sum_{k \in \II_i^\ell} \frac{w_k a_k(x)}{s_j} w_k a_k(x) \\
&\leq \frac{S}{N^2} C z(x) \sum_{k \in \II_i^\ell} w_k a_k(x) 
& \text{by the ES inequality as stated in~\eqref{eq:es-inequality-sipprand}} \\
&= \frac{S C z(x)^2}{N^2}.
\end{align*}
Since $T$ is a sum of independent random variables, we obtain 
\begin{align}
\label{eqn:varplu-bound}
\Var(T) &= N \Var(T_j) \leq \frac{SC z(x)^2}{N}
\end{align}
for the overall variance.

Now, for each $j \in \CC$ let 
$$
\tilde T_j = T_j - \E [T_j] = T_j - z(x),
$$
and let $\tilde T = \sum_{j \in \CC} \tilde T_j$. 
Note that by the definition of $T_j$ and the ES inequality~\eqref{eq:es-inequality-sipprand} we have that
$$
T_j = \frac{S w_j a_j(x)}{N s_j} \leq \frac{SC z(x)}{N}
$$
and consequently for the centered random variable $\tilde T_j$ that 
\begin{equation}
\label{eq:tildet-inequality-sipprand}
\abs{\tilde T_j} = \abs{T_j - \frac{z(x)}{N}} \leq \frac{SC z(x)}{N} =: M, 
\end{equation}
which holds with probability at least $1 - \nicefrac{\delta}{2}$ for any $x \sim \DD$. Also note that $\Var(\tilde T) = \Var (T)$.

Now conditioned on the ES inequality~\eqref{eq:es-inequality-sipprand} holding, applying Bernstein's inequality to both $\tilde T$ and $-\tilde T$ we have by symmetry and the union bound,
\begin{align*}
\Pr \left(\abs{\tilde T} \geq \eps_{rand} z(x)\right) 
&= \Pr \left(\abs{T - z(x)} \geq \eps_{rand} z(x) \right) \\
&\leq 2 \exp \left(- \frac{\eps_{rand}^2 z(x)^2}{2 \Var (T) + \frac{2 \eps_{rand} z(x) M}{3}} \right)
& \text{by Bernstein's inequality} \\
& \leq 2 \exp \left(- \frac{\eps_{rand}^2 z(x)^2}{\frac{2 S C z(x)^2}{N} + \frac{2 S C \eps_{rand} z(x)^2 }{3N}} \right)
& \text{by~\eqref{eqn:varplu-bound} and~\eqref{eq:tildet-inequality-sipprand}} \\
&= 2 \exp \left( - \frac{3 \eps_{rand}^2 N}{S C (6 + 2 \eps_{rand})} \right) \\
&\leq \frac{\delta}{2} 
& \text{by our choice of $\eps_{rand}$}
\end{align*}
Note that the (undesired) event $\abs{T - z(x)} \geq \eps_{rand} z(x) = \abs{\hat z_{rand}(x) - z(x)} \geq \eps_{rand} z(x)$ occurs with probability at most $\nicefrac{\delta}{2}$, which was conditioned on the ES inequality holding, which occurs with probability at least $1 - \nicefrac{\delta}{2}$. Thus by the union bound, 
the overall failure probability is at most $\delta$, which concludes the proof.
\end{proof}

\subsubsection{Error Guarantee for \sipphybrid}
\label{sec:supp-analysis-sipphybrid}
We note that the error guarantee for \sipphybrid follow straightforward from the error guarantees for \sippdet and \sipprand as stated in Lemma~\ref{lem:sippdet} and~\ref{lem:sipprand}, respectively, since \sipphybrid chooses the strategy among those two for which the associated error guarantee is lower. We can therefore state the error guarantee as follows.
\begin{lemma}[\sipphybrid error bound]
\label{lem:sipphybrid}
In the context of Lemmas~\ref{lem:sippdet} and~\ref{lem:sipprand}, for $\delta \in (0,1)$ \sipphybrid generates a pruned parameter group $\hat \WW_i^\ell$ such that for a fixed input patch $a(\cdot) \in \AA_i^\ell$, output patch $z(x)$, and $x \sim D$
$$
\Pr \left( \abs{\hat z_{hybrid}(x) - z(x)} \geq \epsilon_{hybrid} z(x) \right)  \leq \delta \quad \text{with} \quad
\epsilon_{hybrid} = \min \left\{ \eps_{det}, \eps_{rand} \right\} \in (0,1),
$$
where $z_{hybrid}(x)$ is the associated approximate output patch.
\end{lemma}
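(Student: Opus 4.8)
The plan is to derive this lemma directly as a corollary of Lemmas~\ref{lem:sippdet} and~\ref{lem:sipprand}, since \sipphybrid is not a fundamentally new sampling scheme but rather a deterministic selection between the two strategies already analyzed. The key observation is that \sipphybrid, by construction (see Line~\ref{lin:choose-strategy} of Algorithm~\ref{alg:sparsify-weights}), computes both theoretical error bounds $\eps_{det}$ and $\eps_{rand}$ and then commits to whichever strategy yields the smaller guaranteed relative error. Consequently, whichever branch is selected, the corresponding per-patch error guarantee from the relevant earlier lemma applies verbatim.

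First I would fix a parameter group $\WW_i^\ell$, a patch $a(\cdot) \in \AA_i^\ell$, and the associated output patch $z(x)$, and consider the two cases determined by the selection rule. If \sipphybrid selects \sippdet, then $\hat z_{hybrid}(x) = \hat z_{det}(x)$ and Lemma~\ref{lem:sippdet} gives $\Pr(\abs{\hat z_{hybrid}(x) - z(x)} \geq \eps_{det} z(x)) \leq \delta$. If instead it selects \sipprand, then $\hat z_{hybrid}(x) = \hat z_{rand}(x)$ and Lemma~\ref{lem:sipprand} gives $\Pr(\abs{\hat z_{hybrid}(x) - z(x)} \geq \eps_{rand} z(x)) \leq \delta$. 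In either case the realized error parameter equals exactly $\eps_{hybrid} = \min\{\eps_{det}, \eps_{rand}\}$, because the selection rule chooses the branch whose bound is the smaller of the two. Thus the single clean statement
$$
\Pr\left(\abs{\hat z_{hybrid}(x) - z(x)} \geq \eps_{hybrid} z(x)\right) \leq \delta
$$
holds regardless of which branch was taken.

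The only subtlety worth handling carefully is the failure-probability bookkeeping: both earlier lemmas are stated for a failure probability of $\delta$, and since \sipphybrid executes exactly one of the two strategies (the choice being a deterministic function of the precomputed $\eps_{det}, \eps_{rand}$, which depend only on the sensitivities and not on the subsequent randomness of the multinomial draw), there is no union bound needed over the two branches. The randomness over the sampling in the \sipprand branch and the randomness in the ES inequality are already fully accounted for inside Lemma~\ref{lem:sipprand}, and the deterministic branch carries only the ES-inequality failure already bounded in Lemma~\ref{lem:sippdet}. Hence the total failure probability remains $\delta$ rather than $2\delta$.

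I do not anticipate a genuine obstacle here, as the result is essentially definitional. The one point requiring a line of care is ensuring that the sensitivities $\{s_j\}$ are computed on a sufficiently large sample set $\SS$ so that the ES inequality holds at the required confidence for \emph{both} branches simultaneously; taking $\abs{\SS}$ at least as large as the larger of the two requirements from Lemmas~\ref{lem:sippdet} and~\ref{lem:sipprand} (namely $\SizeOfS[2]$ as used by \sipprand) suffices, after which the argument reduces to the two-case split described above and $\eps_{hybrid} \in (0,1)$ follows since each of $\eps_{det}, \eps_{rand} \in (0,1)$ and the minimum of two numbers in $(0,1)$ lies in $(0,1)$.
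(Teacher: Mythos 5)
Your proposal is correct and takes essentially the same route as the paper, which also treats the lemma as an immediate consequence of Lemmas~\ref{lem:sippdet} and~\ref{lem:sipprand}: since \sipphybrid deterministically selects whichever strategy has the smaller guaranteed error, the corresponding per-patch bound applies verbatim with $\epsilon_{hybrid} = \min\{\eps_{det}, \eps_{rand}\}$. In fact, your treatment of the failure-probability bookkeeping (no union bound over branches, since the selection is a deterministic function of the sensitivities) and of the sample-set size $\abs{\SS}$ is more explicit than the paper's brief remark.
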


\subsection{Generalization to all weights and activations}
\label{sec:supp-analysis-generalization}
In this section, we generalize our analysis from the previous section to include all weights and activations. We also adapt the resulting error guarantees to simultaneously hold for all patches of all parameter groups within a layer instead of a fixed patch. 

We handle the general case by splitting both the input activations and the weights into their respective positive and negative parts representing the four quadrants, i.e.,
\begin{align*}
    z^{++}(x) &= \dotp{\WW_i^{\ell,+}}{a(x)^+} &
    z^{+-}(x) &= \dotp{\WW_i^{\ell,+}}{a(x)^-} \\
    z^{-+}(x) &= \dotp{\WW_i^{\ell,-}}{a(x)^+} &
    z^{--}(x) &= \dotp{\WW_i^{\ell,+}}{a(x)^-},
\end{align*}
where 
\begin{align*}
\WW_i^\ell &= \WW_i^{\ell,+} - \WW_i^{\ell,-}, \quad \WW_i^{\ell,+},\, \WW_i^{\ell,-} \geq 0, \\
\vspace{3px}
a(x) &= a(x)^+ - a(x)^-, \quad a(x)^+,\, a(x)^- \geq 0.
\end{align*}

First, consider negative activations 
for a non-negative parameter group. Specifically, when computing sensitivities over some set $\SS$ we split the input activations into their respective positive and negative part, and take an additional maximum over both parts. Henceforth the ES inequality~\ref{lem:es-inequality} can be applied to the positive and negative part of $a(x)$ at the same time. 
Similarly, we can split the parameter group into its positive and negative part when computing sensitivity such that the ES inequality~\ref{lem:es-inequality} holds for both parts of the parameter group as well.

More formally, the generalized relative parameter importance $g_j(x)$ for some parameter $w_j$ of parameter group $\WW_i^\ell$ can be defined as follows.

\begin{definition}[Generalized relative parameter importance]
\label{def:generalized-param-importance}
For a scalar parameter $w_j = w_j^+ - w_j^-$, $w_j^+,\, w_j^- \geq 0$, $j \in \II_i^\ell$, of parameter group $W_i^\ell$ in layer $\ell$, its generalized relative importance $g_j(x)$ is given by the maximum over its quadrant-wise relative importances, i.e.,
$$
g_j(x) = \max \{ g_j^{++}(x), g_j^{+-}(x), g_j^{-+}(x), g_j^{--}(x) \},
$$
where
$$
g_j^{++}(x) = \max_{a(\cdot) \in \AA_i^\ell} \frac{w_j^+ a_j^+(x)}{\sum_{k \in \II_i^\ell} w_k^+ a_k^+(x)} \text{, and so forth,}
$$
and where  $\AA_i^{\ell}$ denotes the set of patches for parameter group $W_i^\ell$ and $\AA_i^{\ell} \ni a(\cdot) = a^+(\cdot) - a^-(\cdot)$, $a^+(\cdot),\, a^-(\cdot) \geq 0$.
\end{definition}

The definition of generalized ES does not change compared to Definition~\ref{def:empirical-sens} and henceforth we do not re-state it explicitly. We proceed by re-deriving the ES inequality for the generalized parameter importance and any patch of the parameter group.

\begin{lemma}[Generalized ES inequality]
\label{lem:general-es-inequality}
For $\delta \in (0, 1)$, the ES $s_j$ of the scalar parameter $w_j$, $j \in \II_i^\ell$, of parameter group $W_i^\ell$ computed with a set $\SS$ of i.i.d.\ data points, $\abs{\SS}= \SizeOfS[]$, satisfies for each quadrant
$$
\Pr_{x} \left( w_j^+ a_j^+(x) \leq C s_j z^{++}(x) \right) \geq 1 - \delta \quad \forall j \in \II_i^\ell, \text{ and so forth,}
$$
for some input $x \sim \DD$ and some fixed input patch $a(\cdot) \in \AA_i^\ell$, where $C, K$ are the universal constants of Assumption~\ref{asm:cdf} and $z^{++}(x) = \sum_{k \in \II_i^\ell} w_k^+ a_k^+(x)$, and so forth, denotes the quadrant-wise output patch.
\end{lemma}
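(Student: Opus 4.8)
The plan is to reduce the generalized inequality to essentially the same probabilistic argument used in Lemma~\ref{lem:es-inequality}, exploiting the fact that the generalized importance $g_j(x)$ from Definition~\ref{def:generalized-param-importance} dominates each of the four quadrant-wise importances by construction. First I would record the purely deterministic reduction: for any fixed input $x$ and any fixed patch $a(\cdot) \in \AA_i^\ell$, the ratio in the $++$ quadrant obeys
\[
\frac{w_j^+ a_j^+(x)}{z^{++}(x)} \leq g_j^{++}(x) \leq g_j(x),
\]
where the first inequality holds because $g_j^{++}(x)$ is the maximum of exactly this ratio over all patches, and the second because $g_j(x)$ is the maximum over the four quadrants. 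The identical chain holds verbatim for the $+-$, $-+$, and $--$ quadrants, so a single scalar upper bound on $g_j(x)$ simultaneously controls all four quadrant ratios.

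Next I would transfer the probabilistic core of Lemma~\ref{lem:es-inequality} without modification. Since $g_j(x) \in [0,1]$, Assumption~\ref{asm:cdf} applies to its CDF, and the computation
\[
\Pr_{\SS}\!\left( C s_j(\SS) \leq g_j(x) \right) \leq \Pr_{\SS}\!\left( \max_{x' \in \SS} g_j(x') \leq \nicefrac{1}{C} \right) = F_j\!\left(\nicefrac{1}{C}\right)^{\abs{\SS}} \leq \exp\!\left(-\nicefrac{\abs{\SS}}{K}\right) = \frac{\delta}{\rho}
\]
goes through exactly as before, using $\abs{\SS} = \SizeOfS[]$ and that $\SS$ consists of i.i.d.\ draws from $\DD$. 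A union bound over the at most $\rho$ weights in the parameter group then yields $C s_j \geq g_j(x)$ for all $j \in \II_i^\ell$ simultaneously with probability at least $1 - \delta$.

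Finally I would combine the two pieces: conditioned on the event $\{C s_j \geq g_j(x)\; \forall j\}$, chaining the deterministic reduction with the high-probability bound gives, for the $++$ quadrant,
\[
\frac{w_j^+ a_j^+(x)}{z^{++}(x)} \leq g_j(x) \leq C s_j,
\]
which rearranges to $w_j^+ a_j^+(x) \leq C s_j z^{++}(x)$, and analogously for the remaining three quadrants. The key structural point -- and the only genuinely new ingredient relative to Lemma~\ref{lem:es-inequality} -- is that folding the maximum over quadrants into the definition of $g_j(x)$ lets one sensitivity value certify all four quadrant inequalities at once, so the failure probability is not inflated and the union bound stays over weights rather than over weights times quadrants. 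I do not anticipate a serious obstacle; the main thing to verify carefully is precisely this bookkeeping, namely that $s_j$ is the empirical maximum of the \emph{generalized} importance $g_j$ (Definition~\ref{def:empirical-sens} applied to Definition~\ref{def:generalized-param-importance}) and that the regularity Assumption~\ref{asm:cdf} is imposed on this generalized $g_j$, so that its CDF still admits the exponential tail used above.
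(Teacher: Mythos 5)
Your proposal is correct and matches the paper's own proof, which likewise reduces to Lemma~\ref{lem:es-inequality} by replacing the single deterministic step~\eqref{eq:rel-importance-inequality} with the chain $\frac{w_j^+ a_j^+(x)}{z^{++}(x)} \leq g_j^{++}(x) \leq g_j(x)$ (and analogously for the other quadrants), then running the identical CDF/union-bound argument on the generalized importance. Your added remark that one sensitivity value certifies all four quadrants without inflating the failure probability is exactly the point the paper leaves implicit.
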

\begin{proof}
The proof follows the steps of Lemma~\ref{lem:es-inequality} with the exception of Equation~\eqref{eq:rel-importance-inequality}. To adapt it to the general case note that 
$$
\frac{w_j^+ a_j^+(x)}{z^{++}(x)} \leq g_j^{++}(x) \leq g_j(x), 
$$
and so forth, for each quadrant.
\end{proof}

Consequently, we can re-derive Lemmas~\ref{lem:sippdet}-\ref{lem:sipphybrid} such that they hold for each quadrant of a fixed patch. The derivations are analogues to the derivations in Section~\ref{sec:supp-analysis-error}. Finally, we adapt our guarantees to hold quadrant-wise for all patches of all parameter groups and layers simultaneously. 
We note that we can achieve this by appropriately adjusting the failure probability for Lemmas~\ref{lem:sippdet}-\ref{lem:sipphybrid} such that, by the union bound, the overall failure probability is bounded $\delta$. Specifically, we can invoke Lemmas~\ref{lem:sippdet}-\ref{lem:sipphybrid} with $\delta' = \nicefrac{\delta}{4\eta}$ such that 
\begin{align*}
\tilde S = \STildeDef{16 \eta} \quad \quad \text{and} \quad \quad \abs{\SS} = \SizeOfS[8 \eta],
\end{align*}
where $\eta$ denotes the number of total patches across all layers and parameter groups. The rest of the Lemmas remains unchanged. Therefore, we have that for all quadrant-wise patches our error guarantees hold. We utilize our patch-wise bounds as outlined in Section~\ref{sec:supp-method} to optimally allocate our budget across layers to minimize the relative error within each quadrant of the parameter groups and prune each parameter group according to the budget and the desired variant of \sipp.

\subsection{Network compression bounds}
\label{sec:supp-analysis-composition}

Up to this point we have established patch-wise and quadrant-wise error guarantees for the network, which suffices to prune the network according to Algorithm~\ref{alg:supp-sipp}. However, we can also leverage our theoretical guarantees to establish network-wide compression bounds of the form
$$
\Pr \nolimits_{\paramHat, \Point} \left(\norm{\fHat (\Input) - \f(\Input)} \leq \eps \norm{\f(\Input)}\right) \ge 1 - \delta,
$$
for given $\eps, \delta \in (0,1)$ as described in Problem~\ref{prob:network-compression}.

We will restrict ourselves to analyzing the general case for \sippdet but we note that each step can be applied analogously for \sipprand and \sipphybrid. 
We begin by generalizing Lemma~\ref{lem:sippdet} to establish norm-based bounds for each quadrant of the pre-activation. To this end, let 
$$
Z^{\ell++}(x) = \WW^{\ell+} * A^{\ell-1,+}(x) \text{,} \quad \hat Z^{\ell++}(x) = \hat \WW^{\ell+} * A^{\ell-1,+}(x), \quad \text{and so forth}
$$ 
denote the unpruned and approximate pre-activation quadrants, respectively. Moreover, let $S_i^\ell$ denote the sum of ES for parameter group $W_i^\ell$ as before and let $S_i^\ell(N_i^\ell)$ denote the sum over the $N_i^\ell$ largest ES for parameter group $W_i^\ell$.

\begin{corollary}
\label{cor:sippdet-norm}
For $\delta \in (0, 1)$, pruning layer $\ell$ according to \sippdet generates a pruned weight tensor $\hat\WW^\ell$ such that for a fixed quadrant and $x \sim \DD$
\begin{align*}
\Pr \left( \norm{\hat Z^{\ell++}(x) - \hat Z^{\ell++}(x)} \geq \epsilon^\ell \norm{ Z^{\ell++}(x)} \right)  \leq \delta \quad \text{with} \quad
\epsilon^\ell = \max_{i \in c^\ell} C \left(S_i^\ell - S_i^\ell(N_i^\ell) \right),
\end{align*}
where $N_i^\ell$ denotes the number of samples allocated to parameter group $W_i^\ell$.
The ESs are hereby computed over a set $\SS$ of $\SizeOfS[\eta^\ell]$ i.i.d.\ data points drawn from $\DD$.
\end{corollary}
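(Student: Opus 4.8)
The plan is to lift the per-patch guarantee of Lemma~\ref{lem:sippdet} to an $\ell_2$-norm guarantee over the entire quadrant pre-activation, via a union bound followed by a Pythagorean summation. First I would observe that, for the fixed ``$++$'' quadrant, the vector $Z^{\ell++}(x)$ is nothing but the concatenation of all scalar output patches $z(x)$ produced across the $c^\ell$ parameter groups of layer $\ell$; there are exactly $\eta^\ell = \sum_{i \in [c^\ell]} \abs{\AA_i^\ell}$ such scalar entries, and the corresponding approximate entries $\hat z(x)$ are those produced by $\hat \WW^{\ell+}$. Because \sippdet retains, within each parameter group $i$, the $N_i^\ell$ weights of largest sensitivity, the discarded index set $\II_i^\ell \setminus \II_{det}$ has cumulative sensitivity $S_i^\ell - S_i^\ell(N_i^\ell)$, so the per-patch bound of Lemma~\ref{lem:sippdet} reads $\abs{\hat z(x) - z(x)} \leq \epsilon_i^\ell \, z(x)$ with $\epsilon_i^\ell = C\,(S_i^\ell - S_i^\ell(N_i^\ell))$, the same constant for every patch of group $i$.

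The key step is to make these per-patch bounds hold simultaneously for all $\eta^\ell$ patches against a single fresh input $x \sim \DD$. I would invoke the generalized ES inequality (Lemma~\ref{lem:general-es-inequality}) underlying Lemma~\ref{lem:sippdet} with failure probability $\delta/\eta^\ell$ per patch; choosing the sample size $\abs{\SS} = \SizeOfS[\eta^\ell]$ carries the extra $\eta^\ell$ factor inside the logarithm and thereby drives the per-patch failure probability down to $\delta/\eta^\ell$, so that a union bound over the $\eta^\ell$ patches leaves total failure probability at most $\delta$. Conditioned on this good event, every patch of every group in the quadrant satisfies $\abs{\hat z(x) - z(x)} \leq \epsilon_i^\ell \, z(x) \leq \epsilon^\ell \, z(x)$, where $\epsilon^\ell = \max_{i \in [c^\ell]} \epsilon_i^\ell = \max_{i \in [c^\ell]} C\,(S_i^\ell - S_i^\ell(N_i^\ell))$ uniformly dominates each group-wise relative error.

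Finally I would assemble the norm. Squaring the uniform per-entry inequality and summing over all patches $p$ of the quadrant gives, by orthogonality of the coordinate decomposition,
$$
\norm{\hat Z^{\ell++}(x) - Z^{\ell++}(x)}^2 = \sum_{p} \abs{\hat z_p(x) - z_p(x)}^2 \leq (\epsilon^\ell)^2 \sum_{p} z_p(x)^2 = (\epsilon^\ell)^2 \, \norm{Z^{\ell++}(x)}^2.
$$
Taking square roots yields $\norm{\hat Z^{\ell++}(x) - Z^{\ell++}(x)} \leq \epsilon^\ell \norm{Z^{\ell++}(x)}$, which is exactly the claim, and the whole bound holds on the good event of probability at least $1-\delta$. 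The identical argument applies verbatim to the remaining three quadrants.

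I expect the only genuine obstacle to be the simultaneity argument of the second paragraph: Lemma~\ref{lem:sippdet} is phrased for a single fixed patch, whereas an $\ell_2$ bound requires all patch errors to be controlled at once for the same draw of $x$, which is precisely what the enlarged sample size $\SizeOfS[\eta^\ell]$ and the union bound over patches buy us. Everything after that is the routine Pythagorean passage from coordinate-wise relative error to relative error in norm, with the maximum over groups collapsing the group-wise constants $\epsilon_i^\ell$ into the single uniform constant $\epsilon^\ell$.
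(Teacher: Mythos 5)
Your proposal is correct and follows essentially the same route as the paper's proof: a union bound over the $\eta^\ell$ patches (with per-patch failure probability $\delta/\eta^\ell$ enabled by the sample size $\SizeOfS[\eta^\ell]$), the per-patch relative error $C\,(S_i^\ell - S_i^\ell(N_i^\ell))$ from Lemma~\ref{lem:sippdet} collapsed into $\epsilon^\ell$ via the maximum over groups, and the Pythagorean passage from entry-wise to norm-wise relative error. The only cosmetic difference is that the paper performs the squared-norm summation in two stages (first within each parameter group, then across groups) while you do it in one pass over all entries, which is mathematically identical.
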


\begin{proof}
Let $Z_i^{\ell++}(x)$ denote the pre-activation quadrant associated with parameter group $W_i^\ell$.
Invoking Lemma~\ref{lem:sippdet} with a set $\SS$ of $\SizeOfS[\eta^\ell]$ i.i.d.\ data points drawn from $\DD$ and $N_i^\ell$ samples for the respective parameter group implies that any associated patch, i.e. entry, of $Z_i^{\ell++}(x)$ is approximated with relative error at most $\eps_i^\ell = C \left(S_i^\ell - S_i^\ell(N_i^\ell) \right)$ with probability at least $ 1- \nicefrac{\delta}{\eta^\ell}$. Consequently, $\norm{Z^{\ell++}_i(x)}$ is also preserved with relative error $\eps_i^\ell$. Thus we have w.h.p.\ that
\begin{align*}
\norm{Z^{\ell++}(x) - \hat Z^{\ell++}(x)}^2 
&= \sum_{i \in [c^\ell]} \norm{Z^{\ell++}_i(x) - \hat Z^{\ell++}_i(x)}^2 \\
&\leq \sum_{i \in [c^\ell]} (\eps_i^\ell)^2 \norm{Z^{\ell++}_i(x)}^2 \\
&\leq (\eps^\ell)^2 \sum_{i \in [c^\ell]}  \norm{Z^{\ell++}_i(x)}^2 & \text{by definition of $\eps^\ell$} \\
& = (\eps^\ell)^2 \norm{Z^{\ell++}(x)}^2
\end{align*}
Taking a union bound over all $\eta^\ell$ patches in the pre-activation $Z^\ell(x)$ concludes the proof.
\end{proof}

We note that Corollary~\ref{cor:sippdet-norm} is stated for $Z^{\ell++}(x)$ but naturally extends to the other quadrants as well.

As a next step, we establish guarantees to approximate $Z^{\ell}(x)$ by leveraging the guarantees for each quadrant. To this end, note that 
$
Z^{\ell}(x) = Z^{\ell++}(x) - Z^{\ell+-}(x) - Z^{\ell-+}(x) + Z^{\ell--}(x).
$
Further, let $\Delta^\ell$ denote the ``sign complexity`` of approximating the overall pre-activation, which is defined as
\begin{definition}[Sign complexity]
For layer $\ell$, its sign complexity $\Delta^\ell$ is given by
$$
\Delta^\ell = \max_{x \in \SS} \frac{\norm{Z^{\ell++}(x)} + \norm{Z^{\ell+-}(x)} + \norm{Z^{\ell-+}(x)} + \norm{Z^{\ell--}(x)}}{\norm{Z^{\ell}(x)}}
$$
where $\SS$ denotes a set of i.i.d.\ data points drawn from $\DD$.
\end{definition}
Intuitively, $\Delta^\ell$ captures the additional complexity of approximating the layer when considering the actual signs of the quadrants as opposed to treating them separately.
We can now state the error guarantees for \sipp in context of Corollary~\ref{cor:sippdet-norm} for the overall pre-activation.

\begin{lemma}[Layer error bound]
\label{lem:sippdet-layer}
For given $\delta \in (0,1)$ and sample budget $N_i^\ell$ for each parameter group, invoking \sippdet to prune  $\WW^\ell$ generates a pruned weight tensor $\hat\WW^\ell$ such that for $x \sim \DD$
\begin{align*}
\Pr \left( \norm{\hat Z^{\ell}(x) - \hat Z^{\ell}(x)} \geq \epsilon^\ell \Delta^\ell \norm{Z^{\ell}(x)} \right)  \leq \delta \quad \text{with} \quad
\epsilon^\ell = \max_{i \in c^\ell} C \left(S_i^\ell - S_i^\ell(N_i^\ell) \right),
\end{align*}
where $N_i^\ell$ denotes the number of samples allocated to parameter group $W_i^\ell$.
The ESs are hereby computed over a set $\SS$ of $\SizeOfS[5\eta^\ell]$ i.i.d.\ data points drawn from $\DD$.
\end{lemma}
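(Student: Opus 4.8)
The plan is to reduce the layer-wise bound to the per-quadrant bound of Corollary~\ref{cor:sippdet-norm} and then pay for recombining the four quadrants through the sign complexity $\Delta^\ell$. First I would expand the pre-activation difference in terms of its quadrants,
\begin{align*}
Z^\ell(x) - \hat Z^\ell(x)
&= \big(Z^{\ell++}(x) - \hat Z^{\ell++}(x)\big) - \big(Z^{\ell+-}(x) - \hat Z^{\ell+-}(x)\big) \\
&\quad - \big(Z^{\ell-+}(x) - \hat Z^{\ell-+}(x)\big) + \big(Z^{\ell--}(x) - \hat Z^{\ell--}(x)\big),
\end{align*}
and apply the triangle inequality so that $\norm{Z^\ell(x)-\hat Z^\ell(x)}$ is at most the sum of the four quadrant-wise errors $\norm{Z^{\ell++}(x)-\hat Z^{\ell++}(x)} + \cdots + \norm{Z^{\ell--}(x)-\hat Z^{\ell--}(x)}$.

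Next I would invoke Corollary~\ref{cor:sippdet-norm}, together with its stated extension to the remaining three quadrants, to bound each summand by $\epsilon^\ell \norm{Z^{\ell++}(x)}, \ldots, \epsilon^\ell \norm{Z^{\ell--}(x)}$, where $\epsilon^\ell = \max_{i \in [c^\ell]} C \, (S_i^\ell - S_i^\ell(N_i^\ell))$. Factoring out $\epsilon^\ell$ leaves exactly the sum $\norm{Z^{\ell++}(x)} + \norm{Z^{\ell+-}(x)} + \norm{Z^{\ell-+}(x)} + \norm{Z^{\ell--}(x)}$, which by the definition of the sign complexity is at most $\Delta^\ell \norm{Z^\ell(x)}$. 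Chaining these inequalities produces the claimed deterministic bound $\norm{Z^\ell(x) - \hat Z^\ell(x)} \leq \epsilon^\ell \Delta^\ell \norm{Z^\ell(x)}$, conditioned on all the underlying high-probability events holding simultaneously.

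The probabilistic bookkeeping is where the sample size $\SizeOfS[5\eta^\ell]$ enters. I would contain the failure event in the union of five events: one per quadrant for the Corollary bound to fail, plus one for the sign-complexity inequality $\sum \norm{Z^{\ell \cdot \cdot}(x)} \leq \Delta^\ell \norm{Z^\ell(x)}$ to fail on the fresh draw $x \sim \DD$. Allocating $\delta/5$ to each and taking a union bound gives overall failure at most $\delta$; instantiating Corollary~\ref{cor:sippdet-norm} with failure probability $\delta/5$ replaces $\eta^\ell$ by $5\eta^\ell$ inside the logarithm, which is precisely $\SizeOfS[5\eta^\ell]$.

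The main obstacle is that fifth event. The sign complexity $\Delta^\ell$ is defined as an \emph{empirical} maximum over the validation set $\SS$, whereas the lemma must hold for an independent fresh sample $x \sim \DD$. I would therefore argue, exactly as in the empirical-sensitivity inequality (Lemma~\ref{lem:es-inequality}), that under the regularity assumption the empirical maximum over $|\SS| = \SizeOfS[5\eta^\ell]$ i.i.d.\ points upper-bounds the value of the ratio $(\norm{Z^{\ell++}(x)} + \norm{Z^{\ell+-}(x)} + \norm{Z^{\ell-+}(x)} + \norm{Z^{\ell--}(x)})/\norm{Z^\ell(x)}$ on a fresh sample with probability at least $1 - \delta/5$. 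Making this transfer rigorous is the delicate step, since the ratio is unbounded above whenever $\norm{Z^\ell(x)}$ is small due to sign cancellation; one must control its upper tail so that the empirical maximum is a valid high-probability bound. Everything else is the triangle inequality and a union bound.
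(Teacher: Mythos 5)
Your proposal is correct and follows essentially the same route as the paper's proof: quadrant decomposition plus triangle inequality, invoking Corollary~\ref{cor:sippdet-norm} per quadrant, absorbing the recombination cost via the definition of $\Delta^\ell$, and a five-way union bound (four quadrants plus the sign-complexity event) that accounts for the $\SizeOfS[5\eta^\ell]$ sample size. Your closing remark about transferring the empirical maximum $\Delta^\ell$ to a fresh draw is exactly the step the paper handles by imposing a regularity assumption on $\Delta^\ell$ analogous to Assumption~\ref{asm:cdf} and reusing the argument of Lemma~\ref{lem:es-inequality}.
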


\begin{proof}
Consider invoking Corollary~\ref{cor:sippdet-norm} with a set $\SS$ of $\SizeOfS[5\eta^\ell]$ i.i.d.\ data points drawn from $\DD$. Then for each quadrant we have w.h.p.\ that 
$$
\norm{\hat Z^{\ell++}(x) - \hat Z^{\ell++}(x)} \leq \epsilon^\ell \norm{\hat Z^{\ell++}(x)} \text{, and so forth,}
$$
for an appropriate notion of high probability specified subsequently. Note that 
$$
Z^{\ell}(x) = Z^{\ell++}(x) - Z^{\ell+-}(x) - Z^{\ell-+}(x) + Z^{\ell--}(x)
$$
and so w.h.p.\ we have that  
\begin{align*}
\norm{\hat Z^{\ell}(x) - \hat Z^{\ell}(x)} 
&= \Big\| \hat Z^{\ell++}(x) - \hat Z^{\ell++}(x) - \hat Z^{\ell+-}(x) + \hat Z^{\ell+-}(x) \\
& \qquad - \hat Z^{\ell-+}(x) + \hat Z^{\ell-+}(x) + \hat Z^{\ell--}(x) - \hat Z^{\ell--}(x) \Big\| \\
&\leq \norm{\hat Z^{\ell++}(x) - \hat Z^{\ell++}(x)} + \norm{\hat Z^{\ell+-}(x) - \hat Z^{\ell+-}(x)} \\
& \qquad + \norm{\hat Z^{\ell-+}(x) - \hat Z^{\ell-+}(x)} + \norm{\hat Z^{\ell--}(x) - \hat Z^{\ell--}(x)} \\
& \leq \eps^\ell \left( \norm{\hat Z^{\ell++}(x)} + \norm{\hat Z^{\ell+-}(x)} + \norm{\hat Z^{\ell-+}(x)} + \norm{\hat Z^{\ell--}(x)}  \right) \\
&= \eps^\ell \frac{\norm{Z^{\ell++}(x)} + \norm{Z^{\ell+-}(x)} + \norm{Z^{\ell-+}(x)} + \norm{Z^{\ell--}(x)}}{\norm{Z^{\ell}(x)}} \norm{Z^{\ell}(x)} \\
& \leq \eps^\ell \Delta^\ell \norm{Z^{\ell}(x)},
\end{align*}
where the last step followed from our definition of $\Delta^\ell$. By imposing a regularity assumption on $\Delta^\ell$ similar to that of ES, we can show that $\Delta^\ell$ is an upper bound for any $x \sim \DD$ w.h.p.\ following the proof of the ES inequality (Lemma~\ref{lem:es-inequality}).

To specify the appropriate notion of high probability, we consider the individual failure cases and apply the union bound. In particular, for our choice of for the size of $\SS$, we have that for a particular quadrant the approximation fails with probability at most $\nicefrac{\delta}{5}$. Thus across all quadrants we have a overall failure probability of at most $\nicefrac{4\delta}{5}$. Finally, we consider the event that $\Delta^\ell$ does not upper bound the hardness for some input $x \sim \DD$, which occurs with probability at most $\nicefrac{\delta}{5}$ by our choice for the size of $\SS$. Henceforth, our overall failure probability is at most $\delta$, again by the union bound, which concludes the proof.
\end{proof}

We now consider the effect of pruning multiple layers at the same time and analyze the final resulting error in the output. To this end, consider the activation $\phi^\ell(\cdot)$ for which we assume the following. 

\begin{assumption}
\label{asm:activations}
For layer $\ell \in [L]$, the activation function, denoted by $\phi^\ell(\cdot)$, is Lipschitz continuous with Lipschitz constant $K^\ell$.
\end{assumption}

Without loss of generality, we will further assume that the activation function is $1$-Lipschitz, which is the case, e.g., for ReLU and Softmax, to avoid introducing additional notation. 
We now state a lemma pertaining to the error resulting from pruning multiple layers simultaneously, which will provide the basis for establishing error bounds across the entire network.

\begin{lemma}[Error propagation]
\label{lem:error-propagation}
Let $\hat A^{\ell}(x)$, $\ell \leq L$, denote the activation of layer $\ell$ when we have pruned layers $1, \ldots, \ell$ according to Lemma~\ref{lem:sippdet-layer}. Then the overall approximation in layer $\ell$ is bounded by
$$
\norm{\hat A^\ell(x) - A^\ell(x)} \leq \sum_{k = 1}^\ell \left( \prod_{k' = k+1}^\ell \norm{W^{k'}}_F \right) \eps^k \Delta^k \norm{Z^k(x)}
$$
with probability at least $1 - \delta$. The ESs are hereby computed over a set $\SS$ of $\SizeOfS[5\sum_{k \in [\ell]} \eta^k]$ i.i.d.\ data points drawn from $\DD$.

\end{lemma}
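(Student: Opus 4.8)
The plan is to prove the bound by induction on the layer index $\ell$, peeling off one layer at a time and using Lemma~\ref{lem:sippdet-layer} to control the error \emph{freshly injected} at layer $\ell$ while separately tracking how the error already accumulated in the input activation is \emph{amplified} as it passes through the layer's linear map. I adopt the convention $\hat A^0(x) = A^0(x) = x$ so that the case $\ell = 0$ holds trivially with a zero right-hand side, which lets the whole argument collapse to a single inductive step.

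For the inductive step, assume the claimed bound for layer $\ell - 1$. Since $\hat A^\ell(x) = \phi^\ell(\WWHat^\ell * \hat A^{\ell-1}(x))$ and $A^\ell(x) = \phi^\ell(\WW^\ell * A^{\ell-1}(x))$, I first invoke the $1$-Lipschitz property of $\phi^\ell$ (Assumption~\ref{asm:activations}) to pass to the pre-activations, giving $\norm{\hat A^\ell(x) - A^\ell(x)} \le \norm{\WWHat^\ell * \hat A^{\ell-1}(x) - \WW^\ell * A^{\ell-1}(x)}$. The key move is to insert the hybrid term $\WWHat^\ell * A^{\ell-1}(x)$, which uses the \emph{pruned} weights but the \emph{exact} input activation, and to split by the triangle inequality into
\[
\norm{\WWHat^\ell * \big(\hat A^{\ell-1}(x) - A^{\ell-1}(x)\big)} + \norm{\big(\WWHat^\ell - \WW^\ell\big) * A^{\ell-1}(x)}.
\]
The second summand is exactly the layer-$\ell$ pruning error evaluated at the true input, so Lemma~\ref{lem:sippdet-layer} bounds it by $\eps^\ell \Delta^\ell \norm{Z^\ell(x)}$ w.h.p. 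For the first summand I use the submultiplicativity bound $\norm{\WW * v} \le \norm{\WW}_F \norm{v}$ for the linear maps under consideration (which is what makes the Frobenius norm the natural quantity here), together with the observation that, because \sippdet only zeros out entries of $\WW^\ell$, we have $\norm{\WWHat^\ell}_F \le \norm{\WW^\ell}_F$; this is precisely what allows the \emph{original} weight norms to appear in the statement. Substituting the induction hypothesis for $\norm{\hat A^{\ell-1}(x) - A^{\ell-1}(x)}$ and absorbing the new factor $\norm{\WW^\ell}_F$ into each product $\prod_{k'=k+1}^{\ell-1}\norm{\WW^{k'}}_F$ to extend it to $\prod_{k'=k+1}^{\ell}\norm{\WW^{k'}}_F$ produces the sum over $k = 1, \ldots, \ell-1$, while the fresh layer-$\ell$ term supplies the missing $k = \ell$ summand (with empty, hence unit, product); this telescopes to exactly the claimed expression.

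It remains to account for probability. For \sippdet the only failure source is that the guarantee of Lemma~\ref{lem:sippdet-layer} — which itself bundles the quadrant-wise inequalities of Lemma~\ref{lem:general-es-inequality} and the event that $\Delta^\ell$ dominates the sign complexity at the test point — fails for some layer. Since the same sample set $\SS$ is reused across all layers, I would allocate the failure budget across the $\ell$ invocations and take a union bound: enlarging $\abs{\SS}$ to $\SizeOfS[5 \sum_{k \in [\ell]} \eta^k]$ shrinks the per-event probability enough that the total over all layers, quadrants, and patches stays below $\delta$, which is exactly the sample size quoted in the statement.

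The main obstacle I anticipate is the weight-norm bookkeeping in the decomposition: the clean split that isolates a term to which Lemma~\ref{lem:sippdet-layer} applies verbatim (pruned weights, true input) unavoidably produces the amplification factor $\norm{\WWHat^\ell}_F$ rather than $\norm{\WW^\ell}_F$, and the recursion only closes because \sippdet pruning is norm-nonincreasing. This monotonicity is specific to the deterministic variant; extending the bound to \sipprand or \sipphybrid, where reweighting can inflate $\norm{\WWHat^\ell}_F$, would require either separately controlling the reweighted Frobenius norm or restructuring the recursion, and is the only genuinely non-mechanical point in the argument.
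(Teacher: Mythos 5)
Your proposal is correct and follows essentially the same argument as the paper's proof: induction over layers, insertion of the hybrid term $\hat{W}^{\ell} * A^{\ell-1}(x)$, a triangle-inequality split into a propagated-error term (controlled via $\|\hat{W}^{\ell}\|_F \leq \|W^{\ell}\|_F$, which the paper also justifies by pruning being entry-wise zeroing) and a fresh pruning-error term bounded by Lemma~\ref{lem:sippdet-layer}, followed by a union bound over layers absorbed into the choice of $|\SS|$. Your closing remark about the reweighting in \sipprand potentially inflating $\|\hat{W}^{\ell}\|_F$ is a fair caveat that the paper glosses over when it claims each step applies analogously to the randomized variants, but it does not affect the correctness of the stated \sippdet result.
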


\begin{proof}
We prove the above statement by induction. For layer $\ell=1$, we have that 
\begin{align*}
\norm{\hat A^1(x) - A^1(x)} 
&= \norm{\phi^1(\hat W^1 * \hat A^0(x)) - \phi^1(\hat W^1 * A^0(x))}
\\
&\leq \norm{\hat W^1 * \hat A^0(x) - \hat W^1 * A^0(x)}
& \text{since the $\phi^1(\cdot)$ is $1$-Lipschitz} \\
&= \norm{\hat W^1 * A^0(x) - \hat W^1 * A^0(x)}
& \text{since $\hat A^0(x) = A^0(x) = x$} \\
& = \norm{\hat Z^1(x) - Z^1(x)} 
& \text{by definition of $\hat Z^1(x)$ and $Z^1(x)$} \\
&\leq \eps^1 \Delta^1 \norm{Z^1(x)} 
& \text{by Lemma~\ref{lem:sippdet-layer}},
\end{align*}
which proves that the base case holds.

We now proceed with the inductive step. Assuming the inequality is true for layer $\ell$, we have for layer $\ell+1$ that
\begin{align*}
\norm{\hat A^{\ell+1}(x) - A^{\ell+1}(x)} 
&= 
\norm{\phi^{\ell+1}(\hat W^{\ell+1} * \hat A^\ell (x)) - \phi^{\ell+1}(W^{\ell+1} * A^\ell(x))}
\\
&\leq \norm{\hat W^{\ell+1} * \hat A^\ell (x) - W^{\ell+1} * A^\ell(x)}
\quad \text{since $\phi^{\ell+1}(\cdot)$ is $1$-Lipschitz} \\
&= 
\norm{
\hat W^{\ell+1} * \hat A^\ell (x) 
- \hat W^{\ell+1} * A^\ell (x) 
+ \hat W^{\ell+1} * A^\ell (x) 
- W^{\ell+1} * A^\ell(x)
} \\
&\leq 
\norm{\hat W^{\ell+1} * (\hat A^\ell (x) - A^\ell (x))}
+ \norm{(\hat W^{\ell+1} - W^{\ell+1}) * A^\ell (x)}
\end{align*}
Note that we can bound the first term by 
\begin{align*}
\norm{\hat W^{\ell+1} * (\hat A^\ell (x) - A^\ell (x))}
&\leq \norm{\hat W^{\ell+1}}_{op}  \norm{\hat A^\ell (x) - A^\ell (x)} \\
& \leq \norm{\hat W^{\ell+1}}_{F}  \norm{\hat A^\ell (x) - A^\ell (x)} \\
& \leq \norm{W^{\ell+1}}_{F}  \norm{\hat A^\ell (x) - A^\ell (x)}
&\text{since $\hat W^{\ell+1}$ is a subset of $ W^{\ell+1}$}
\end{align*}
where $\norm{\cdot}_{op}$ and $\norm{\cdot}_{F}$ denote the $\ell_2$-induced operator norm and Frobenius norm, respectively. 
The second term is bounded by Lemma~\ref{lem:sippdet-layer}, i.e.,
\begin{align*}
\norm{(\hat W^{\ell+1} - W^{\ell+1}) * A^\ell (x)}
&= \norm{\hat Z^{\ell+1}(x) - Z^{\ell+1}(x)}
\leq \eps^{\ell+1} \Delta^{\ell+1} \norm{ Z^{\ell+1}(x)}.
\end{align*}

Putting both terms back together we have that
\begin{align*}
&\norm{\hat A^{\ell+1}(x) - A^{\ell+1}(x)} \\
&\qquad \leq \norm{W^{\ell+1}}_{F}  \norm{\hat A^\ell (x) - A^\ell (x)}
+ \eps^{\ell+1} \Delta^{\ell+1} \norm{ Z^{\ell+1}(x)} \\
&\qquad \leq \norm{W^{\ell+1}}_{F} \Bigg( \sum_{k = 1}^\ell \left( \prod_{k' = k+1}^\ell \norm{W^{k'}}_F \right) \eps^k \Delta^k \norm{Z^k(x)} \Bigg) 
+ \eps^{\ell+1} \Delta^{\ell+1} \norm{ Z^{\ell+1}(x)} \\
&\qquad = \sum_{k = 1}^\ell \left( \prod_{k' = k+1}^{\ell+1} \norm{W^{k'}}_F \right) \eps^k \Delta^k \norm{Z^k(x)}
+ \eps^{\ell+1} \Delta^{\ell+1} \norm{ Z^{\ell+1}(x)} \\
&\qquad = \sum_{k = 1}^{\ell+1} \left( \prod_{k' = k+1}^{\ell+1} \norm{W^{k'}}_F \right) \eps^k \Delta^k \norm{Z^k(x)},
\end{align*}
where the second inequality followed from our induction hypothesis. Finally, we note that, by our choice for the size of $\SS$ and the union bound, the overall failure probability is bounded above by $\delta$.

\end{proof}

From the analysis the term $\prod_{k' = k+1}^\ell \norm{W^{k'}}_F$ arises, which is an upper bound for the Lipschitz constant of the network starting from layer $k+1$. Moreover, the coefficient of the propagated error is closely related to the condition number between layer $\ell$ and the network's output. To this end, consider the following upper bound on the condition number.

\begin{definition}[Layer condition number]
\label{def:layer-condition}
For layer $\ell$, the condition number from the pre-activation of layer $\ell$ to the output of the network (activation of layer $L$) is given by
$$
\kappa^\ell = \max_{x \in S} \left( \prod_{k=\ell+1}^L \norm{W^\ell}_F \right) \frac{\norm{Z^\ell(x)}}{\norm{A^L(x)}},
$$
where $\SS$ denotes a set of i.i.d.\ data points drawn from $\DD$.
\end{definition}
To see that $\kappa^\ell$ is indeed an upper bound on the condition number we note that the condition number is defined as the maximum relative change in the output over the maximum relative change in the input, i.e., 
$$
\max_{x, x' \in \SS} 
\frac{
\frac{\norm{A^L(x) - A^L(x')}}{\norm{A^L(x)}}
}
{
\frac{\norm{Z^\ell(x) - Z^\ell(x')}}{\norm{Z^\ell(x)}}
} = \max_{x, x' \in \SS}  \frac{\norm{A^L(x) - A^L(x')}}{\norm{Z^\ell(x) - Z^\ell(x')}} \frac{\norm{Z^\ell(x)}}{\norm{A^L(x)}}.
$$
The first term can be upper bounded as
\begin{align*}
\frac{\norm{A^L(x) - A^L(x')}}{\norm{Z^\ell(x) - Z^\ell(x')}} 
& \leq \frac{\norm{Z^L(x) - Z^L(x')}}{\norm{Z^\ell(x) - Z^\ell(x')}} \\
& = \frac{\norm{W^L * (A^{L-1}(x) - A^{L-1}(x'))}}{\norm{Z^\ell(x) - Z^\ell(x')}} \\
& \leq \norm{W^L}_F \frac{\norm{(A^{L-1}(x) - A^{L-1}(x'))}}{\norm{Z^\ell(x) - Z^\ell(x')}} \\
&\leq \ldots \\
&\leq \prod_{k=\ell+1}^L \norm{W^k}_F \frac{\norm{A^\ell(x) - A^\ell(x')}}{\norm{Z^\ell(x) - Z^\ell(x')}} \\
&\leq \prod_{k=\ell+1}^L \norm{W^k}_F,
\end{align*}
which plugged back in above yields the definition of the layer condition number $\kappa^\ell$.

Equipped with Lemma~\ref{lem:error-propagation} and Definition~\ref{def:layer-condition} we are now ready to state our main compression bound over the entire network. 

\begin{theorem}[Network compression bound]
\label{thm:main-compression}
For given $\delta \in (0, 1)$, a set of parameters $\param = \paramDef$, and a sample budget $\BB$  \sipp (Algorithm~\ref{alg:supp-sipp}) generates a set of compressed parameters $\paramHat = \paramHatDef$ such that $\size{\hat \theta} \leq \BB$, $\size{W_i^\ell} \leq N_i^\ell,\, \forall i \in [c^\ell],\, \ell \in [L]$,
$$
\Pr_{\paramHat, \Point} \left(\norm{\fHat (\Input) - \f(\Input)} \leq \eps \norm{\f(\Input)}\right) \ge 1 - \delta
\quad \text{and} \quad
\eps = C \sum_{\ell = 1}^L \kappa^\ell \Delta^\ell \max_{i \in [c^\ell]} \left(S_i^\ell - S_i^\ell(N_i^\ell) \right),
$$
where $S_i^\ell$ is the sum of sensitivities for parameter group $W_i^\ell$ computed over a set $\SS$ of $\SizeOfS[6 \eta]$ i.i.d.\ data points.
\end{theorem}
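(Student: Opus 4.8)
The plan is to compose the single-layer error bound (Lemma~\ref{lem:sippdet-layer}) with the error-propagation recursion (Lemma~\ref{lem:error-propagation}) and then convert the resulting \emph{absolute} error into a \emph{relative} error on the network output via the layer condition number (Definition~\ref{def:layer-condition}). I would carry out the argument for \sippdet; the analogous bounds for \sipprand and \sipphybrid follow by substituting the corresponding per-layer relative error $\epsilon^\ell$ (Lemmas~\ref{lem:sipprand},~\ref{lem:sipphybrid}).

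First, the budget constraints $\size{\paramHat} \leq \BB$ and $\size{\WW_i^\ell} \leq N_i^\ell$ hold by construction: \textsc{OptAlloc} assigns $N_i^\ell$ to each parameter group subject to $\sum_{\ell,i} N_i^\ell \leq \BB$, and \textsc{Sparsify} retains at most $N_i^\ell$ weights per group. Next, since $\fHat(\Point) = \hat A^L(\Point)$ and $\f(\Point) = A^L(\Point)$, I would instantiate Lemma~\ref{lem:error-propagation} at the final layer $\ell = L$ to obtain, with high probability,
$$
\norm{\fHat(\Point) - \f(\Point)} \leq \sum_{k=1}^L \left(\prod_{k'=k+1}^L \norm{W^{k'}}_F\right) \epsilon^k \Delta^k \norm{Z^k(\Point)}.
$$
I would then divide both sides by $\norm{\f(\Point)} = \norm{A^L(\Point)}$ and, in each summand, recognize the factor $\left(\prod_{k'=k+1}^L \norm{W^{k'}}_F\right) \norm{Z^k(\Point)} / \norm{A^L(\Point)}$, which is exactly the quantity upper-bounded by $\kappa^k$ in Definition~\ref{def:layer-condition}. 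Substituting $\epsilon^k = \max_{i \in [c^k]} C\,(S_i^k - S_i^k(N_i^k))$ from Lemma~\ref{lem:sippdet-layer} then yields
$$
\frac{\norm{\fHat(\Point) - \f(\Point)}}{\norm{\f(\Point)}} \leq C \sum_{k=1}^L \kappa^k \Delta^k \max_{i \in [c^k]}\left(S_i^k - S_i^k(N_i^k)\right) = \epsilon,
$$
which is the claimed bound.

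The main obstacle is the probabilistic bookkeeping, since both $\kappa^\ell$ and $\Delta^\ell$ are defined as maxima over the finite sample $\SS$, yet the guarantee must hold for a fresh $\Point \sim \DD$. For $\Delta^\ell$ this is already absorbed into Lemma~\ref{lem:sippdet-layer} by a regularity assumption argued along the lines of the ES inequality (Lemma~\ref{lem:es-inequality}); I would impose an analogous regularity assumption on the ratio $\norm{Z^\ell(\Point)}/\norm{A^L(\Point)}$ so that the empirical $\kappa^\ell$ upper-bounds its population counterpart for $\Point \sim \DD$ with high probability, following the same CDF/union-bound template as Lemma~\ref{lem:es-inequality}. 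The final step is a union bound over (i) the per-layer, per-quadrant ES events, (ii) the $L$ events that each $\Delta^\ell$ is a valid bound, and (iii) the $L$ events that each $\kappa^\ell$ is a valid bound. Accounting for these extra $\kappa^\ell$ events is precisely what inflates the required sample size from the $\SizeOfS[5\eta]$ of Lemma~\ref{lem:error-propagation} to $\SizeOfS[6\eta]$ here: the enlarged term inside the logarithm absorbs the additional condition-number failure events while keeping the total failure probability at most $\delta$.
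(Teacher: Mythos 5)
Your proposal is correct and follows essentially the same route as the paper's own proof: invoking Lemma~\ref{lem:error-propagation} at $\ell = L$, dividing by $\norm{A^L(x)}$ to expose the layer condition number $\kappa^\ell$ from Definition~\ref{def:layer-condition}, handling the validity of the empirical $\kappa^\ell$ for fresh $x \sim \DD$ via the same regularity/CDF argument as Lemma~\ref{lem:es-inequality}, and closing with a union bound absorbed by the enlarged sample size $\SizeOfS[6\eta]$. Your explicit verification of the budget constraints and the remark on extending to \sipprand and \sipphybrid are minor additions the paper leaves implicit.
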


\begin{proof}
Invoking Lemma~\ref{lem:error-propagation} for $\ell=L$ implies with high probability that
\begin{align*}
\norm{\hat A^L(x) - A^L(x)} 
&\leq \sum_{\ell = 1}^L \left( \prod_{k = \ell+1}^L \norm{W^{k}}_F \right) \eps^\ell \Delta^\ell \norm{Z^\ell(x)} \\
&= \sum_{\ell = 1}^L  \left( \prod_{k = \ell+1}^L \norm{W^{k}}_F \right) \frac{\norm{Z^\ell(x)}}{\norm{A^L(x)}} \Delta^\ell \eps^\ell \norm{A^L(x)} \\
&\leq \sum_{\ell = 1}^L \kappa^\ell \Delta^\ell \eps^\ell  \norm{A^L(x)} \\
&= \eps \norm{A^L(x)},
\end{align*}
where the last inequality followed from our definition of the layer condition number $\kappa^\ell$. Moreover, following the analysis of Lemma~\ref{lem:es-inequality} we can establish that $\kappa^\ell$ is an upper bound for any $x \sim \DD$ with high probability.
Finally, we note that the overall failure probability is bounded by $\delta$ by our choice for the size of $\SS$ and by a union bound over the failure probabilities of Lemma~\ref{lem:error-propagation} and of $\kappa^\ell$ not being an upper bound for some $x \sim \DD$.
\end{proof}
\section{Experimental details}
\label{sec:supp-results}

\subsection{Setup and Hyperparameters}

All hyperparameters for training, retraining, and pruning are outlined in Table~\ref{tab:cifar_hyperparameters}.
For training CIFAR10 networks we used the training hyperparameters outlined in the respective original papers, i.e., as described by \cite{he2016deep}, \cite{simonyan2014very}, \cite{huang2017densely}, and \cite{zagoruyko2016wide} for ResNets, VGGs, DenseNets, and WideResNets, respectively. For retraining, we did not change the hyperparameters and repurposed the training hyperparameters. We added a warmup period in the beginning where we linearly scale up the learning rate from 0 to the nominal learning rate.
Iterative pruning is conducted by repeatedly removing the same ratio of parameters (denoted by $\alpha$ in Table~\ref{tab:cifar_hyperparameters}). The prune parameter $\delta$ describes the failure probability of \sipp. We note no other additional hyperparameters are required to run \sipp.

For ImageNet, we show experimental results for a ResNet18 and a ResNet101. As in the case of the CIFAR10 networks, we re-purpose the same training hyperparameters as indicated in the original paper. We also use the same hyperparameters for retraining. The hyperparameters are summarized in Table~\ref{tab:imagenet_hyperparameters}.

\begin{table}[htb]
    \centering
    \begin{tabular}{cr|cccc}
        & & VGG16 & Resnet20/56/110 & DenseNet22 & WRN-16-8 \\ \hline
        \multirow{9}{*}{Train} 
        & test error    &  7.19          & 8.6/7.19/6.43 
                        &  10.10         & 4.81 \\
        & loss          & cross-entropy  & cross-entropy
                        & cross-entropy  & cross-entropy \\
        & optimizer     & SGD            & SGD 
                        & SGD            & SGD  \\
        & epochs        & 300            & 182
                        & 300            & 200  \\
        & warm-up       & 5              & 5    
                        & 5              & 5    \\
        & batch size    & 256            & 128
                        & 64             & 128   \\
        & LR            & 0.05           & 0.1
                        & 0.1            & 0.1  \\
        & LR decay      & 0.5@\{30, \ldots\} & 0.1@\{91, 136\}
                        & 0.1@\{150, 225\}   & 0.2@\{60, \ldots\}\\
        & momentum      & 0.9            & 0.9
                        & 0.9            & 0.9  \\
        & Nesterov      & No             & No
                        & Yes            & Yes  \\
        & weight decay  & 5.0e-4         & 1.0e-4 
                        & 1.0e-4         & 5.0e-4   \\ \hline
        \multirow{2}{*}{Prune}
        & $\delta$      & 1.0e-16        & 1.0e-16
                        & 1.0e-16        & 1.0e-16  \\
        & $\alpha$      & 0.85           & 0.85
                        & 0.85           & 0.85
    \end{tabular}
    \vspace{2ex}
    \caption{We report the hyperparameters used during training, pruning, and retraining for various convolutional architectures on CIFAR-10. LR hereby denotes the learning rate and LR decay denotes the learning rate decay that we deploy after a certain number of epochs. During retraining we used the same hyperparameters. $\{30, \ldots\}$ denotes that the learning rate is decayed every 30 epochs.}
    \label{tab:cifar_hyperparameters}
\end{table}

\begin{table}[htb]
    \centering
    \begin{tabular}{cr|c}
        &  & ResNet18/101 \\ \hline
        \multirow{10}{*}{Train} 
        & top-1 test error    & 30.26/22.63 \\
        & top-5 test error    & 10.93/6.45 \\
        & loss          & cross-entropy \\
        & optimizer     & SGD \\
        & epochs        & 90 \\
        & warm-up       & 5 \\
        & batch size    & 256 \\
        & LR            & 0.1 \\
        & LR decay      & 0.1@\{30, 60, 80\} \\
        & momentum      & 0.9 \\
        & Nesterov      & No \\
        & weight decay  & 1.0e-4 \\  \hline
        \multirow{2}{*}{Prune}
        & $\delta$      & 1.0e-16 \\
        & $\alpha$      & 0.90
    \end{tabular}
    \caption{We report the hyperparameters used during training, pruning, and retraining for various convolutional architectures on ImageNet. LR hereby denotes the learning rate and LR decay denotes the learning rate decay that we deploy after a certain number of epochs.}
    \label{tab:imagenet_hyperparameters}
\end{table}

\subsection{Iterative prune+retrain results for CIFAR10}
In Figure~\ref{fig:cifar_prune_results_2} we show the results and comparisons when using iterative prune+retrain as outlined in Section~6. We highlight that \sipp performs en par with WT while \textsc{SNIP} performs significantly worse than WT. 

We also compare the performance of the three variations of our algorithm, see Figure~\ref{fig:cifar_prune_all}. Note that the performance for all of them is very similar, henceforth we choose \sippdet for its simplicity when comparing to other methods for this expensive iterative prune+retrain pipeline.

\subsection{Random-init+prune+train results for CIFAR10}
The results for this pipeline are shown in Figure~\ref{fig:cifar_prune_rand}. We note that \textsc{WT} performs significantly worse in this case whereas \textsc{SNIP} and \sipp clearly outperform \textsc{WT}. Overall, we find that sometimes \sipp can even outperform \textsc{SNIP}. More importantly, however, these experiments highlight the versatile nature of \sipp, i.e., it performs consistently well across multiple prune pipelines hence serving as a reliable and useful plug-and-play solution within a bigger pipeline. We conjecture that this is due to the provable nature of \sipp.

\subsection{Iterative prune+retrain results for ImageNet}
Finally, we show results for a ResNet18 and ResNet101 trained on ImageNet, see Figure~\ref{fig:imagenet_prune}. From the results, we can conclude that \sipp scales well to larger architectures and datasets, such as ImageNet, and can perform en par with existing state-of-the-art methods.

\begin{figure}[H]
  \centering
    \begin{minipage}[t]{0.45\textwidth}
    \includegraphics[width=\textwidth]{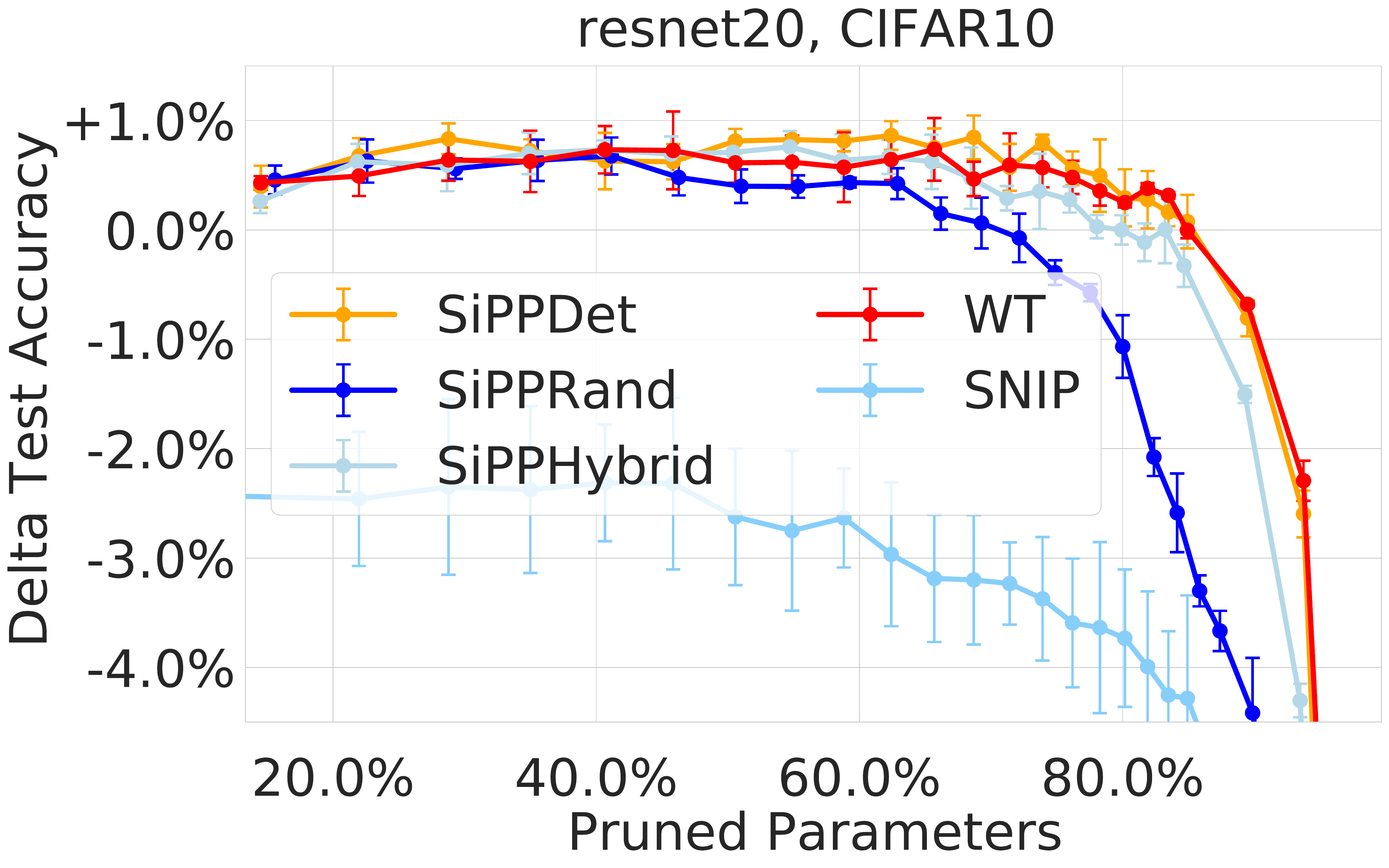}
    \subcaption{Resnet20}
  \end{minipage}%
  \hspace{1ex}
  \begin{minipage}[t]{0.45\textwidth}
    \includegraphics[width=\textwidth]{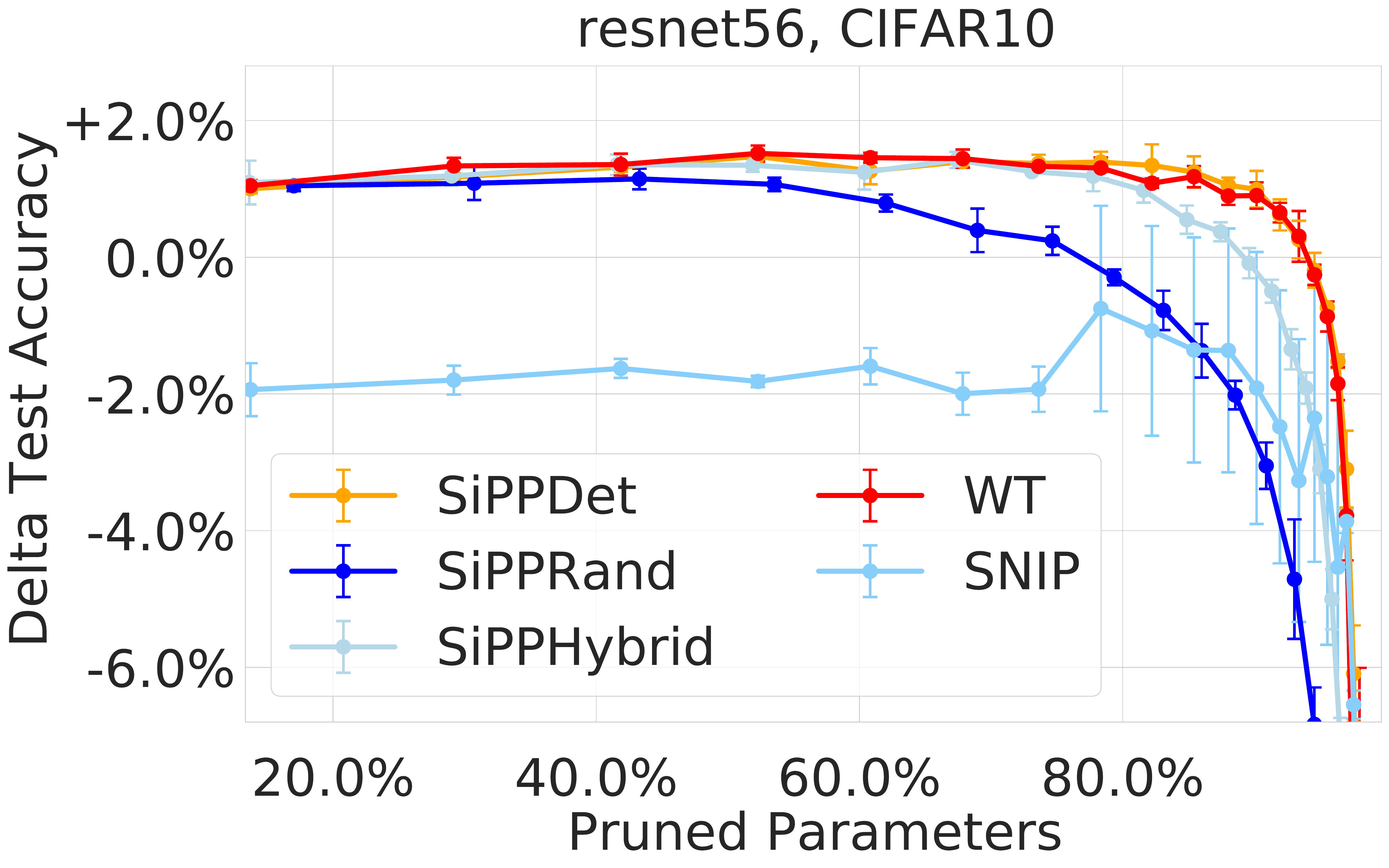}
    \subcaption{Resnet56}
  \end{minipage}%
  \caption{The delta in test accuracy to the uncompressed network for the generated pruned models trained on CIFAR10 for various target prune ratios. The networks were pruned using the \textbf{iterative prune+retrain} pipeline.}
  \label{fig:cifar_prune_all}
\end{figure}

\begin{figure}[htb]
\centering
\begin{minipage}[t]{0.48\textwidth}
    \includegraphics[width=\textwidth]{figures/resnet20_CIFAR10_e182_re182_cascade_int23_CIFAR10_acc_param.pdf}
    \subcaption{Resnet20}
\end{minipage}%
\begin{minipage}[t]{0.48\textwidth}
    \includegraphics[width=\textwidth]{figures/resnet56_CIFAR10_e182_re182_cascade_int20_CIFAR10_acc_param.pdf}
    \subcaption{Resnet56}
\end{minipage}
\begin{minipage}[t]{0.48\textwidth}
    \includegraphics[width=\textwidth]{figures/resnet110_CIFAR10_e182_re182_cascade_int20_CIFAR10_acc_param.pdf}
    \subcaption{Resnet110}
\end{minipage}
\begin{minipage}[t]{0.48\textwidth}
    \includegraphics[width=\textwidth]{figures/vgg16_bn_CIFAR10_e300_re150_cascade_int20_CIFAR10_acc_param.pdf}
    \subcaption{VGG16}
\end{minipage}
\begin{minipage}[t]{0.48\textwidth}
    \includegraphics[width=\textwidth]{figures/densenet22_CIFAR10_e300_re300_cascade_int22_CIFAR10_acc_param.pdf}
    \subcaption{DenseNet22}
\end{minipage}%
\begin{minipage}[t]{0.48\textwidth}
    \includegraphics[width=\textwidth]{figures/wrn16_8_CIFAR10_e200_re200_cascade_int23_CIFAR10_acc_param.pdf}
    \subcaption{WRN16-8}
\end{minipage}%
 
\caption{The delta in test accuracy to the uncompressed network for the generated pruned models trained on CIFAR10 for various target prune ratios. The networks were pruned using the \textbf{iterative prune+retrain} pipeline.}
\label{fig:cifar_prune_results_2}
\end{figure}

\begin{figure}[htb]
\centering
\begin{minipage}[t]{0.48\textwidth}
    \includegraphics[width=\textwidth]{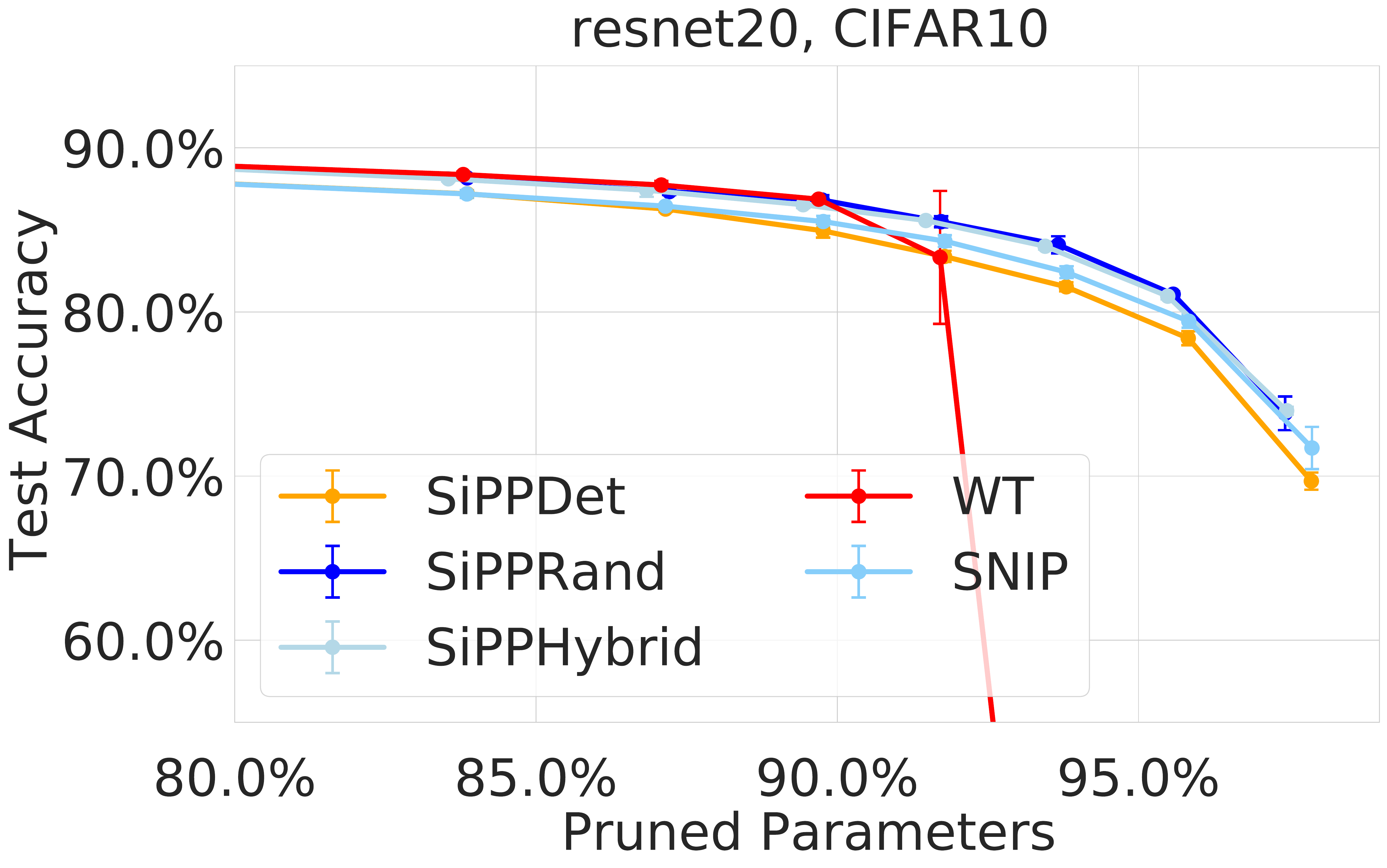}
    \subcaption{Resnet20}
\end{minipage}%
\begin{minipage}[t]{0.48\textwidth}
    \includegraphics[width=\textwidth]{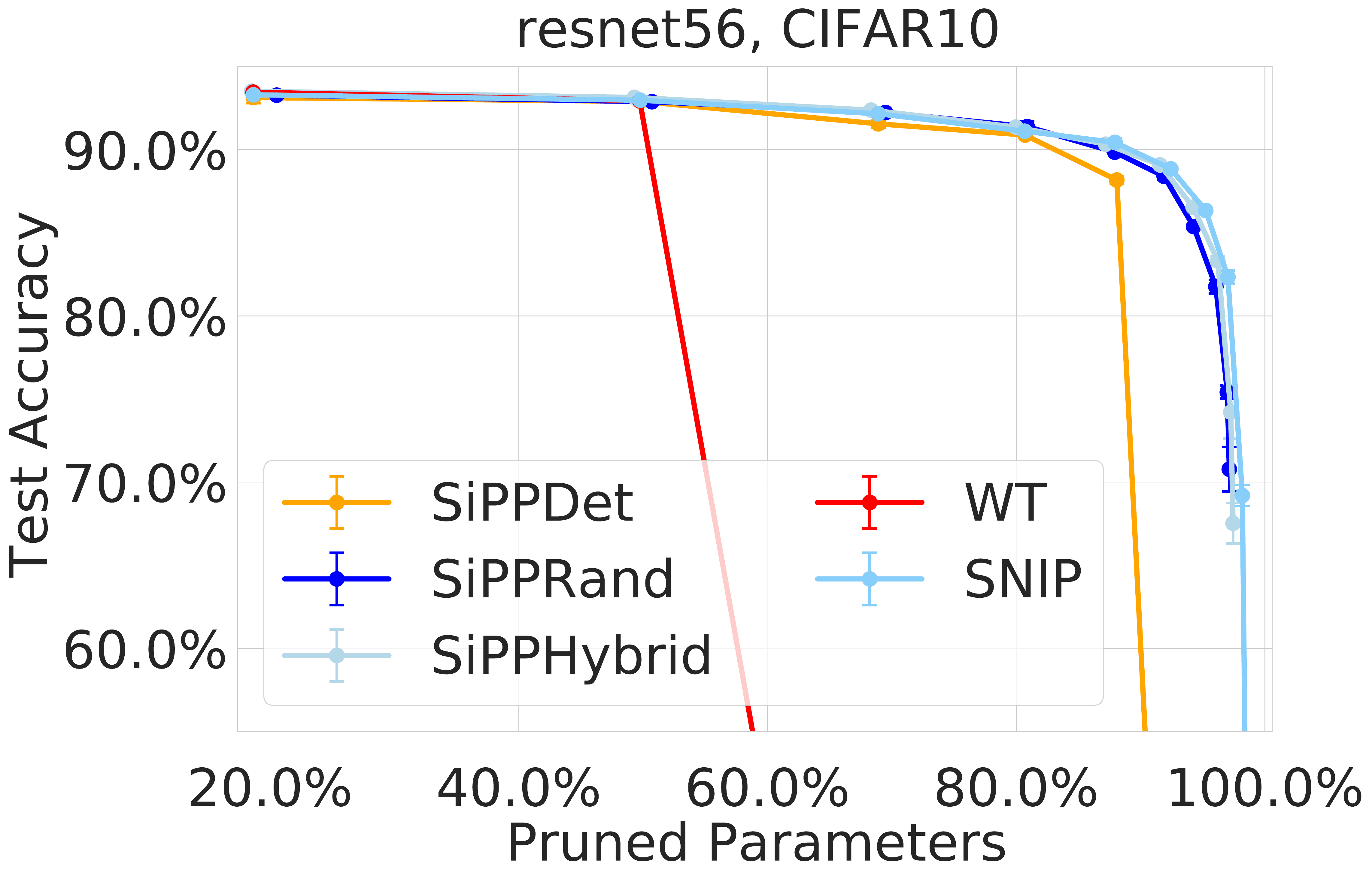}
    \subcaption{Resnet56}
\end{minipage}
\begin{minipage}[t]{0.48\textwidth}
    \includegraphics[width=\textwidth]{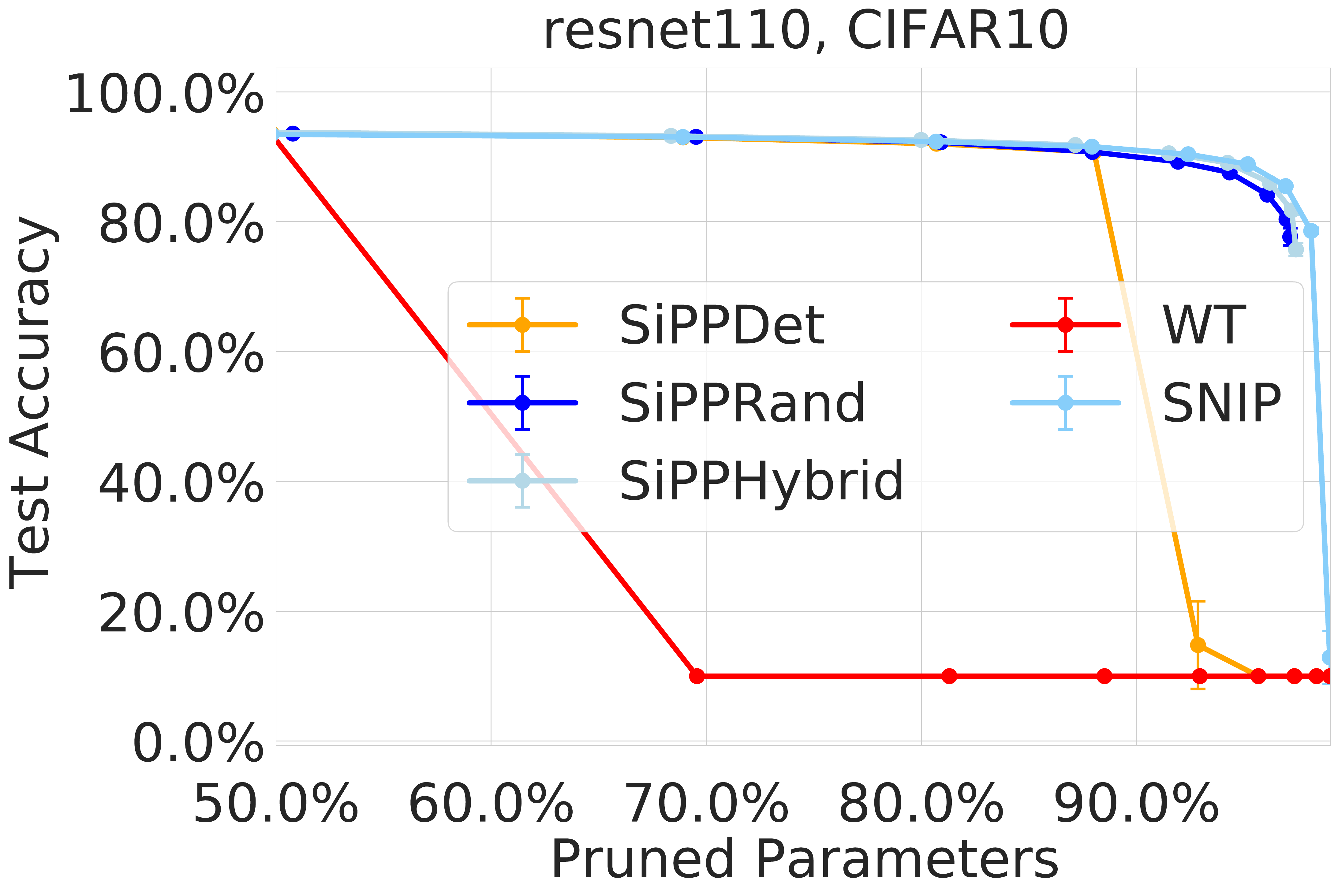}
    \subcaption{Resnet110}
\end{minipage}
\begin{minipage}[t]{0.48\textwidth}
    \includegraphics[width=\textwidth]{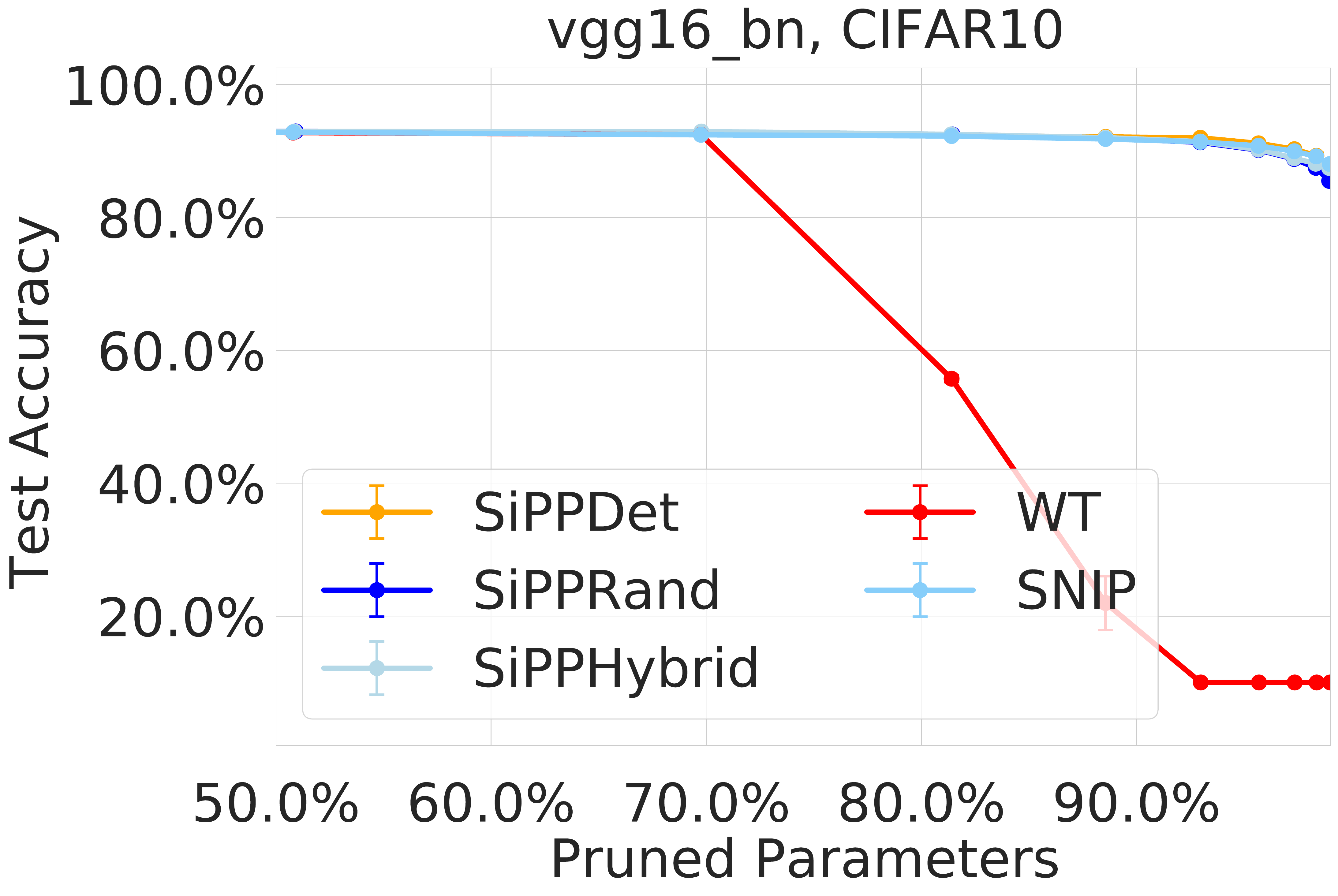}
    \subcaption{VGG16}
\end{minipage}
\begin{minipage}[t]{0.48\textwidth}
    \includegraphics[width=\textwidth]{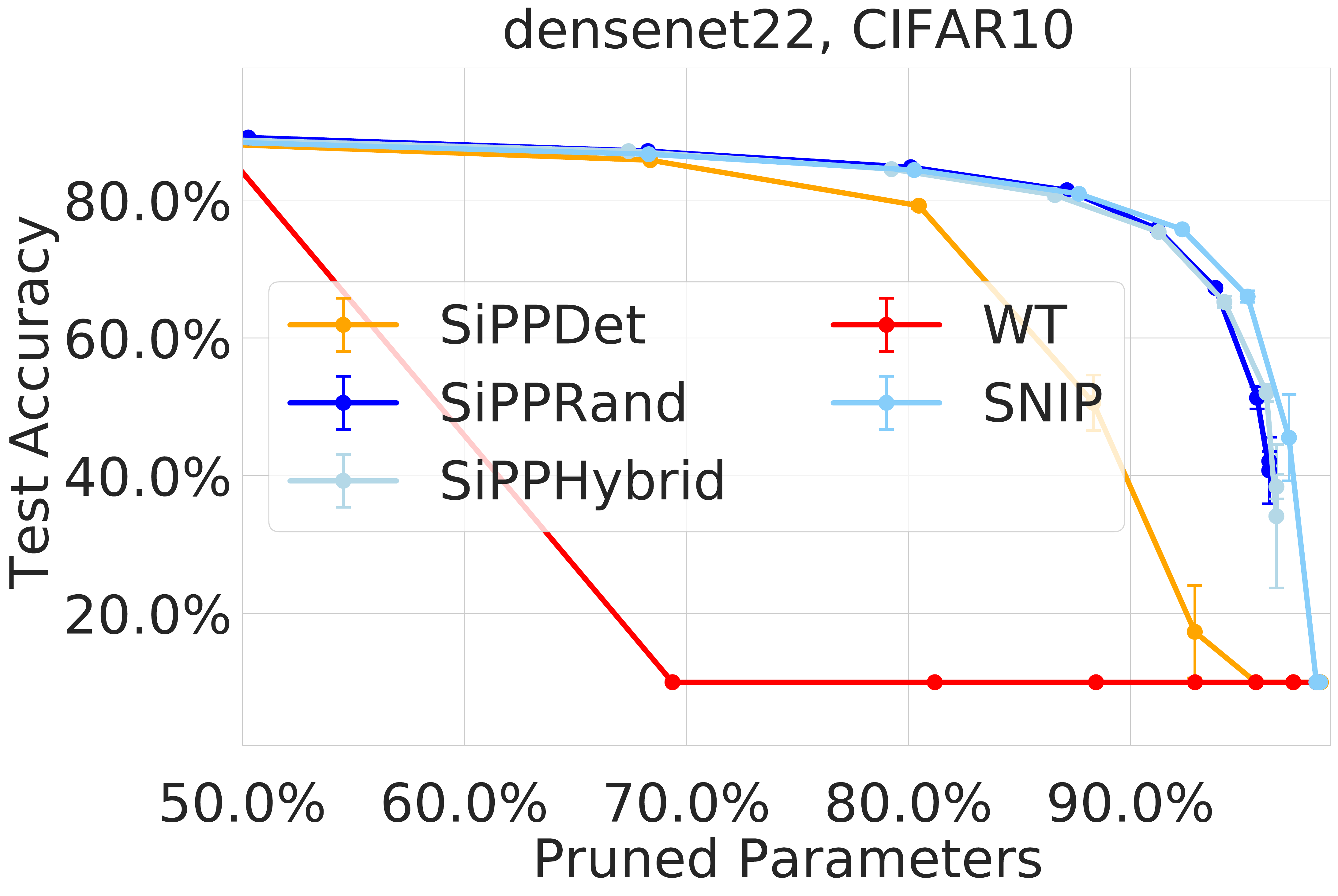}
    \subcaption{DenseNet22}
\end{minipage}%
\begin{minipage}[t]{0.48\textwidth}
    \includegraphics[width=\textwidth]{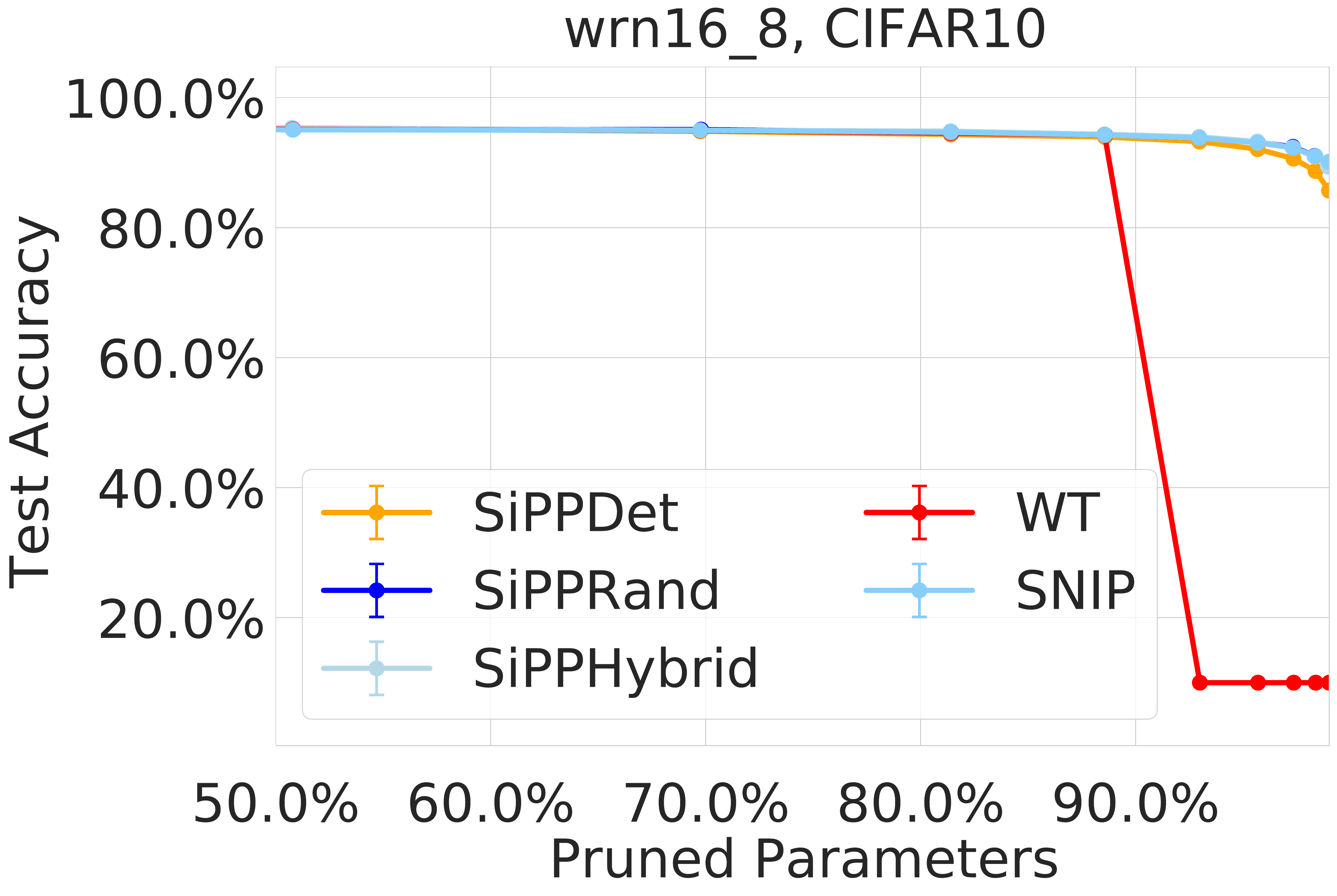}
    \subcaption{WRN16-8}
\end{minipage}%
 
\caption{The delta in test accuracy to the uncompressed network for the generated pruned models trained on CIFAR10 for various target prune ratios. The networks were pruned using the \textbf{random-init+ prune+train} pipeline.}
\label{fig:cifar_prune_rand}
\end{figure}

\begin{figure}[htb]
\centering
\begin{minipage}[t]{0.48\textwidth}
    \includegraphics[width=\textwidth]{figures/resnet18_ImageNet_e90_re90_cascade_int18_ImageNet_acc_param.pdf}
    \subcaption{ResNet18, Top 1}
\end{minipage}%
  \hspace{1ex}
\begin{minipage}[t]{0.48\textwidth}
    \includegraphics[width=\textwidth]{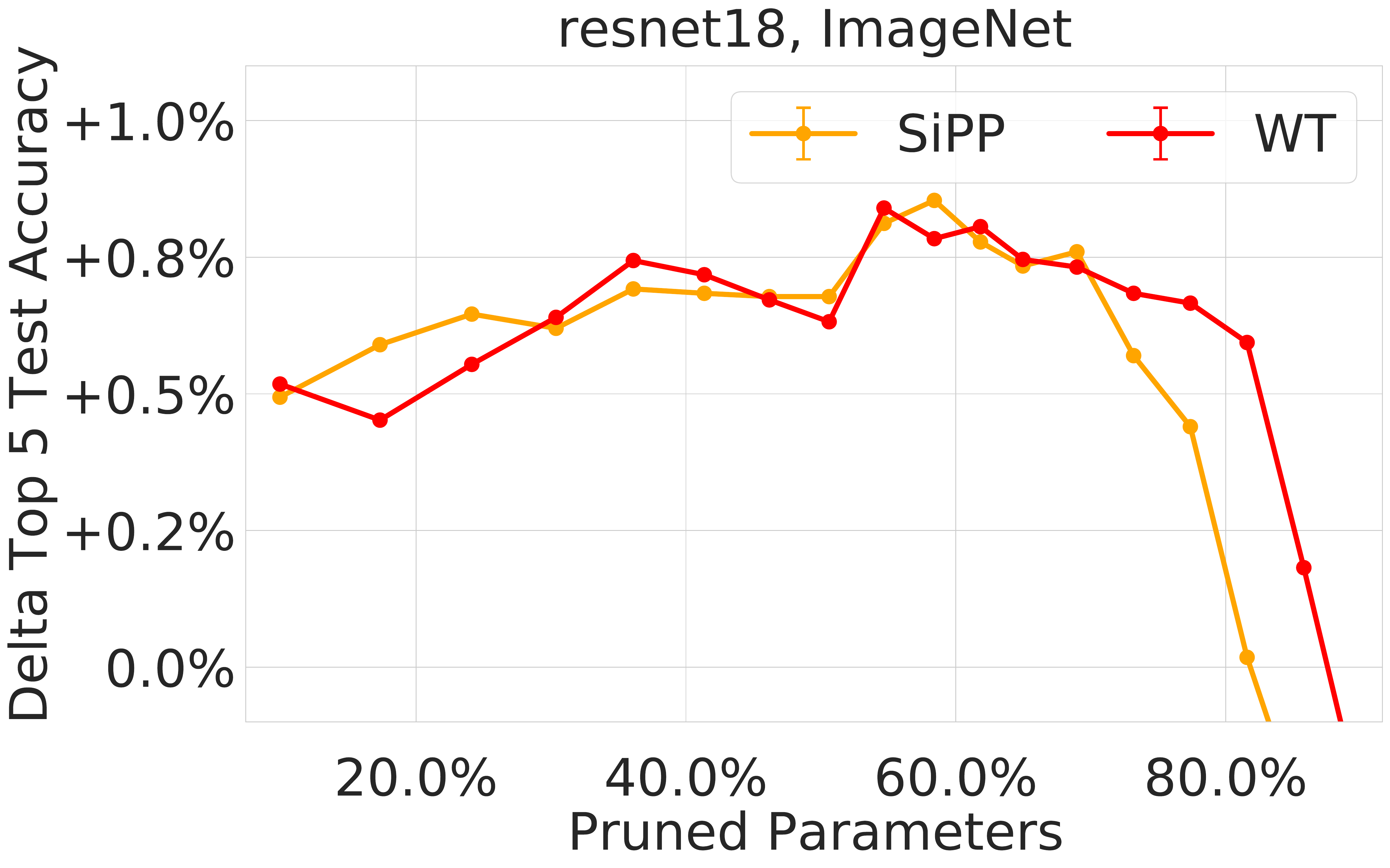}
    \subcaption{ResNet18, Top 5}
\end{minipage}
\begin{minipage}[t]{0.48\textwidth}
    \includegraphics[width=\textwidth]{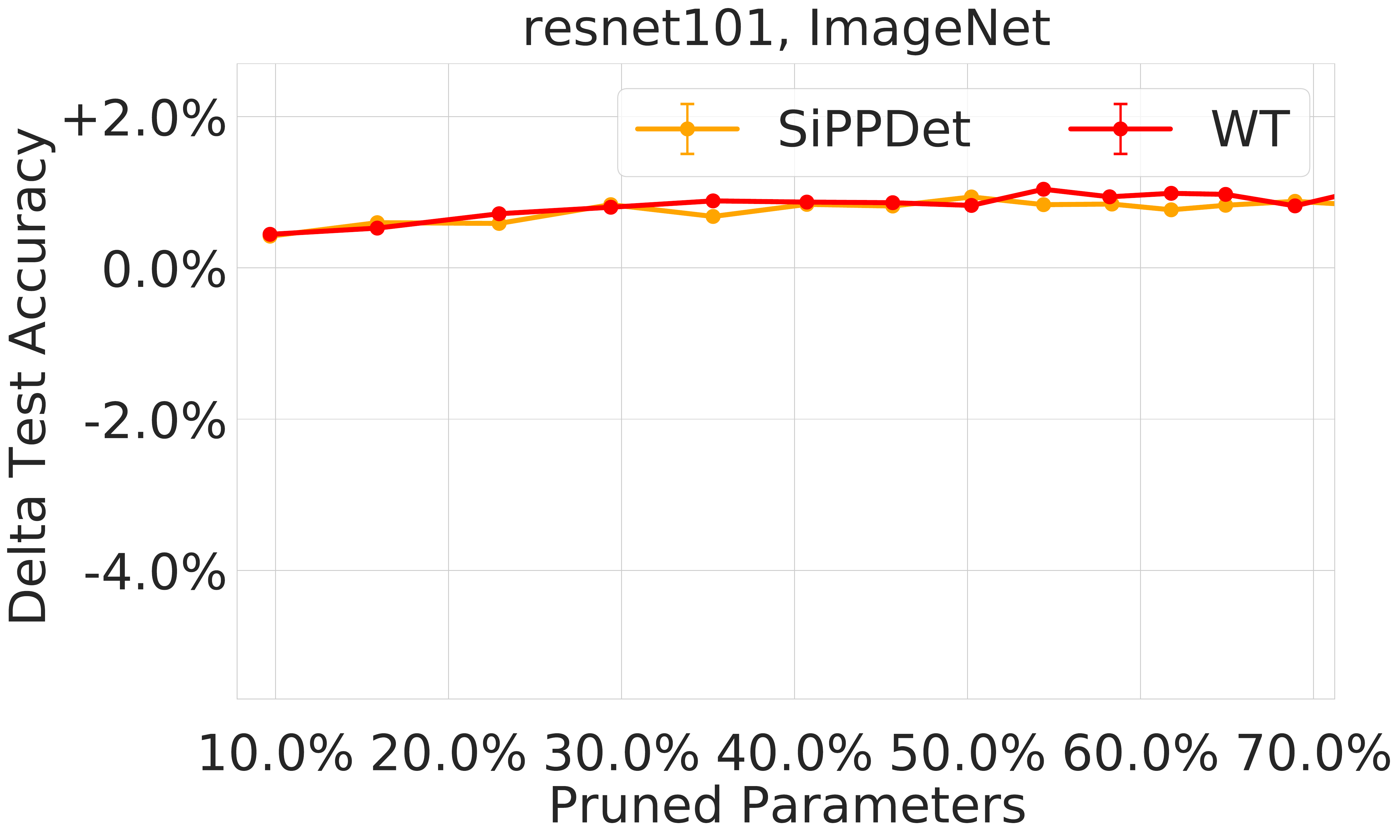}
    \subcaption{ResNet101, Top 1}
\end{minipage}%
\hspace{1ex}
\begin{minipage}[t]{0.48\textwidth}
    \includegraphics[width=\textwidth]{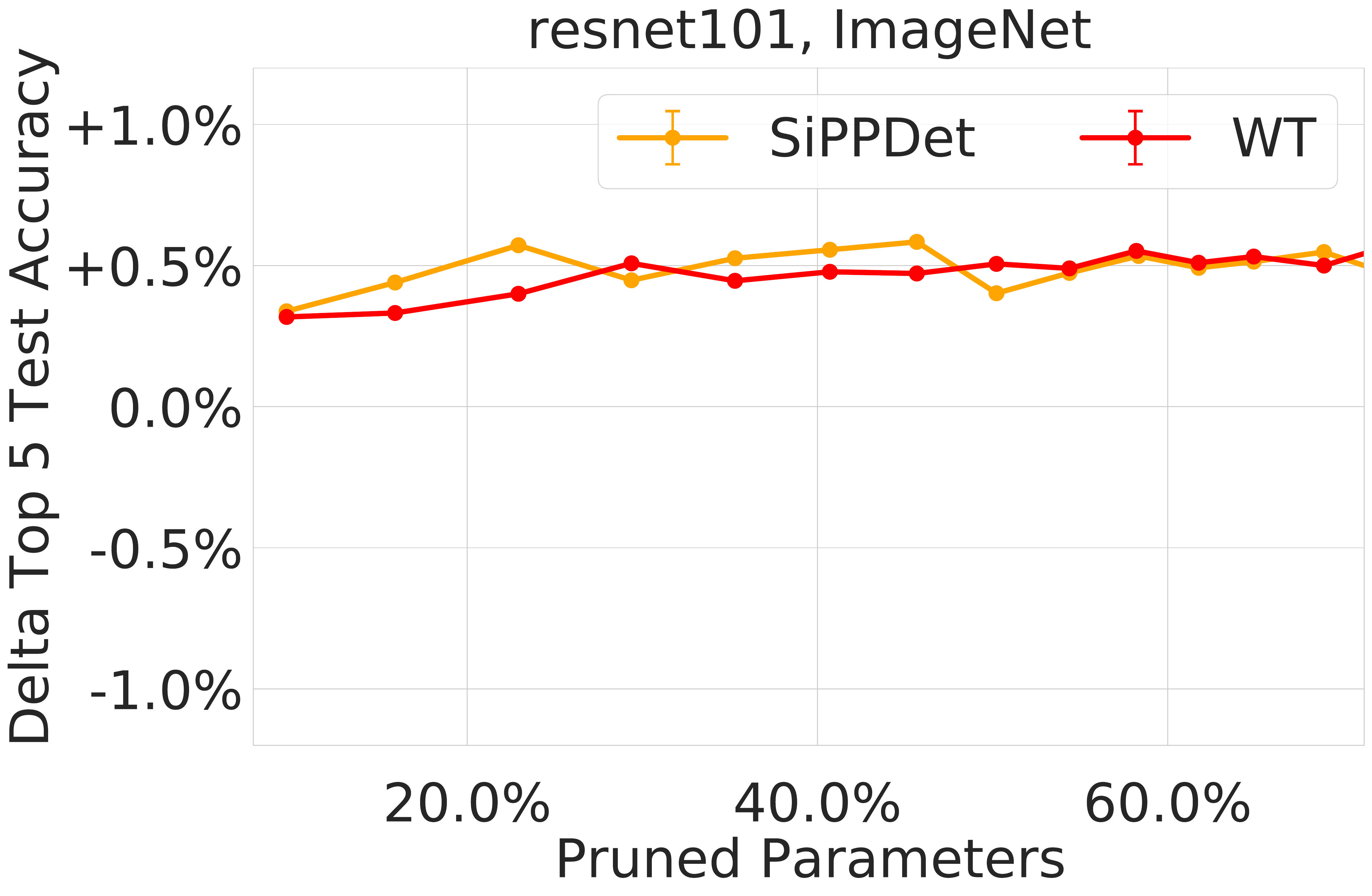}
    \subcaption{ResNet101, Top 5}
\end{minipage}%
\caption{The accuracy of the generated pruned \textbf{ResNet18} and \textbf{ResNet101} models trained on ImageNet for the evaluated pruning schemes for various target prune ratios.}
\label{fig:imagenet_prune}
\end{figure}

\end{document}